\newtheorem{theorem}{Theorem}[section]
\newtheorem{lemma}[theorem]{Lemma}
\newtheorem{assumption}{Assumption}
\newcommand{\bA}{{\boldsymbol A}}
\newcommand{\bB}{{\boldsymbol B}}
\newcommand{\bD}{{\boldsymbol D}}
\newcommand{\bG}{{\boldsymbol G}}
\newcommand{\bK}{{\boldsymbol K}}
\newcommand{\bI}{{\boldsymbol I}}
\newcommand{\bP}{{\boldsymbol P}}
\newcommand{\bQ}{{\boldsymbol Q}}
\newcommand{\bS}{{\boldsymbol S}}
\newcommand{\bU}{{\boldsymbol U}}
\newcommand{\bV}{{\boldsymbol V}}
\newcommand{\bW}{{\boldsymbol W}}
\newcommand{\bX}{{\boldsymbol X}}
\newcommand{\ba}{{\boldsymbol a}}
\newcommand{\bb}{{\boldsymbol b}}
\newcommand{\be}{{\boldsymbol e}}
\newcommand{\bbf}{{\boldsymbol f}}
\newcommand{\bh}{{\boldsymbol h}}
\newcommand{\bv}{{\boldsymbol v}}
\newcommand{\bx}{{\boldsymbol x}}
\newcommand{\by}{{\boldsymbol y}}
\newcommand{\BR}{\mathbb{R}}
\newcommand{\BE}{\mathbb{E}}
\newcommand{\BI}{\mathbb{I}}
\newcommand{\BP}{\mathbb{P}}
\newcommand{\btheta}{{\boldsymbol \theta}}
\newcommand{\bepsilon}{{\boldsymbol \epsilon}}
\newcommand{\bmu}{{\boldsymbol \mu}}
\newcommand{\bpsi}{{\boldsymbol \psi}}
\newcommand{\bLambda}{{\boldsymbol \Lambda}}
\newcommand{\bPsi}{{\boldsymbol \Psi}}
\newcommand{\bDelta}{{\boldsymbol \Delta}}
\newcommand{\bcg}{{\boldsymbol \cg}}
\newcommand{\bZeros}{{\boldsymbol 0}}
\newcommand{\mE}{{\mathcal{E}}}
\newcommand{\mF}{{\mathcal{F}}}
\newcommand{\mG}{{\mathcal{G}}}
\newcommand{\mH}{{\mathcal{H}}}
\newcommand{\mL}{{\mathcal{L}}}
\newcommand{\mN}{{\mathcal{N}}}
\newcommand{\mO}{{\mathcal{O}}}
\newcommand{\mV}{{\mathcal{V}}}
\newcommand{\mX}{{\mathcal{X}}}
\newcommand{\cg}{\textnormal{\textsl{g}}}
\newcommand{\relu}{{\mbox{ReLU}}}
\newcommand{\sigmoid}{{\mbox{sigmoid}}}
\newcommand{\degree}{{\mbox{deg}}}
\title{Graph Neural Thompson Sampling}
\author{Shuang Wu  \\
    shuangwu222@ucla.edu \\
    Department of Statistics and Data Science\\
    University of California, Los Angeles
    \And
    Arash A.~Amini  \\
    aaamini@stat.ucla.edu \\
    Department of Statistics and Data Science\\
    University of California, Los Angeles}
\newcommand\given{\,|\,}
\newcommand\fgnn{f_{\operatorname{GNN}}}
\newcommand\fmlp{f_{\operatorname{MLP}}}
\newcommand\gmlp{\bcg_{\operatorname{MLP}}}
\newcommand\kmlp{k_{\operatorname{MLP}}}
\newcommand\normalize{u}
\DeclareMathOperator{\poly}{poly}
\newcommand\Ec{\mathcal E}
\newcommand\Fc{\mathcal F}
\newcommand\Uc{\mathcal U}
\newcommand\Gc{\mathcal G}
\newcommand\Gb{\bar G}
\newcommand\ind{\mathbb I}
\newcommand\rhomax{\rho_{\max}}
\newcommand\deff{\tilde{d}}
\newcommand\rh{\widehat r}
\newcommand\bKh{\hat \bK}
\renewcommand\bS{\boldsymbol S}
\begin{document}

\maketitle

\begin{abstract}
We consider an online decision-making problem
with a reward function defined over graph-structured data. We formally formulate the problem as an instance of graph action bandit.
We then propose \texttt{GNN-TS}, a Graph Neural Network (GNN) powered Thompson Sampling (TS) algorithm which employs a GNN approximator for estimating the mean reward function and the graph neural tangent features for uncertainty estimation. 
We prove that, under certain boundness assumptions on the reward function, GNN-TS achieves a state-of-the-art regret bound  which is (1) sub-linear of order $\tilde{\mO}((\deff T)^{1/2})$ in the number of interaction rounds, $T$, and a notion of effective dimension $\tilde d$, and (2) independent of the number of graph nodes. Empirical results validate  that our proposed \texttt{GNN-TS} exhibits competitive performance and scales well on graph action bandit problems.
\end{abstract}

\vspace{-2mm}
\section{Introduction}\label{sec: introduction}
\vspace{-1mm}

Thompson Sampling~\citep{thompson1933likelihood} is a widely adopted and effective technique in sequential decision-making problems, known for its ease of implementation and practical success~\citep{chapelle2011empirical, kawale2015efficient, russo2018tutorial, riquelme2018deep}. 
The fundamental concept behind Thompson Sampling (TS)  is to compute the posterior probability of each action being optimal for the present context, followed by the selection of an action from this distribution.
Previous research has extended TS or developed variants of it to incorporate increasingly complex models of the reward function, such as Linear TS~\citep{agrawal2013thompson, abeille2017linear}, Kernelized TS~\citep{chowdhury2017kernelized}, and Neural TS~\citep{zhang2020neural}. 
However, these efforts have mainly focused on conventional data types. In contrast, the application of sequential learning to graph-structured data, such as molecular or biological graph representations, introduces unique challenges that merit further investigation.

Recently, there has been a growing interest in studying bandit optimization over graphs. Several researchers have initiated this line of work by addressing the challenge of encoding graph structures in bandit problems~\citep{gomez2018automatic, jin2018junction, griffiths2020constrained, korovina2020chembo}. More recently, Graph Neural Network (GNN) bandits have been proposed, which leverage expressive GNNs to approximate reward functions on graphs~\citep{kassraie2022graph}. Despite these advancements, the GNN bandits remain relatively unexplored compared to the extensive research on Neural bandits. Firstly, a formal formulation of this sequential graph selection problem is yet to be proposed. 
More importantly, there is a significant lack of comprehensive theoretical and empirical investigations regarding the use of TS in sequential graph selection.

\textbf{Contribution.} In this work, we address the online decision-making problem over graph-structured data by contributing a novel algorithm called \texttt{GNN-TS}. We begin by formulating the sequential graph selection as graph action bandit. We then propose Graph Neural Thompson Sampling, \texttt{GNN-TS}, to incorporate TS exploration with graph neural networks. We establish a regret bound for the proposed algorithm with sub-linear growth of order $\tilde{\mO}((\deff T)^{1/2})$ with respect to the effective dimension $\deff$ and the number of interaction round $T$, and independent of the number of graph nodes. Finally, we corroborate the analysis with an empirical evaluation of the algorithm in simulations. Experiments show that \texttt{GNN-TS} yields competitive performance and scalability, compared to the state-of-the-art baselines, underscoring its practical value in addition to its strong theoretical guarantees.

\textbf{Notations.} 
Let $[n] = \{1,2,...,n\}$. For a  set or event $\mE$, we denote its complement as $\bar{\mE}$. $\bI_n \in \BR^{n \times n}$ is the identity matrix.  For a matrix $\bA$, $\bA_{i*}$ and $\bA_{*j}$ denote its $i$-th row and $j$-th column, respectively. $\lambda_{\max}(\bA)$ and $\lambda_{\min}(\bA)$ represents the maximum and minimum eigenvalues of the matrix $\bA$. For any vector $\bx$ and square matrix $\bA$, $\norm{\bx}_{\bA}=\sqrt{\bx^{\top} \bA \bx}$.  We denote the history of randomness up to (but not including) round $t$ as $\mF_t$ and write $\BP_t(\cdot) := \BP(\,\cdot\, \given \mF_t)$ and $\BE_t(\cdot) := \BE[\,\cdot\, \given \mF_t]$ for the conditional probability and expectation given $\mF_t$. We use $\lesssim$ and big-$O$, to denote ``less than'', up to a constant factor.
We further use $\Tilde{\mO}(\cdot)$ for big-$O$ up to logarithmic factor. 

\vspace{-1mm}
\section{Related Works} \label{sec: related_works}
\vspace{-2mm}

\textbf{Graph Bandit.} 
Multiple works have studied graph bandit problems, which can be classified into two categories: graph as structure across arms and graph as data. 
Most research focuses on the former category, starting from spectral bandit~\citep{kocak2014spectral, kocak2020spectral} to graphical bandit~\citep{liu2018analysis, yu2020graphical, gou2023stochastic, toni2023online}. Within this field, bandit problems with graph feedback have garnered significant attention~\citep{tossou2017thompson, dann2020reinforcement, chen2021understanding, kong2022simultaneously}, where learners observe rewards from selected nodes and their neighborhoods. The primary focus of these works have been improving sample efficiency~\citep{bellemare2019geometric, waradpande2020deep, ide2022targeted}, with some assuming that payoffs are shared according to the graph Laplacian~\citep{esposito2022learning, lee2020closer, lykouris2020feedback, thaker2022maximizing, yang2020laplacian}. 
While the existing literature primarily aims to optimize over geometrical signal domains, our work focuses on optimization within graph domains. Specifically, we investigate the online graph selection problem, aligning with the second category of research that considers the entire graph as input data. 
A related recent work~\citep{kassraie2022graph} proposed a GNN bandit approach with regret bound based on information gain and an elimination-based algorithm. In contrast, our work explores regret bound based on the effective dimension and builds upon the foundation of Thompson Sampling.
This second category of research also encompasses empirical works~\citep{upadhyay2020graph, qi2022neural, qi2023graph}, particularly those centered around molecule optimization~\citep{wang2023graph_a, wang2023graph_b}.

\textbf{Neural Bandit.}
Our work contributes to the research on neural bandits, where deep neural networks are utilized to estimate the reward function. The work of~\cite{zahavy2019deep, xu2020neural} investigated the Neural Linear bandit, while \cite{zhou2020neural} developed Neural Upper Confidence Bound (UCB), an extension of Linear UCB. \cite{zhang2020neural} adapted TS with deep neural networks, proposing Neural TS. \cite{dai2022sample} makes improvements to neural bandit algorithms to overcome practical limitations. \cite{nguyen2021offline} explores neural bandit in an offline contextual bandit setting and ~\citep{gu2024batched} examines batched learning for neural bandit. Our work can be seen as an extension of Neural TS \citep{zhang2020neural}, incorporating significant improvements such as the utilization of graph neural tangent kernel and a distinct definition of effective dimension.



\vspace{-2mm}
\section{Problem Formulation and Methodology}
\label{sec: problem formulation}
\vspace{-2mm}

\subsection{Graph Action Bandit Problem}
\label{subsec: graph action bandit problem}
\vspace{-1mm}

We consider an online decision-making problem in which the learner aims to optimize an unknown reward function by sequentially interacting with a stochastic environment. We identify the actions with graphs from an action space $\mG$ and assume that the size of this action space, denoted as $|\mathcal{G}|$, is finite. At time $t \in [T]$, the learner selects a graph $G_t$ from the action space $\mG_t \subset \mG$. The learner then observes a noisy reward $y_t = \mu(G_t)+\eps_t$ where $\mu: \mG \rightarrow \BR$ is the true (unknown) reward function and $\{\eps_t\}_{t\in[T]}$ are i.i.d zero-mean sub-gaussian noise with variance proxy $\sigma^2_{\eps}$. The goal of the learner is to maximize the expected cumulative reward in $T$ rounds, which equivalently entails minimizing the expected (pseudo-)regret denoted as $R_T = \sum_{t=1}^T \BE[\mu(G^{*}_t) - \mu(G_t)]$ where $G^{*}_t = \argmax_{G\in \mG_t} \mu(G)$ represents the optimal graph at time~$t$.

The graph space $\mG$ is a finite set of undirected graphs with at most $N$ nodes. Note that the graphs with less than $N$ nodes can be treated by adding auxiliary isolated nodes with no features. We denote an undirected attributed graph with $N$ nodes as $G = (\mathbf{X}, \mathbf{A})$, where $\mathbf{X} \in \mathbb{R}^{N \times d}$ represents the feature matrix with $d$ features, and $\mathbf{A} \in \{0, 1\}^{N \times N}$ is the unweighted adjacency matrix. The rows of $\mathbf{X}$ correspond to node features. The size of the node set of a graph $G$ is denoted as $|\mathcal{V}(G)| \leq N$. 

Graph action bandit has several applications such as chemical molecules optimization. Consider the graph structures representing the molecules \citep{weininger1988smiles} and rewards are molecular properties. The goal is to sequentially recommend the optimal molecules for experimental testing.

\subsection{Graph Neural Network Model }
\label{subsec: graph neural network model}
\vspace{-1mm}
We propose to learn the unknown reward function $\mu(\cdot)$ by fitting a Graph Neural Network (GNN). We consider a relatively simple GNN architecture where the output of a single graph convolution layer is normalized (to unit $\ell_2$ norm) and passed through a multilayer perceptron (MLP). A single-layer graph convolution can be compactly stated as $\bA \bX$ using the adjacency matrix $\bA$ of the network. Additionally, we normalize each row of the resulting matrix to have a unit $\ell_2$ norm. Letting $\normalize(\bx) = \bx / \norm{\bx}_2$ denote the normalization operator, the aggregated feature of node $i$ in a graph $G$ is 
$\bh_i^G= \normalize ( (\bA \bX)_{i*} ) = \normalize ( \sum_{j \in \mN_i} \bX_{j*})$
where $\mN_j$ is the neighborhood of node $j$. 
Our GNN also consists of an $L$-layer $m$-width MLP neural network $\fmlp$ which is defined recursively as follows
\begin{equation}
\label{eq: MLP}
\begin{aligned}
    f^{(1)}(\bh_i^G) &= \bW^{(1)} \bh_i^G, \quad i \in [N], \\
    f^{(l)}(\bh_i^G) &= \frac{1}{\sqrt m}
    \bW^{(l)} \relu(f^{(l-1)}(\bh_i^G)), \quad 2\leq l \leq L,\\
    \fmlp(\bh_i^G;\btheta) &= f^{(L)}(\bh_i^G).
\end{aligned}
\end{equation} \vspace{-2mm}

Here, $\relu(\cdot)=\max(\cdot, 0)$,
$\bW^{(1)} \in \BR^{m \times d}$, $\bW^{(L)} \in \BR^{1 \times m}$,  $\bW^{(l)} \in \BR^{m \times m}$ for any $1<l<L$ are weight matrices of the MLP and $\btheta := (\bW^{(1)},\dots, \bW^{(L)}) \in \BR^p$ is the collection of parameters of the neural network where $p=dm+(L-2)m^2+m$. Our GNN model to estimate the reward function is
\vspace{-1mm}
\begin{equation}
\label{eq: GNN}
\fgnn(G; \btheta) := \frac{1}{N} \sum_{i=1}^{N} \fmlp(\bh_i^G;\btheta).
\end{equation}\vspace{-2mm}

The gradient of $\btheta \mapsto \fgnn(G; \btheta)$ denoted as $\bcg(G;\btheta) := \nabla_{\btheta} \fgnn(G; \btheta)$
will play a key role in uncertainty quantification, which will be discussed in Section \ref{subsec: graph neural thompson sampling}. 
The GNN model \eqref{eq: GNN} is trained by minimizing the mean-squared loss with $\ell_2$ penalty, described concretely in \eqref{eq: minimize l2 loss}.  
A hyperparameter $\lambda$ is used to tune the strength of $\ell_2$ regularization.
For the simplicity of exposition, in the theoretical analysis, we solve the optimization via gradient descent with learning rate $\eta$, total number of iterations $J$ and initialize parameters $\btheta_0$ such that $\fgnn(G;\btheta_0)=0$ for all $G \in \mG$, which can be fulfilled based on the work of \cite{zhou2020neural, kassraie2022neural}.




\subsection{Graph Neural Thompson Sampling}
\label{subsec: graph neural thompson sampling}
\vspace{-1mm}


We adapt Thompson Sampling (TS) for graph exploration, due to its robust performance in balancing exploration against exploitation. Algorithm~\ref{alg:GraphNeuralTS} outlines our proposed GNN Thompson sampling, following the idea of \texttt{NeuralTS} in \cite{zhang2020neural}. The key step is the sampling of an estimated reward mean $\rh_t(G)$ for each graph $G$ in the action space at time $t$, from a normal distribution as in equation~\eqref{eq:TS:sampling}. The mean of the normal distribution in~\eqref{eq:TS:sampling} is our current estimate, $\fgnn(G; \btheta_{t-1})$, of the true mean reward for graph $G$ (i.e., $\mu(G)$). This estimate is obtained by fitting the GNN to all the past data as in~\eqref{eq: minimize l2 loss}. The variance of the normal distribution $\nu^2 \sigma_t^2(G)$ is our current measure of uncertainty about the true reward of graph $G$. Note that
\vspace{-1mm}
\begin{equation}
    \sigma^2_{t}(G) = 
   \frac1{m}
   \norm{\bcg(G; \btheta_{t-1})}_{\bU_{t-1}^{-1}}^2 
    \quad \text{where} \quad
    \bU_{t-1} = \lambda \bI_p + \frac1m \sum_{i=1}^{t-1}  
    \bcg (G_{i}; \btheta_{i-1}) \bcg(G_{i}; \btheta_{i-1})^{\top}.
    \label{eq:var:2}
\end{equation} \vspace{-6mm}

The rationale behind $\sigma_t^2(G)$ comes from a linear approximation to $\fgnn(G; \btheta)$.  In particular, the idea is that~\eqref{eq: minimize l2 loss} approximately looks like a linear ridge  regression problem, with features $\{\bcg(G_i; \btheta_i) / \sqrt{m}\}_{i \in [t]}$. The expression~\eqref{eq:var:2} is then the familiar estimated covariance matrix from linear bandits 
after we make this identification. 
This approximation can be made rigorous via the neural tangent kernel idea, as discussed in Section~\ref{sec: regret bound}.


The sampled reward mean $\rh_{t}(G)$ is the index for decision-making. The learner selects the graph with the highest index, i.e., $G_t = \argmax_{G \in \mG} \rh_{t}(G)$.  The randomness in $\rh_t(G)$, due to the positive variance of the sampling distribution, is what allows TS to efficiently explore the action space. We want the uncertainty, as captured by $\sigma_t^2(G)$ not to be too small early on, to allow for effective exploration, but not too large either to miss out on the optimal choice too often. Lemma~\ref{lemma: high probability bound for event for exploration} in Section~\ref{sec: proof for the regret bound} captures the two sides of this trade-off in our theory.


It is worth noting that our proposed Algorithm~\ref{alg:GraphNeuralTS} is not exact TS. In our approach, \eqref{eq:TS:sampling} serves as an \emph{approximation to} a posterior for mean reward function, rather than a true posterior. 
The difference between our proposed method and 
an exact Bayesian method will be smaller if the GNN model is better approximated by a linear model. 

Lastly, we note that $\rh_{t}(G)$ is also referred to as the perturbed mean reward, as it can be expressed as: 
$\rh_{t}(G) = \fgnn (G; \btheta_{t-1}) + \nu \sigma_{t}(G) z$ where $z \sim \mN(0, 1)$.
This perturbed reward includes both the estimated part ($f_{\text{gnn}} (G; \boldsymbol{\theta}_{t-1})$) and the random perturbation part ($\nu \sigma_{t}(G) \cdot z$). The use of perturbations for exploration has been shown to be a strong strategy in previous works \citep{kim2019optimality, kveton2019perturbed_a}. Algorithm~\ref{alg:GraphNeuralTS} can be summarized as greedily selecting the graph with the highest \emph{perturbed} mean reward.

\begin{algorithm}[tb]
\caption{Graph Neural Thompson Sampling (\texttt{GNN-TS})}\label{alg:GraphNeuralTS}
\begin{algorithmic}[1]
   \State {\bfseries Input:} $T$, $\lambda$, $\nu$
   \State Initialization: $\btheta_{0}$, $\bU_{0} = \lambda \bI_p$.
   \For{$t = 1,...,T$}
        \State Compute $\sigma_t^2(G):=
        \frac{1}{m} 
        \norm{\bcg(G; \btheta_{t-1})}_{\bU_{t-1}^{-1}}^2$ 
        and sample
        \vspace{-3mm}
        \begin{equation}
        \label{eq:TS:sampling}
            \rh_t(G) \sim \mN 
            \bigl( \fgnn(G; \btheta_{t-1}), 
            \nu^2 \sigma_t^2(G) \bigr), 
            \quad \text{for all $G \in \mG_t$.}
        \end{equation}
        \vspace{-5mm}
        \State Select graph $G_{t} = \argmax_{G \in \mG_t} 
        \rh_t(G)$, 
        and collect reward $y_{t} := 
        \mu(G_t)+\eps_t$.
        \State Update uncertainty estimate as
        \vspace{-3mm}
        \begin{equation}
        \label{eq: update U}
            \bU_t = \bU_{t-1} + \bcg (G_{t}; \btheta_{t-1}) \bcg(G_{t}; \btheta_{t-1})^{\top}/m.
        \end{equation}
        \vspace{-5mm}
        \State Update the parameter estimate as
        \vspace{-3mm}
        \begin{equation}
        \label{eq: minimize l2 loss}
            \btheta_t = \argmin_{\btheta} \frac{1}{2t}\sum_{i=1}^t \bigl( \fgnn(G_i; \btheta) - y_i \bigr)^2 +  \frac{m\lambda}{2} \norm{\btheta}_2^2.
        \end{equation}
        \vspace{-5mm}
   \EndFor
\end{algorithmic}
\end{algorithm}

\vspace{-1mm}
\section{Regret Bound for \texttt{GNN-TS}}
\label{sec: regret bound}
\vspace{-1mm}

\textbf{Graph Neural Tangent Kernel.}
Let us briefly review the idea of graph neural tangent kernel (GNTK)~\citep{kassraie2022graph} which is based on the neural tangent kernel (NTK) of~\citep{jacot2018neural}. 
The tangent kernel on graph space $\mG$, induced by initialization $\btheta_0$, is defined as the inner product of the gradient at initialization, i.e $\tilde{k}(G, G^{\prime}) := \bcg(G;\btheta_0)^{\top} \bcg(G^{\prime};\btheta_0)$ for any $G, G^{\prime} \in \mG$. The GNTK is the limiting kernel of $\tilde{k}(G, G^{\prime})/m$. We define the finite-width (empirical) and infinite-width GNTK as
\begin{equation}
\label{eq: definition of gntk}
\begin{aligned}
    \hat{k}(G, G^{\prime}) :=  \frac1m \ip{\bcg(G;\btheta_0), \bcg(G';\btheta_0)}, \quad 
    k(G,G') := \lim_{m \to \infty}  \frac1m \ip{\bcg(G;\btheta_0), \bcg(G';\btheta_0)}.
\end{aligned}
\end{equation} \vspace{-4mm}


We assume the reward function falls within the RKHS corresponding to the GNTK $k$ defined in \eqref{eq: definition of gntk}. 
Define $\bK \in \BR^{|\mG|\times |\mG|}$ as the GNTK matrix with entries $k(G, G^{\prime})$ for all $G, G^{\prime} \in \mG$ and $\bmu = (\mu(G))_{G \in \mG} \in \BR^{|\mG|}$ as the reward function vector. 
The kernel matrix $\bK$ is positive definite with maximum eigenvalue $\rho_{\max}:= \lambda_{max}(\bK)$ and minimum eigenvalue $\rho_{min}:= \lambda_{min}(\bK)$. We also define the finite-width GNTK matrix $\bKh \in \reals^{|\Gc| \times |\Gc|}$ with entries $\hat{k}(G, G^{\prime})$ for all $G, G^{\prime} \in \mG$ and maximum eigenvalues $\hat{\rho}_{\max} = \lambda_{\max}(\bKh)$. Note that $\bKh \to \bK$ as $m \to \infty$. 



\textbf{Effective Dimension.} We define the effective dimension $\deff$ of the GNTK matrix $\bK$ with regularization $\lambda$ as
\begin{equation}
\label{eq:def of effective dimension}
    \deff := 
    \frac{\log \det (\bI_{|\mG|} + T \bK / \lambda)}{\log(1 + T \rho_{\max} /\lambda)}.
\end{equation} \vspace{-2mm}

This quantity, which appears in our regret bound, measures the actual underlying dimension of the reward function space as seen by the bandit problem~\citep{valko2013finite, bietti2019inductive}. Our definition is adapted from~\citep{yang2020reinforcement}. The key difference is that our $\deff$ does not directly depend on $|\mG|$, which is replaced by $\rho_{\max}$, compared to the definition in \citep{zhang2020neural}. Our definition is the ratio of the sum over the maximum of the sequence of log-eigenvalues of matrix $\bI_{|\mG|} + T \bK / \lambda$. As such, it is a robust measure of matrix rank. In particular, we always have $\deff \le |\Gc|$. Moreover, previous work on GNN bandit~\citep{kassraie2022graph} utilized the notion of information gain which we replace with the related, but different, notion of effective dimension $\deff$.

We will make the following assumptions:

\begin{assumption}[Bounded RKHS norm for Reward]
\label{assumption: bounded rkhs norm}
The reward function $\mu$ has $R$-bounded RKHS norm with respect to a positive definite kernel $k$: $\norm{\mu}_{k} = \sqrt{\bmu^{\top} \bK^{-1} \bmu} \leq R$. 
\end{assumption}

\begin{assumption} [Bounded Reward Differences]
\label{assumption: bounded reward differences} 
Reward differences between any graph in action space are bounded. 
Formally, $\forall G, G^{\prime} \in \mG$: $|\mu(G) - \mu(G^{\prime})| \leq B$, for some $B \ge 1$.
\end{assumption}

\begin{assumption}[Subgaussian Noise]
\label{assumption: subgaussian noise}
Noise process $\{ \epsilon_t\}_{t \in [T]}$ satisfies $\BE_{t-1}[e^{\eta \epsilon_t}] \leq e^{\sigma_{\epsilon}^2 \eta^2 / 2}, \forall \eta > 0$. 
\end{assumption}

Assumption~\ref{assumption: bounded rkhs norm} aligns with the regularity assumption commonly found in the kernelized and neural bandit literature \citep{srinivas2009gaussian, chowdhury2017kernelized, kassraie2022neural}. Assumption~\ref{assumption: bounded reward differences} implies that instantaneous regret is bounded: $|\mu(G_t^{*}) - \mu(G_t)| \leq B$ for all $t \in [T]$ and Assumption~\ref{assumption: subgaussian noise} is the conditional subgaussian assumption for stochastic process $\{ \epsilon_t\}_{t \in [T]}$. 


We are now ready to state our main result.
Recall that $N$ is the maximum number of (graph) nodes and $L$ the depth of MLP and $m$ its width. 


\begin{theorem}
\label{theorem: regret upper bound for TS}
Suppose
Assumption~\ref{assumption: bounded rkhs norm},\ref{assumption: bounded reward differences} and \ref{assumption: subgaussian noise} hold.
For a fixed horizon $T \in \nats$, let 
\begin{equation}
\begin{aligned}
    m &\ge \poly\bigl(T, L, |\mG|, \lambda^{-1}, R, \sigma_{\eps}, \rho_{\min}^{-1}, \log(T L N |\mG|) \bigr) \notag \\
    \nu &\gtrsim 1 + \sigma_\eps \sqrt{\deff \log T}  + \sqrt \lambda  R, \quad 
    \lambda \gtrsim (\sigma_\eps^2 + R^2)^3 + \rhomax \label{eq:m, nu:lambda}
\end{aligned}
\end{equation} 
and learning rate $\eta \leq (\tilde{C} mL+m\lambda)^{-1}$, for some constant $\tilde{C} $. Then, the regret of Algorithm~\ref{alg:GraphNeuralTS} is bounded as
\begin{align*}
    R_T \le C \, B \sqrt{\deff\, T \log (T|\Gc|) \cdot  \log(2+T \rhomax/\lambda) }
\end{align*} 
for some universal constant $C > 0$. 
\end{theorem}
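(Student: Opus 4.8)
The plan is to reduce the nonlinear \texttt{GNN-TS} dynamics to an approximately \emph{linear} Thompson sampling problem in the finite-width graph neural tangent feature map $\phi_t(G) := \bcg(G;\btheta_{t-1})/\sqrt m$, and then run the standard optimism-plus-uncertainty argument in this feature space. The first ingredient is the neural tangent kernel machinery: for $m$ as large as the stated polynomial, I would show that (i) the gradient features $\bcg(G;\btheta_{t-1})$ stay uniformly (over all $G\in\mG$ and $t\le T$) close to their values $\bcg(G;\btheta_0)$ at initialization, so that the empirical kernel $\hat k$ of~\eqref{eq: definition of gntk} and the design matrix $\bU_{t-1}$ of~\eqref{eq:var:2} are well approximated by their idealized ($\btheta_0$-based, $m\to\infty$) counterparts and $\bKh\approx\bK$; and (ii) the gradient-descent estimate $\fgnn(\cdot;\btheta_{t-1})$ produced by~\eqref{eq: minimize l2 loss} is uniformly close to the kernel ridge predictor associated with $\hat k$. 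This linearization is what forces $m=\poly(T,L,|\mG|,\dots)$ and, as explained below, is the main obstacle.

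With the linearization in hand, the second step is a self-normalized confidence bound. On a high-probability event $\mE^{\mu}$ I would establish $|\fgnn(G;\btheta_{t-1})-\mu(G)|\le \gamma_t\,\sigma_t(G)+\varepsilon_m$ for every $G$ and $t$, where $\varepsilon_m$ is a vanishing NTK-approximation error and the confidence radius obeys $\gamma_t\lesssim \sqrt\lambda\,R+\sigma_\eps\sqrt{\deff\log T}$, i.e.\ precisely the quantities that lower-bound $\nu$ in~\eqref{eq:m, nu:lambda}. Here Assumption~\ref{assumption: bounded rkhs norm} controls the ridge bias through $\norm{\mu}_k\le R$, while Assumption~\ref{assumption: subgaussian noise} feeds an Abbasi-Yadkori-type self-normalized martingale inequality; the $\log\det(\bU_{t-1}/\lambda)$ that appears there is converted into $\deff\,\log(1+T\rhomax/\lambda)$ via the definition~\eqref{eq:def of effective dimension}.

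Third, I would invoke Lemma~\ref{lemma: high probability bound for event for exploration} for the two sides of the exploration trade-off inherent to~\eqref{eq:TS:sampling}: anti-concentration of the sampling Gaussian guarantees that, conditionally on $\mF_t$, the index of the optimal graph is optimistic, $\rh_t(G^*_t)\ge \mu(G^*_t)$, with at least a constant probability $p>0$; and concentration (a union bound over the $|\mG|$ graphs) guarantees $|\rh_t(G)-\fgnn(G;\btheta_{t-1})|\lesssim \nu\,\sigma_t(G)\sqrt{\log(T|\mG|)}$ simultaneously, which is the source of the $\sqrt{\log(T|\mG|)}$ factor. Combining these with the greedy rule $G_t=\argmax_G \rh_t(G)$ through the usual saturated/unsaturated decomposition---comparing $G_t$ to the least-uncertain unsaturated graph $\bar G_t$ and using $\BE_t[\sigma_t(G_t)]\ge p\,\sigma_t(\bar G_t)$---yields, on $\mE^{\mu}$, an instantaneous-regret bound controlled by $\sigma_t(G_t)$ up to the confidence/sampling widths; on the low-probability complement I bound the instantaneous regret crudely by $B$ using Assumption~\ref{assumption: bounded reward differences}.

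Finally I would sum over $t$. The key device is the clipped per-round regret $\mu(G^*_t)-\mu(G_t)\le \min\bigl(B,\; C\,\sigma_t(G_t)\sqrt{\log(T|\mG|)}\bigr)$, together with the identity $\min(1,x)^2=\min(1,x^2)$ and Cauchy--Schwarz, which reduces the cumulative regret to a multiple of $B\sqrt{T\,\sum_{t}\min(1,\sigma_t^2(G_t))\,\log(T|\mG|)}$. The elliptical-potential (log-determinant) bound $\sum_{t=1}^T\min(1,\sigma_t^2(G_t))\lesssim \log\det(\bU_T/\lambda)\lesssim \deff\,\log(2+T\rhomax/\lambda)$ then delivers the claimed $R_T\le C\,B\sqrt{\deff\,T\,\log(T|\mG|)\,\log(2+T\rhomax/\lambda)}$; it is this clipping at level $B$, rather than a naive $\sum_t\gamma_t\sigma_t(G_t)$ Cauchy--Schwarz step, that yields the improved $\sqrt{\deff\,T}$ rate and the node-count-independent prefactor. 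The hard part throughout is the uniform NTK control of the first step: because $\bU_{t-1}$ and $\bcg(G;\btheta_{t-1})$ are evaluated at the running iterate $\btheta_{t-1}$ rather than at $\btheta_0$, one must show the optimization trajectory never leaves a small ball around initialization and that the resulting perturbations to both the predictor and the covariance are negligible uniformly in $t$ and $G$---this is precisely where overparameterization is spent and where the most delicate, width-dependent bookkeeping is required.
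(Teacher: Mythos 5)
Your plan follows the paper's proof essentially step for step: the same NTK linearization requiring polynomial width, the same estimation event $\mE_t^\mu$ built from a self-normalized bound converted to $\deff$ via the log-determinant, the same concentration/anti-concentration pair for the sampling step, the same saturated/unsaturated decomposition around the least-uncertain unsaturated graph, and the same clipping-plus-Cauchy--Schwarz-plus-elliptical-potential summation. The approach is correct and matches the paper's own argument, including the key observation that clipping the per-round regret at $B$ before applying Cauchy--Schwarz is what yields the $\sqrt{\deff\,T}$ rate.
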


The order of regret upper bound in Theorem~\ref{theorem: regret upper bound for TS}, $\tilde{\mO}((\deff T)^{1/2})$ matches the state-of-the-art regret bounds in the literature of Thompson Sampling \citep{agrawal2013thompson, chowdhury2017kernelized, kveton2020randomized, zhang2020neural}. As in \citep{kassraie2022graph}, our regret bound is independent of $N$, indicating that \texttt{GNN-TS} is valid for large graphs. 
Moreover, for low complexity reward functions of effective dimension $\deff = O(1)$, the regret scales as $\sqrt{\log |\Gc|}$ in the size of the action space, showing the robust scalability of \texttt{GNN-TS}.

\vspace{-2mm}
\section{Proof of the Regret Bound}
\label{sec: proof for the regret bound}
\vspace{-2mm}
Similar to the previous literature, the key is to to obtain probabilistic control on the `discrepancy’ of the policy in \texttt{GNN-TS} 
Consider the following events
\vspace{-1mm}
\begin{align}
    \mE_t^{\mu} 
     &:= \Bigl\{
     \bigl|\fgnn(G;\btheta_{t-1}) - \mu(G)\bigr| 
     \;\leq\; c_t^\mu(G),  \; \text{for all}\; G \in \mG_t \Bigr\} \\
     \mE_t^{\sigma} &:= 
     \Bigl\{\bigl|\rh_{t}(G) - \fgnn(G;\btheta_{t-1}) \bigr| \;\leq\; 
     c_t^\sigma(G) 
     ,\; \text{for all}\; G \in \mG_t  \Bigr\} \\
     \mE_t^{a} &:= \Bigl\{
     \rh_{t}(G^{*}_t)-\fgnn(G^{*}_t;\btheta_{t-1}) \;>\; 
     \nu \sigma_t(G^*_t) \Bigr\}
\end{align} \vspace{-2mm}

where  $c_t^\mu(G) := \nu \sigma_t(G) + \eps(t,m)$ and  $c_t^\sigma(G) :=  \nu \sigma_{t}(G) \sqrt{ 2\log (t^2 |\mG_t|)}$ as well as $\eps(t,m) = (C_0 \nu L^{9/2})\,  m^{-1/6}\sqrt{\log m } \cdot t$
and $C_0$ is some universal constant. Events $\mE^{\mu}_t$ and $\mE^{\sigma}_t$ control the discrepancies with constants $c_{t}^{\mu}(G)$ and $c_{t}^{\sigma}(G)$ respectively: $c_{t}^{\mu}(G)$ is bounding the estimation discrepancy while $c_{t}^{\sigma}(G)$ is bounding the exploration discrepancy. 
Note that event $\mE_t^{a}$ is only for $G^{*}_t$, the optimal graph at round $t$.

\vspace{-2mm}
\subsection{Estimation Bound ($\mE^{\mu}_t$)}
\label{subsec: estimation bound}

The following lemma ensures that event $\mE^{\mu}_t$ happens  with high probability. 

\begin{lemma} 
\label{lemma: high probability bound for event mu}
Fix $\delta \in (0,1)$. 
For $m \ge \poly(R, \sigma_{\eps}, L, |\mG|, \lambda^{-1}, \rho_{\min}^{-1}, \log(T L N |\mG| / \delta))$  and 
$(\nu, \lambda, \eta)$ satisfying conditions of Theorem~\ref{theorem: regret upper bound for TS},
we have 
$\BP ( \mE_t^{\mu}) \geq 1-\delta/T$.
\end{lemma}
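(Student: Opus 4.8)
The plan is to control the estimation discrepancy by passing through the linearization of the GNN around its initialization $\btheta_0$, so that the trained predictor $\fgnn(G;\btheta_{t-1})$ can be compared to a kernel ridge regression estimate built from the neural tangent features, for which a self-normalized confidence bound is available. Concretely, I would introduce the auxiliary ridge predictor $\bar f_{t-1}(G) := \ip{\bcg(G;\btheta_0)/\sqrt m,\, \bar\bw_{t-1}}$, where $\bar\bw_{t-1}$ minimizes the $\lambda$-regularized squared loss in the \emph{fixed} tangent features $\{\bcg(G_i;\btheta_0)/\sqrt m\}_{i<t}$. Then I split
\[
  |\fgnn(G;\btheta_{t-1}) - \mu(G)| \;\le\; \underbrace{|\fgnn(G;\btheta_{t-1}) - \bar f_{t-1}(G)|}_{\text{(I): linearization}} \;+\; \underbrace{|\bar f_{t-1}(G) - \mu(G)|}_{\text{(II): confidence}},
\]
and aim to bound (I) by $\eps(t,m)$ and (II) by $\nu\sigma_t(G)$, which together give $c_t^\mu(G)$.

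For term (I), I would invoke the overparametrized-NTK machinery. Because $m$ is polynomially large and $\eta \le (\tilde C mL + m\lambda)^{-1}$, the iterates $\btheta_0,\dots,\btheta_{t-1}$ remain in a shrinking ball around $\btheta_0$; on this ball the network is nearly affine in $\btheta$, its gradient $\bcg(G;\btheta)$ stays close to $\bcg(G;\btheta_0)$ (so the algorithm's moving-feature variance $\sigma_t^2(G)$ is comparable to its $\btheta_0$-feature counterpart, legitimizing the use of $\sigma_t$ as the ridge posterior width), and the empirical tangent matrix $\bKh$ is close to the exact GNTK matrix $\bK$. Combining the near-affineness (Taylor-remainder bounds that produce the $L^{9/2}$ and $m^{-1/6}\sqrt{\log m}$ factors) with the fact that the trained network and $\bar f_{t-1}$ approximately solve the same least-squares problem yields $|\fgnn(G;\btheta_{t-1}) - \bar f_{t-1}(G)| \le (C_0\nu L^{9/2})\,m^{-1/6}\sqrt{\log m}\,t = \eps(t,m)$ for all $G$ and all rounds up to $t$, on a high-probability event over the random initialization.

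For term (II), I would use the standard self-normalized martingale (Abbasi-Yadkori-type) confidence bound for ridge regression in the tangent feature space. With target $\mu$ plus the conditionally sub-Gaussian noise of Assumption~\ref{assumption: subgaussian noise}, and recalling $\sigma_t^2(G) = m^{-1}\norm{\bcg(G;\btheta_{t-1})}^2_{\bU_{t-1}^{-1}}$ is exactly the ridge posterior variance, this gives $|\bar f_{t-1}(G) - \mu(G)| \le \big(\sqrt\lambda\,\norm{\mu}_{\hat k} + \sigma_{\epsilon}\sqrt{\log\det(\bI + T\bKh/\lambda) + 2\log(T/\delta)}\,\big)\,\sigma_t(G)$. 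The log-determinant is controlled by the effective dimension, $\log\det(\bI + T\bKh/\lambda) \le \deff\,\log(1 + T\rhomax/\lambda)$, once $\bKh$ is close to $\bK$ (using $\lambda\gtrsim\rhomax$ and \eqref{eq:def of effective dimension}), and $\norm{\mu}_{\hat k}\le R$ follows from Assumption~\ref{assumption: bounded rkhs norm} after transferring the RKHS norm from $k$ to $\hat k$ for large $m$. Matching these against the prescribed $\nu \gtrsim 1 + \sigma_{\epsilon}\sqrt{\deff\log T} + \sqrt\lambda R$ shows the leading coefficient is at most $\nu$, so (II) $\le \nu\sigma_t(G)$.

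Finally, a union bound over the finite action set $\mG_t$ (its $\log|\mG_t|$ contribution to the confidence radius is absorbed into the $\sqrt{\deff\log T}$ term of $\nu$) together with the initialization event gives $\BP(\mE_t^\mu)\ge 1-\delta/T$, after choosing $m$ large enough to render $\eps(t,m)$ and the kernel-approximation slack negligible. I expect the main obstacle to be term (I): rigorously tracking the gradient-descent trajectory and the Taylor remainders for the graph network uniformly over all rounds and actions, while simultaneously establishing $\bKh\to\bK$ and $\norm{\mu}_{\hat k}\to\norm{\mu}_k$, so that all the polynomial width requirements in the hypothesis line up. The confidence step (II) is comparatively standard once the linearization is in place.
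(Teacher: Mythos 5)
Your proposal follows essentially the same route as the paper's proof: the same decomposition into a linearization/optimization error relative to the ridge estimator in the initialization tangent features, plus a self-normalized confidence bound for that estimator, with the log-determinant controlled by the effective dimension and $\nu$ chosen to dominate $1+\sigma_\eps\sqrt{\deff\log T}+\sqrt\lambda R$. The only bookkeeping difference is that in the paper the finite-iteration gap between the gradient-descent iterate and the exact ridge solution contributes an extra $O(\bar\sigma_t(G))$ term to the linearization part (this is what the ``$+1$'' in $\nu$ absorbs), rather than being fully contained in $\eps(t,m)$ as you assert, but this does not change the argument.
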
 

In other words, given a large enough width of the GNN ($m$) and a small enough learning rate ($\eta$), there is a high probability upper bound for the estimation error $|\fgnn(G;\btheta_{t-1}) - \mu(G)|$. This Lemma~\ref{lemma: high probability bound for event mu} also gives an approximate upper confidence bound similar to \texttt{GNN-UCB} in \citep{kassraie2022graph}: $\mu(G) \leq \fgnn(G;\btheta_{t-1}) + \nu \sigma_t(G) + \eps(t,m).$
Since $\eps(t,m)$ is negligible for large $m$, the approximate upper confidence bound, $\fgnn(G;\btheta_{t-1}) + \nu \sigma_t(G)$ is used as the index for \texttt{GNN-UCB}. 
Note that this lemma controls the estimation error produced by GNNs, hence applicable to all GNN bandit algorithms using model~\eqref{eq: GNN}. Our $c_{t}^{\mu}(G) = \nu \sigma_{t}(G) + \varepsilon(t,m)$ is similar in form to that of~\cite{zhang2020neural} which is different from the earlier analysis of TS in \cite{agrawal2013thompson}. 

\vspace{-1mm}
\subsection{Exploration Bound ($\mE^{\sigma}_t, \mE^{a}_t$)}
\label{subsec: exploration bound}

We also need event $\mE_t^{\sigma}$ to quantify the level of exploration achieved by the algorithm.
Intuitively, $\mE_t^{\sigma}$ ensures our exploration is moderate. 
On the other hand, indicated by the regret analysis in \citep{kveton2019perturbed_b}, instead of controlling the exploration independently, the relation between two sources of explorations needs to be considered because this relation is critical for finding the optimal action. To meet such observation, we define an extra "good" event for anti-concentration on the optimal actions, which is $\mE^{a}_t$. Under event $\mE_t^{a}$, the policy index $\rh_{t}(G^{*}_t)$ of the optimal graph has the higher future positive exploration, which guides the learner to have higher chance to pick the optimal graph. A formal lemma for exploration discrepancy using TS is given as below: 

\begin{lemma}
\label{lemma: high probability bound for event for exploration}
For \texttt{GNN-TS}, for all $t \in [T]$, we have $\pr_t(\bar \mE_t^{\sigma} ) \le t^{-2}$ and $\pr(\Ec_t^a) \ge (4 e\sqrt \pi)^{-1}$.
\end{lemma}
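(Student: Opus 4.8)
The plan is to prove the two bounds separately, since they concern different events but both hinge on the Gaussian structure of the sampling distribution~\eqref{eq:TS:sampling}. Recall from the remark following~\eqref{eq:TS:sampling} that, conditionally on $\mF_t$, we may write $\rh_t(G) = \fgnn(G;\btheta_{t-1}) + \nu\sigma_t(G)\, z_G$ with $z_G \sim \mN(0,1)$. Both $\fgnn(G;\btheta_{t-1})$ and $\sigma_t(G)$ are $\mF_t$-measurable, so conditioning on $\mF_t$ freezes the mean and variance and leaves only the standard Gaussian randomness to control.

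\textbf{Bounding $\pr_t(\bar\Ec_t^\sigma)$.} First I would observe that $\bar\Ec_t^\sigma$ is the event that $|\rh_t(G) - \fgnn(G;\btheta_{t-1})| = \nu\sigma_t(G)|z_G|$ exceeds $c_t^\sigma(G) = \nu\sigma_t(G)\sqrt{2\log(t^2|\mG_t|)}$ for \emph{some} $G \in \mG_t$. Dividing through by $\nu\sigma_t(G)$ (when it is nonzero), this is exactly the event $\{|z_G| > \sqrt{2\log(t^2|\mG_t|)}\}$, whose probability I would bound by the standard Gaussian tail inequality $\pr(|z| > s) \le e^{-s^2/2}$, giving $\le (t^2|\mG_t|)^{-1}$ for each fixed $G$. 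A union bound over the at most $|\mG_t|$ graphs in the action set then yields $\pr_t(\bar\Ec_t^\sigma) \le |\mG_t| \cdot (t^2|\mG_t|)^{-1} = t^{-2}$. The degenerate case $\sigma_t(G)=0$ is trivial since then the discrepancy is identically zero and the event is satisfied.

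\textbf{Bounding $\pr(\Ec_t^a)$.} Here I would use the matching anti-concentration (lower) tail for the single graph $G_t^*$. The event $\Ec_t^a = \{\rh_t(G_t^*) - \fgnn(G_t^*;\btheta_{t-1}) > \nu\sigma_t(G_t^*)\}$ is, after the same rescaling, exactly $\{z_{G_t^*} > 1\}$, provided $\sigma_t(G_t^*) > 0$. The plan is to invoke a one-sided Gaussian anti-concentration bound of the form $\pr(z > s) \ge \frac{1}{4e\sqrt\pi}$ for $s=1$; this is a clean instance of the standard lower bound $\pr(z>s) \ge \frac{1}{\sqrt{2\pi}}\cdot\frac{s}{s^2+1}e^{-s^2/2}$ evaluated at $s=1$ (giving $\frac{1}{2\sqrt{2\pi}\,e} = \frac{1}{\sqrt 2 \cdot 2\sqrt\pi \cdot e}$, which is at least $(4e\sqrt\pi)^{-1}$). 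Since $G_t^*$ is $\mF_t$-measurable, the conditional and unconditional statements agree once we take expectation over $\mF_t$, yielding the claimed unconditional lower bound.

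\textbf{Main obstacle.} The analytical content is light: both parts reduce to one-sided Gaussian tail estimates once conditioning freezes the mean and variance, so the chief points of care are (i) justifying the reduction to a standard normal, which requires checking that $\fgnn(G;\btheta_{t-1})$ and $\sigma_t(G)$ are indeed $\mF_t$-measurable and handling the $\sigma_t(G)=0$ edge case cleanly, and (ii) matching the explicit constant $(4e\sqrt\pi)^{-1}$ to a concrete Gaussian lower tail bound at threshold $s=1$, which is a purely numerical verification. I expect the main subtlety to be getting the constant exactly right and ensuring the anti-concentration inequality is applied at the correct threshold consistent with the definition of $c_t^\sigma$ and the role $\Ec_t^a$ plays downstream in the regret decomposition.
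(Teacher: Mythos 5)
Your proposal is correct and follows essentially the same route as the paper: condition on $\mF_t$ to reduce to a standard normal, apply the Gaussian tail bound plus a union bound over $\mG_t$ for the first assertion, and apply a Gaussian anti-concentration bound at threshold $1$ for the second (the paper's Lemma on Gaussian anti-concentration gives exactly $e^{-\beta^2}/(4\beta\sqrt{\pi})$, which at $\beta=1$ is the stated constant). The only nit is a harmless arithmetic slip in your verification of the lower tail ($e^{-1/2}$, not $e^{-1}$, appears in the bound you quote), which only makes your estimate more conservative and does not affect the conclusion.
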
 

Lemma~\ref{lemma: high probability bound for event for exploration} shows that \texttt{GNN-TS} has a positive probability of moderate exploration of the optimal arm, which is beneficial to regret reduction.

\subsection{Proof of Theorem \ref{theorem: regret upper bound for TS}}
\label{subsec: proof for main theorem}


Let $\Delta_t:=\mu(G_t^{*}) - \mu(G_t)$ be the instantaneous regret. 
We will need two additional lemmas:

\begin{lemma}[One Step Regret Bound]
\label{lemma: one step regret bound}
Assume the same as Theorem \ref{theorem: regret upper bound for TS}. Suppose $\BP_t( \mE_t^{a}) - \BP_t(\bar{\mE}_t^{\sigma}) > 0$. Then for any $t \in [T]$, almost surely, 
\begin{equation}
\begin{aligned}
    \BE_t[\Delta_t \ind_{\Ec_t^\mu}]  
    \leq
    \BI_{\mE_t^\mu} \cdot \Bigl\{ 
     \Bigl( \frac{2}{\BP_t(\Ec_t^a) - \BP_t(\bar \Ec_t^\sigma)} + 1 \Bigr) \BE_t[ \gamma_t(G_t)]
    - \eps(t,m) + B \cdot \BP_t(\bar \Ec_t^\sigma)
     \Bigr\} 
\end{aligned}
\end{equation} 

where $\gamma_t(G) = c_t^\mu(G) + c_t^\sigma(G)$.
\end{lemma}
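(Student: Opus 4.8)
The plan is to bound the conditional expected instantaneous regret $\BE_t[\Delta_t\ind_{\Ec_t^\mu}]$ on the event $\Ec_t^\mu$ by comparing the (unknown) true mean reward to the sampled perturbed rewards $\rh_t$. Throughout, I work conditionally on $\mF_t$ so that $\btheta_{t-1}$, $\sigma_t(\cdot)$, and hence the functions $c_t^\mu, c_t^\sigma, \gamma_t$ are deterministic; the only randomness is the Gaussian draw defining $\rh_t$. The trivial case $\ind_{\Ec_t^\mu}=0$ makes the right-hand side vanish, so I may assume the event $\Ec_t^\mu$ holds and carry the factor $\BI_{\Ec_t^\mu}$ along.

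First I would control the regret contributed by the \emph{selected} graph $G_t$. On $\Ec_t^\mu$ the estimate is close to the truth, so $\mu(G_t^*) \le \rh_t(G_t^*) + c_t^\mu(G_t^*) + (\text{exploration slack})$ and, since $G_t$ maximizes $\rh_t$, we have $\rh_t(G_t^*)\le \rh_t(G_t)$. Reversing the chain on $G_t$ via $\Ec_t^\sigma$ and $\Ec_t^\mu$ then gives $\mu(G_t^*)-\mu(G_t) \lesssim \gamma_t(G_t^*) + \gamma_t(G_t)$ up to the additive $\eps(t,m)$ term, where $\gamma_t(G)=c_t^\mu(G)+c_t^\sigma(G)$. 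The subtlety is the $G_t^*$ term: I want the bound expressed through $\BE_t[\gamma_t(G_t)]$ only, not $\gamma_t(G_t^*)$.

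The key device, following the Thompson-sampling analysis of \citet{kveton2019perturbed_b, zhang2020neural}, is to convert $\gamma_t(G_t^*)$ into $\BE_t[\gamma_t(G_t)]$ using the anti-concentration event $\Ec_t^a$. The idea is that, conditioned on $\Ec_t^a$, the optimal graph's index exceeds its mean by at least $\nu\sigma_t(G_t^*)$, which makes $G_t^*$ a strong competitor; combined with $G_t=\argmax\rh_t$, one shows that the chosen $G_t$ must itself have large uncertainty, so $\sigma_t(G_t^*)$ (and hence $\gamma_t(G_t^*)$) is dominated by $\BE_t[\gamma_t(G_t)]$ up to the factor $1/(\BP_t(\Ec_t^a)-\BP_t(\bar\Ec_t^\sigma))$. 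Concretely, I would define the set of ``under-explored'' graphs and argue that on $\Ec_t^a\cap\Ec_t^\sigma$ the selected $G_t$ lies in a region where $\gamma_t(G_t)\ge \gamma_t(G_t^*)$; integrating the indicator identity $\BE_t[\gamma_t(G_t)]\ge \BE_t[\gamma_t(G_t)\ind_{\Ec_t^a\cap\Ec_t^\sigma}]\ge \gamma_t(G_t^*)\,\bigl(\BP_t(\Ec_t^a)-\BP_t(\bar\Ec_t^\sigma)\bigr)$ yields the stated reciprocal factor. This is where the hypothesis $\BP_t(\Ec_t^a)-\BP_t(\bar\Ec_t^\sigma)>0$ is used to divide.

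Finally I would collect the pieces: the $\gamma_t(G_t)$ contribution from bounding $\Delta_t$ directly, the converted $G_t^*$ contribution $\tfrac{2}{\BP_t(\Ec_t^a)-\BP_t(\bar\Ec_t^\sigma)}\BE_t[\gamma_t(G_t)]$, the negative $-\eps(t,m)$ correction coming from the one-sided approximation in $c_t^\mu$, and a residual term bounded by $B\cdot\BP_t(\bar\Ec_t^\sigma)$ accounting for the low-probability failure of moderate exploration (using Assumption~\ref{assumption: bounded reward differences} so that $\Delta_t\le B$ on the bad event). I expect the main obstacle to be the anti-concentration step: carefully defining the competitor set and verifying the inequality $\gamma_t(G_t)\ge\gamma_t(G_t^*)$ on $\Ec_t^a\cap\Ec_t^\sigma$, since this requires relating the argmax of the perturbed index to the uncertainty ordering and handling the $G_t^*$-versus-$G_t$ exchange without circularity. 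The remaining algebra is routine bookkeeping of the additive constants.
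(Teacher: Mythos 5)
Your overall architecture matches the paper's (decompose on $\Ec_t^\sigma$ versus $\bar\Ec_t^\sigma$ to get the $B\cdot\BP_t(\bar\Ec_t^\sigma)$ term, bound the regret on the good events by confidence widths, and use the anti-concentration event $\Ec_t^a$ to pay for the optimal arm's width with $\BE_t[\gamma_t(G_t)]$ divided by $\BP_t(\Ec_t^a)-\BP_t(\bar\Ec_t^\sigma)$). However, the step you yourself flag as the main obstacle is where the argument genuinely breaks: the inequality $\gamma_t(G_t)\ge\gamma_t(G_t^*)$ on $\Ec_t^a\cap\Ec_t^\sigma$ is \emph{false} in general, so the conversion $\BE_t[\gamma_t(G_t)]\ge\gamma_t(G_t^*)\bigl(\BP_t(\Ec_t^a)-\BP_t(\bar\Ec_t^\sigma)\bigr)$ does not go through. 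The selected arm can easily have much smaller uncertainty than $G_t^*$: a graph $G$ with tiny $\sigma_t(G)$ but with $\mu(G)$ close to $\mu(G_t^*)$ can have $\rh_t(G)>\rh_t(G_t^*)$ even when $\Ec_t^a\cap\Ec_t^\sigma\cap\Ec_t^\mu$ holds, and nothing in those events prevents it from being the argmax. What the good events actually force is only that $G_t$ is \emph{unsaturated}, i.e.\ $\mu(G_t^*)<\mu(G_t)+c_t(G_t)$ --- which is consistent with $\gamma_t(G_t)$ being arbitrarily small provided $\mu(G_t)$ is near-optimal.

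The fix (and the paper's route) is to decompose the regret not through $G_t^*$ but through the \emph{least uncertain unsaturated} action $\Gb_t:=\argmin_{G\in\Uc_t}c_t(G)$ where $\Uc_t=\{G:\mu(G_t^*)<\mu(G)+c_t(G)\}$. One writes $\Delta_t\le c_t(\Gb_t)+\mu(\Gb_t)-\mu(G_t)$ (using $\Gb_t\in\Uc_t$), bounds $\mu(\Gb_t)-\mu(G_t)\le\gamma_t(\Gb_t)+\gamma_t(G_t)$ via $\rh_t(\Gb_t)\le\rh_t(G_t)$ on $\Ec_t^\mu\cap\Ec_t^\sigma$, and then uses the correct minimality inequality $c_t(\Gb_t)\ind\{G_t\in\Uc_t\}\le c_t(G_t)$ together with the inclusion $\Ec_t^a\cap\Ec_t^\mu\cap\Ec_t^\sigma\subset\{G_t\in\Uc_t\}\cap\Ec_t^\sigma$ (any saturated $G$ has $\rh_t(G)\le\mu(G_t^*)-\eps(t,m)<\rh_t(G_t^*)$ on these events, so the argmax must be unsaturated) to obtain $c_t(\Gb_t)\le\BE_t[\gamma_t(G_t)]/\bigl(\BP_t(\Ec_t^a)-\BP_t(\bar\Ec_t^\sigma)\bigr)$. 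Your sketch conflates $G_t^*$ with this minimizer $\Gb_t$; since $G_t^*$ is merely \emph{some} element of $\Uc_t$, not the least uncertain one, the reciprocal-probability bound applies to $c_t(\Gb_t)$ but not to $\gamma_t(G_t^*)$. This substitution also explains the constants in the stated bound (the factor $2$ and the $-\eps(t,m)$ term come from $c_t(\Gb_t)+\gamma_t(\Gb_t)=2c_t(\Gb_t)-\eps(t,m)$), which your decomposition would not reproduce.
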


\begin{lemma}[Cumulative Uncertainty Bound]
\label{lemma: upper bound for sum of sigma}
Assume the same as Theorem \ref{theorem: regret upper bound for TS}. Then with probability at least $1-\delta/T$,
\begin{equation}
    \frac{1}{2} \sum_{t=1}^T \min\{ 1, \sigma_t^2 (G_t)\}
    \leq
    \tilde{d} \log(1+\lambda^{-1} T \rho_{\max})
    + 3 C_{\psi} |\mG|^{3/2} \sqrt{T} \lambda^{-1/2} \eps_m 
\end{equation} 

where $\eps_m = o(1)$ as $m \to \infty$ and $C_{\psi}$ is some constant. 
We always have $\tilde{d} \leq |\mG|$.
\end{lemma}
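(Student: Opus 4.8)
The plan is to reduce the cumulative-uncertainty sum to a single log-determinant, dominate that determinant by the effective-dimension quantity through a selection-matrix argument, and absorb the finite-width discrepancy into the $\eps_m$ term. Write $\bx_t := \bcg(G_t;\btheta_{t-1})/\sqrt m$, so that by~\eqref{eq: update U} we have $\bU_t = \bU_{t-1} + \bx_t\bx_t^\top$ and $\sigma_t^2(G_t) = \bx_t^\top \bU_{t-1}^{-1}\bx_t$. First I would invoke the matrix-determinant identity $\det(\bU_t) = \det(\bU_{t-1})\,(1+\sigma_t^2(G_t))$ and telescope to obtain $\sum_{t=1}^T \log(1+\sigma_t^2(G_t)) = \log\det(\bU_T) - \log\det(\lambda\bI_p) = \log\det\!\big(\bI_p + \lambda^{-1}\sum_{t=1}^T \bx_t\bx_t^\top\big)$. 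Combined with the elementary inequality $\min\{1,x\}\le 2\log(1+x)$ valid for all $x\ge 0$, this yields
\[
\tfrac12\sum_{t=1}^T \min\{1,\sigma_t^2(G_t)\} \;\le\; \log\det\!\Big(\bI_p + \lambda^{-1}\textstyle\sum_t \bx_t\bx_t^\top\Big),
\]
so the entire problem is now to bound this log-determinant.

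The clean combinatorial core comes next. Let $\bS \in \{0,1\}^{T\times|\mG|}$ be the selection matrix whose $t$-th row is the one-hot indicator of $G_t$, and observe that $\bS^\top\bS = \operatorname{diag}(n_G)_{G\in\mG}$ records how many times each graph is chosen, so $\operatorname{tr}(\bS^\top\bS)=T$ and $\bS^\top\bS \preceq T\,\bI_{|\mG|}$. If the gradient features coincided with the infinite-width kernel, i.e.\ $\bx_s^\top\bx_t = k(G_s,G_t)$, then Sylvester's identity twice would give
\[
\log\det(\bI_T + \lambda^{-1}\bS\bK\bS^\top) = \log\det(\bI_{|\mG|} + \lambda^{-1}\bK^{1/2}\,\bS^\top\bS\,\bK^{1/2}) \le \log\det(\bI_{|\mG|} + \lambda^{-1}T\bK),
\]
where the inequality uses $\bK^{1/2}\,\bS^\top\bS\,\bK^{1/2}\preceq T\bK$ together with monotonicity of $\log\det$ on the PSD cone. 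By the definition~\eqref{eq:def of effective dimension}, the right-hand side is exactly $\deff\log(1+\lambda^{-1}T\rhomax)$, the main term; and the same eigenvalue inequality $\sum_i\log(1+T\rho_i/\lambda)\le |\mG|\log(1+T\rhomax/\lambda)$ gives the claimed $\deff\le|\mG|$.

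It then remains to control two approximation steps: passing from the running gradients $\bcg(G_t;\btheta_{t-1})$ to the gradients at initialization $\bcg(G_t;\btheta_0)$ (so that the Gram entries become $\hat k(G_s,G_t)$, i.e.\ the matrix $\bS\bKh\bS^\top$), and then from $\bKh$ to $\bK$. For each I would write the log-determinant difference as $\int_0^1 \lambda^{-1}\operatorname{tr}\big((\bI + \lambda^{-1}\bM(s))^{-1}\bE\big)\,ds$ along the segment $\bM(s)$ joining the two Gram matrices, bound $\|(\bI+\lambda^{-1}\bM(s))^{-1}\|_{\mathrm{op}}\le 1$ since $\bM(s)\succeq 0$, and hence control the difference by $\lambda^{-1}\|\bE\|_*$. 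At the $|\mG|\times|\mG|$ level the nuclear norm costs a factor $\sqrt{|\mG|}$ over Frobenius, the selection multiplicities contribute $\|\bS^\top\bS\|_{\mathrm{op}}\le T$, and the NTK lazy-training estimates supply the entrywise smallness: under the polynomial-width requirement both $\norm{\bcg(G;\btheta_{t-1}) - \bcg(G;\btheta_0)}$ and $|\hat k(G,G')-k(G,G')|$ are $o(1)$ on a high-probability event over the random initialization (which is the source of the $1-\delta/T$). These combine into the stated additive error $3C_{\psi}|\mG|^{3/2}\sqrt T\,\lambda^{-1/2}\eps_m$ with $\eps_m\to 0$, the precise powers being an artifact of this bookkeeping.

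The hard part will be this last step: quantifying the perturbation of the gradient features along the trajectory $\btheta_0,\dots,\btheta_{T-1}$ and of the empirical kernel as $m\to\infty$, and then translating those operator- and entrywise-level NTK bounds into a clean control of the log-determinant without reintroducing any dependence on the number of nodes $N$. By contrast, the determinant/elliptical-potential identity and the selection-matrix domination are routine; essentially all of the genuine difficulty, together with the dependence on the width $m$ and the high-probability event, is concentrated in these NTK approximation estimates.
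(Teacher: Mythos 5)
Your proposal follows essentially the same route as the paper's proof: the elliptical-potential/determinant telescoping with $\min\{1,x\}\le 2\log(1+x)$, domination of the multiplicities by $T$ (your selection matrix $\bS$ with $\bS^\top\bS\preceq T\bI_{|\mG|}$ is exactly the paper's diagonal count matrix $\bD\preceq T\bI$), and a first-order $\log\det$ perturbation bound (trace against the inverse, costing $\sqrt{|\mG|}$ from nuclear-vs-Frobenius) to absorb both the trained-versus-initial gradient discrepancy and the $\bKh$-versus-$\bK$ discrepancy into the $\eps_m$ term, with the NTK lazy-training lemmas supplying the entrywise smallness. The only (cosmetic, arguably cleaner) difference is that you perturb to the initialization gradients \emph{before} grouping repeated graphs, whereas the paper groups first using time-dependent gradients.
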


\begin{proof}[Main Proof]
The expected cumulative regret is 
\begin{equation}
    R_T =
    \sum_{t=1}^T \BE[\Delta_t] = \sum_{t=1}^T \BE[\Delta_t \BI_{\mE^{\mu}_t}]
    + \sum_{t=1}^T \BE[\Delta_t \BI_{\bar{\mE}^{\mu}_t}].
\end{equation}
By Lemma~\ref{lemma: high probability bound for event mu}, letting $\BP ( \bar{\mE}_t^{\mu}) \leq \delta/T$ and $\Delta_t \leq B$, we have the upper bound for the second term
\begin{equation}
    \sum_{t=1}^T \BE[\Delta_t \BI_{\bar{\mE}^{\mu}_t}] \leq BT (\delta/T) = B \delta.
\end{equation} \vspace{-2mm}

Now our focus is controlling the first summation term. By Lemma~\ref{lemma: one step regret bound}, almost surely, we have
\begin{equation}
\begin{aligned}
    \BE_t[\Delta_t \ind_{\Ec_t^\mu}]  
    \leq
    \BI_{\mE_t^\mu} \cdot \Bigl\{ 
     \Bigl( \frac{2}{\BP_t(\Ec_t^a) - \BP_t(\bar \Ec_t^\sigma)} + 1 \Bigr) \BE_t[ \gamma_t(G_t)]
    - \eps(t,m) + B \cdot \BP_t(\bar \Ec_t^\sigma)
     \Bigr\} 
\end{aligned}
\end{equation} 
where $\gamma_t(G) = c_t^\mu(G) + c_t^\sigma(G)$. Assuming that $t \geq 5$, we have $t^2 \geq 5 e \sqrt{\pi}$. By Lemma~\ref{lemma: high probability bound for event for exploration}, $ \BP_t(\Ec_t^a) - \BP(\bar \Ec_t^\sigma) \ge \frac1{4 e \sqrt \pi} - \frac{1}{t^2} \ge \frac1{20 e \sqrt \pi}.$
Then, for $t \ge 5$, dropping $\eps(t,m)$ from the bound,
\begin{equation}
    \ex_t[\Delta_t \ind_{\Ec_t^\mu}]  
    \leq 194 \BE_t[ \gamma_t(G_t)] + B t^{-2}
    \le  \bigl( 194  \BE_t[ \min\{1, \gamma_t(G_t)\}] + t^{-2} \bigr) B
\end{equation}
using $40 e \sqrt \pi + 1 \le 194$, $\Delta_t \le B$ and $B \ge 1$. Therefore, we have 
\begin{equation}
\label{eq:half in main proof}
    \sum_{t=1}^T \BE[\BE_t[\Delta_t \ind_{\Ec_t^\mu}]] \le 194 B \sum_{t=5}^T \BE [\BE_t[\min\{1, \gamma_t(G_t)\}]] + 4B + B (\pi^2/6)
\end{equation}
using $\sum_{t=1}^\infty t^{-2} = \pi^2/6$. Note that 
$\gamma_t(G_t) \le \sigma_t(G_t) \sqrt{8 \log(T^2 |\Gc|)} + \eps(T,m)$
for all $t \in [T]$. Then by Cauchy-Schwarz inequality,
\begin{equation}
    \sum_{t=5}^T \min\{1, \gamma_t(G_t)\}
    \le 
    \sqrt{8 T \log(T^2 |\Gc|)} \Bigl(\sum_{t=5}^T \min\{ 1, \sigma_t^2 (G_t)\} \Bigr)^{1/2} + T \eps(T,m) .
\end{equation}
By Lemma~\ref{lemma: upper bound for sum of sigma} and take $m$ sufficiently large such that $3 C_{\psi} |\mG|^{3/2} \sqrt{T} \lambda^{-1/2} \eps_m  \leq \tilde{d} \log(1+\lambda^{-1} T \rho_{\max})$, we have
\begin{equation}
    \sum_{t=1}^T \BE[ \min\{1, \sigma_t^2(G_t) \}] 
    \leq 
    4 \tilde{d} \log(1 + T \rho_{\max} /\lambda) + T (\delta/T).
\end{equation}
Recall that the $\eps(T,m) = C_1 \, T \, m^{-1/6}\sqrt{\log m }$. Take $m$ large enough we have $T \eps(T, m) \le \sqrt{T}$. Then put the above results back into \eqref{eq:half in main proof}, we have:
\begin{equation}
    \sum_{t=1}^T \BE[\BE_t[\Delta_t \ind_{\Ec_t^\mu}]] 
    \leq
    194 B \bigl(
    \sqrt{16 T \log(T |\Gc|)} \cdot
    \sqrt{4 \tilde{d} \log(1 + T \rho_{\max} /\lambda) + \delta} 
    + \sqrt{T} \bigr)
    + 4B + B (\pi^2/6)
\end{equation}
by using $\log(T^2|\Gc|) \le 2 \log (T|\Gc|)$. Therefore, we have our regret bound:
\begin{equation}
\begin{aligned}
    R_T 
    & \leq
    C B \sqrt{\tilde{d} T \log(T |\mG|) \cdot \bigl( 1+\log(1 + T \rho_{\max} /\lambda) \bigr)}
\end{aligned}
\end{equation}
for some universal constant $C$. We have used 
$\deff \ge 1$ and $B \ge 1$, to simplify the bound. Finally, note that $1+\log(1+x) \le 2\log(2+x)$ for all $x \ge 0$. 
\end{proof}

\begin{figure}[t!]
    \centering
    \includegraphics[width=0.94\textwidth, height = 6.2cm]{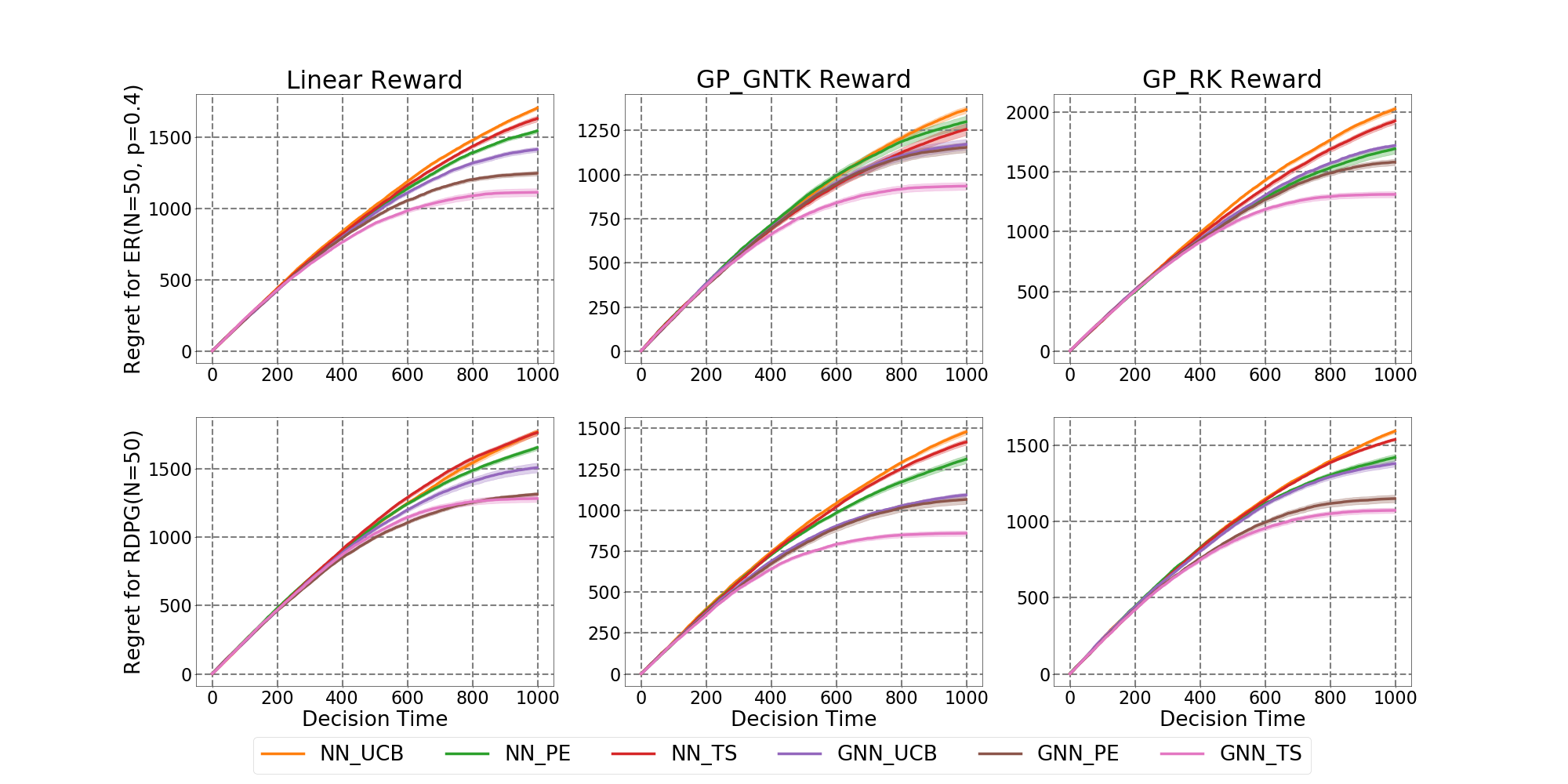}
    \vspace{-1mm}
    \caption{Regret over horizon 
    $T= 1000$ for Erd\"{o}s--R\'{e}nyi random graphs with $p=0.4$ and $N=50$ in the first row and random dot product graphs with $N=50$. Three columns are three types of reward function generation: linear model, Gaussian process with GNTK, Gaussian process with representation kernel. \texttt{GNN-TS} is competitive and robust to different environment settings. } 
    \label{figure: general performance}
\vspace{-1mm}
\end{figure}

\vspace{-1mm}
\section{Experiments}
\label{sec:  experiments}
We create synthetic graph data and generate the rewards through three different mechanisms. For the graph structures, we use random graph models including Erd\"{o}s--R\'{e}nyi and random dot product graph models. The features are generated i.i.d. from the $\mN(0,1)$. The noisy reward is assumed to have $\sigma_{\eps} = 0.01$. Our experiments investigate \texttt{GNN-UCB}, \texttt{GNN-PE}, \texttt{NN-UCB}, \texttt{NN-PE}, and \texttt{NN-TS} as baselines from~\cite{kassraie2022graph}. All performance curves in our empirical studies show an average of over $10$ repetitions with a standard deviation of the corresponding bandit algorithm with horizon $T=1000$. We assume the graph domain is fully observable, $\mG_t = \mG$ for all $t\in[T]$. 
Below is a brief overview of the simulation elements. For more details, see Appendix~\ref{appendix: supplement to experiments}.

\textbf{Random Graph.}  We use two types of random graphs including Erd\"{o}s--R\'{e}nyi (ER) random graphs and random dot product graphs (RDPG).  
ER graphs are generated with edge probability $p$ and number of nodes $N$. RDPGs are generated by modeling the expected edge probabilities as the function of the inner product of features. In the first row of Figure~\ref{figure: general performance}, the
graphs in $\mG$ are from the ER model with $p=0.4$
and in the second row from an RDPG, both of size $N=50$.

\textbf{Reward Function.} To generate the rewards, we use models of three different types: linear model, Gaussian Process (GP) with GNTK, Gaussian process with the representation kernel. 
For the linear model, we have $\mu(G) = \ip{\btheta^{*}, \bar{\bh}^G}$ with true parameter $\btheta^{*} \sim \mathcal N(0, \bI_d)$ and $\bar{\bh}^G = \sum_{i=1}^N \bh^G_i/N$. 
For the GP with GNTK, we fit a GP regression model with empirical GNTK matrix $\bKh \in \BR^{|\mG| \times |\mG|}$ as the covariance matrix of the prior, trained on $\{(G, y_G)\}_{G \in \mG}$ where $\{y_G\}_{G \in \mG}$ are i.i.d. from $\mN(0,1)$.
For the GP with the representation kernel, we trained a GNN for a graph property prediction task and used the mean pooling over all the nodes of the last layer representations as the graph representation, denoted as $\bar\bh^{G}_{\text{rep}}$. 
We then define the representation kernel as $k_{\text{rep}}(G, G') := \ip{\bar\bh^{G}_{\text{rep}}, \bar\bh^{G'}_{\text{rep}}}$ and draw $\mu(\cdot)$ from a zero-mean GP with this covariance function (over $\Gc$).

\textbf{Algorithms.} We investigate two baselines \texttt{GNN-UCB} and \texttt{GNN-PE} along with our proposed \texttt{GNN-TS}. \texttt{GNN-PE} is the proposed state-of-the-art algorithm that selects the graph with the highest uncertainty and eliminates the graph candidates by the upper confidence bounds. All the algorithms in our work use the loss function \eqref{eq: minimize l2 loss} which is different from previous work. All gradients used for our experiments are $\bcg(G;\btheta_{t})$, not $\bcg(G;\btheta_{0})$, unless otherwise specified. 
In addition, in order to show the benefit of considering the graph structure, we include \texttt{NN-UCB}, \texttt{NN-TS}, and \texttt{NN-PE} as our baselines. For these NN-based algorithms, we ignore the adjacency matrix of a graph (setting $\bA = \bI_N$), and pass through the model in \eqref{eq: MLP} and \eqref{eq: GNN} with $\bh^G_i = \bX_{i*}$. 
The MLPs in our experiments have   $L=2$ layers and width $m=512$. We use SGD as the optimizer, with mini-batch size $5$, and train for $30$ epochs. For the tuning of the hyperparameters $(\eta,\lambda)$ and other algorithmic setup, see Appendix~\ref{appendix: supplement to experiments}.
The matrix inversion in the algorithms is approximated by diagonal inversion across all policy algorithms.

\textbf{Regret Experiments.} 
In Figure~\ref{figure: general performance}, we show the performance of all the algorithms for the six possible environments: ER or RDPG model coupled with either of the three reward models. We set the size of the graph domain to $|\mG|=100$ in Figure~\ref{figure: general performance} and we experiment across different $|\mG|$ in Appendix~\ref{appendix: supplement to experiments}. 
Figure~\ref{figure: general performance} demonstrates that \texttt{GNN-TS} consistently outperforms the baseline algorithms and is robust to all types of random graph models and reward function generations in our experiment. In addition, GNN-based algorithms are clearly better than NN-based algorithms in graph action bandit settings. 



\bibliography{reference}

\newpage


\appendix

\newcommand\Ac{\mathcal A}
\newcommand\xib{\bar \xi}
\newcommand\Ub{\bar U}
\newcommand\Vb{\bar V}
\newcommand\Wb{\bar W}
\newcommand\psib{\bar \psi}
\newcommand\Psib{\bar \Psi}
\newcommand\rhobmax{\bar \rho_{\max}}
\newcommand\Ab{\bar A}
\newcommand\bAb{\boldsymbol{\bar A}}
\newcommand\ab{\bar a}
\newcommand\tht{\widetilde{\btheta}}

\section{Proof for Lemmas in Regret Analysis}
\label{appendix: proof for lemmas in regret analysis}

\subsection{Notations}
\label{appendix: notations}
In the following parts, we further define the some notations to represent the linear and kernelized models:
\begin{equation}
\begin{aligned}
    \bG_t &= [\bcg(G_1;\btheta_0),...,\bcg(G_t;\btheta_{t-1}))] \in \BR^{p \times t} \\
    \bar{\bG}_t &=[\bcg(G_1;\btheta_0),...,\bcg(G_t;\btheta_0)] \in \BR^{p \times t} \\
    \bmu_{t} &= [\mu(G_1),...,\mu(G_t)]^{\top} \in \BR^{t \times 1} \\
    \by_t &= [y_1, ..., y_t]^{\top} \in \BR^{t \times 1}\\
    \bepsilon_{t} &= [\eps_1,...,\eps_t]^{\top} \in \BR^{t \times 1}.
\end{aligned}
\end{equation}
Then we define the uncertainty estimate with initial gradient $\btheta_0$:
\begin{equation}
    \bar{\sigma}_{t}^2(G) 
    = \frac{1}{m} \norm{\bcg(G;\btheta_0)}_{\bar{\bU}_{t-1}^{-1}}^2
    \quad \text{and} \quad 
    \bar{\bU}_t 
    = \lambda \bI_{p} + \sum_{i=1}^t \bcg(G_i;\btheta_0) \bcg(G_i;\btheta_0)^{\top}/m .
\end{equation}

\subsection{Proof of Lemma~\ref{lemma: high probability bound for event mu}}
\label{appendix: proof of lemma high probability bound for event mu}




 Let us write 
\[
\tht_{t-1} := \bar{\bU}_{t-1}^{-1} \bar{\bG}_{t-1} \by_{t-1}/m
\] 
for the ridge regression solution. We will need the following auxiliary lemmas:


\begin{lemma} [Taylor Approximation of a GNN]
\label{lemma: taylor approximation of gnn}
Suppose learning rate $\eta \leq (\tilde{C} mL+m\lambda)^{-1}$ for some constant $\tilde{C}$, then for any fixed $t \in [T]$ and $G \in \mG$, with probability at least $1-\delta$
\begin{equation}
\begin{aligned}
    |\fgnn(G; \btheta_t^{(J)}) - \fgnn(G; \btheta_0) - \langle \bcg(G; \btheta_0), \btheta_t^{(J)} - \btheta_0 \rangle | 
    \leq C L^3 \Bigl(\frac{R^2 + \sigma_{\eps}^2}{m \lambda}\Bigr)^{2/3} \sqrt{m \log(m)}
\end{aligned}
\end{equation}
where $C$ is some constant independent of $m$ and $t$.
\end{lemma}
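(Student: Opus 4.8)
The plan is to reduce this GNN linearization bound to the standard neural tangent kernel (NTK) linearization estimate for plain MLPs, and then to pin down the radius of the region around $\btheta_0$ in which the gradient descent iterate $\btheta_t^{(J)}$ lives. First I would exploit the architecture \eqref{eq: GNN}: since $\fgnn(G;\btheta) = \frac1N\sum_{i=1}^N \fmlp(\bh_i^G;\btheta)$ and, by linearity of the gradient, $\bcg(G;\btheta_0) = \frac1N\sum_{i=1}^N \nabla_\btheta \fmlp(\bh_i^G;\btheta_0)$, the triangle inequality bounds the GNN Taylor residual by the average over nodes of the MLP Taylor residuals $|\fmlp(\bh_i^G;\btheta_t^{(J)}) - \fmlp(\bh_i^G;\btheta_0) - \langle \nabla_\btheta\fmlp(\bh_i^G;\btheta_0),\, \btheta_t^{(J)}-\btheta_0\rangle|$. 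Crucially, each aggregated feature $\bh_i^G = \normalize((\bA\bX)_{i*})$ has unit $\ell_2$ norm by construction, so these MLP residuals fall under the off-the-shelf over-parameterization estimates, which are stated precisely for unit-norm inputs. It therefore suffices to prove the MLP version of the claim with $\bh$ ranging over unit vectors.

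The second ingredient is a radius bound $\|\btheta_t^{(J)} - \btheta_0\|_2 \le \omega$ with $\omega := C_1\sqrt{(R^2+\sigma_\eps^2)/(m\lambda)}$, which I would obtain in two moves. With the stated learning rate $\eta \le (\tilde C mL + m\lambda)^{-1}$, the regularized objective \eqref{eq: minimize l2 loss} is $(\tilde C mL + m\lambda)$-smooth on the relevant ball, so gradient descent is monotone and $\mL(\btheta_t^{(J)}) \le \mL(\btheta_0)$; since $\fgnn(\cdot;\btheta_0)=0$, the initial value $\mL(\btheta_0) = \frac1{2t}\sum_i y_i^2 + \frac{m\lambda}2\|\btheta_0\|^2$ together with the ridge term controls the magnitude of the iterate. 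To convert this into control of the \emph{displacement} $\btheta_t^{(J)}-\btheta_0$ at the correct scale, I would compare it to the closed-form linearized ridge solution $\tht_{t-1} = \bar\bU_{t-1}^{-1}\bar\bG_{t-1}\by_{t-1}/m$, whose norm obeys $\|\tht_{t-1}\|_2 \lesssim \sqrt{(R^2+\sigma_\eps^2)/(m\lambda)}$ once $\|\by_{t-1}\|$ is bounded through Assumptions~\ref{assumption: bounded rkhs norm}--\ref{assumption: subgaussian noise}. The near-convexity of $\mL$ in the NTK regime (valid once $m$ is the prescribed polynomial) then guarantees that the GD output cannot sit much farther from $\btheta_0$ than $\tht_{t-1}$ does, yielding the radius $\omega$.

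With the radius in hand, I would invoke the over-parameterization linearization estimate for deep ReLU MLPs used in the neural-bandit analyses of \citep{zhou2020neural, kassraie2022neural}: for every unit vector $\bh$ and every $\btheta$ with $\|\btheta-\btheta_0\|_2\le\omega$, provided $m$ is sufficiently large, with probability at least $1-\delta$ over the initialization $\btheta_0$ one has $|\fmlp(\bh;\btheta) - \fmlp(\bh;\btheta_0) - \langle\nabla_\btheta\fmlp(\bh;\btheta_0),\btheta-\btheta_0\rangle| \le C L^3 \omega^{4/3}\sqrt{m\log m}$. Substituting $\omega = C_1\sqrt{(R^2+\sigma_\eps^2)/(m\lambda)}$ gives $\omega^{4/3} = C_1^{4/3}\bigl((R^2+\sigma_\eps^2)/(m\lambda)\bigr)^{2/3}$, which is exactly the $2/3$ power in the claim; the averaging over the $N$ nodes preserves the bound, and absorbing constants into $C$ completes the argument.

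The main obstacle is the radius-control step: establishing $\|\btheta_t^{(J)}-\btheta_0\|_2 \lesssim \sqrt{(R^2+\sigma_\eps^2)/(m\lambda)}$ rigorously requires the full over-parameterization machinery — near-convexity of the loss along the trajectory, the fact that the iterates never escape the NTK ball, and that the empirical feature map $\bcg(\cdot;\btheta_j)$ stays close to $\bcg(\cdot;\btheta_0)$ throughout — and it is precisely here that the polynomial lower bound on $m$ and the learning-rate restriction are consumed. By contrast, the linearization inequality itself and the node-averaging reduction are comparatively routine once the iterate has been confined to the correct ball around $\btheta_0$.
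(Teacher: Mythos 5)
Your proposal is correct and follows essentially the same route as the paper: reduce the GNN residual to an average of per-node MLP residuals via the triangle inequality, invoke the off-the-shelf over-parameterized MLP linearization bound (Lemma~\ref{lemma: bound for gradient norm}) on the radius-$\tau$ ball, and set $\tau = \tilde{C}\sqrt{(R^2+\sigma_\eps^2)/(m\lambda)}$ by confining $\btheta_t^{(J)}$ near $\btheta_0$ through comparison with the linearized ridge problem. The only cosmetic difference is that the paper controls the radius by running a parallel gradient-descent sequence on the linearized objective and bounding the iterate-wise gap (Lemmas~\ref{lemma: parameter bound for primary optimization } and~\ref{lemma: parameter bound for proximal optimization }) rather than comparing directly to the closed-form ridge solution, which is the same idea you flag as the main obstacle.
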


\begin{lemma}
\label{lemma: bound for gradient error inner product}
Suppose $m \ge poly(R, \sigma_{\eps}, L, \lambda^{-1}, |\mG|, \rho_{min}^{-1}, \log(L N |\mG|/\delta))$ given a fixed $\delta \in (0,1)$ and learning rate $\eta \leq (\tilde{C} mL+m\lambda)^{-1}$ for some constant $\tilde{C}$. For $G \in \mG_t$ and $t > 1$, with probability at lease $1-\delta$, 
\begin{equation}
\begin{aligned}
    |\langle \bcg(G;\btheta_0), \btheta_{t-1}-\btheta_0 
    -\tht_{t-1}
    \rangle|
    \leq 
    C
    \bar{\sigma}_{t}(G)
\end{aligned}
\end{equation}
where $C = (C_1 (2-\eta m \lambda)^J + C_2 ) \sqrt{\frac{\sigma_{\eps}^2 + R^2}{\lambda } (1+\frac{3 \rho_{\max} }{2 \lambda})} $ with $C_1 = \mO(1)$ and $C_2 = \mO(\lambda^{1/3})$.
\end{lemma}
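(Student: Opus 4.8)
The plan is to reduce the claim to a bound on $\norm{\btheta_{t-1}-\btheta_0-\tht_{t-1}}_{\bar{\bU}_{t-1}}$ and then project onto $\bcg(G;\btheta_0)$. Writing $\bv:=\btheta_{t-1}-\btheta_0-\tht_{t-1}$ and using Cauchy--Schwarz in the inner product induced by $\bar{\bU}_{t-1}$,
\[
|\langle \bcg(G;\btheta_0),\bv\rangle|\le \norm{\bcg(G;\btheta_0)}_{\bar{\bU}_{t-1}^{-1}}\,\norm{\bv}_{\bar{\bU}_{t-1}}=\sqrt m\,\bar{\sigma}_t(G)\,\norm{\bv}_{\bar{\bU}_{t-1}},
\]
since $\bar{\sigma}_t(G)=\tfrac1{\sqrt m}\norm{\bcg(G;\btheta_0)}_{\bar{\bU}_{t-1}^{-1}}$. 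Hence it suffices to prove $\norm{\bv}_{\bar{\bU}_{t-1}}\le C/\sqrt m$ with $C$ of the stated form, which is where all the effort goes.

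First I would recognize $\btheta_0+\tht_{t-1}$ as the exact minimizer of the \emph{linearized} ridge objective obtained from \eqref{eq: minimize l2 loss} by replacing $\fgnn(G_i;\btheta)$ with its first-order expansion $\langle\bcg(G_i;\btheta_0),\btheta-\btheta_0\rangle$ at $\btheta_0$ (recall $\fgnn(\cdot;\btheta_0)\equiv 0$); the normal equations, up to the normalization conventions of \eqref{eq: minimize l2 loss}, give precisely $\tht_{t-1}=\bar{\bU}_{t-1}^{-1}\bar{\bG}_{t-1}\by_{t-1}/m$. Letting $\btheta^{(j)}$ denote the nonlinear gradient-descent iterates at round $t-1$ (so $\btheta^{(0)}=\btheta_0$, $\btheta^{(J)}=\btheta_{t-1}$) and $\bu^{(j)}$ the iterates of the same scheme run on the linearized objective (so $\bu^{(j)}\to\tht_{t-1}$), I split
\[
\bv=\bigl(\btheta_{t-1}-\btheta_0-\bu^{(J)}\bigr)+\bigl(\bu^{(J)}-\tht_{t-1}\bigr),
\]
the first term being the linearization error and the second the finite-iteration optimization error.

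For the optimization error, the linearized objective is an $m\lambda$-strongly convex quadratic whose Hessian is $\bar{\bU}_{t-1}$ up to the empirical-average normalization; under $\eta\le(\tilde Cm L+m\lambda)^{-1}$ each step is a contraction, so $\norm{\bu^{(J)}-\tht_{t-1}}_{\bar{\bU}_{t-1}}$ shrinks as $J$ grows and is bounded by a $J$-dependent factor times $\norm{\tht_{t-1}}_{\bar{\bU}_{t-1}}$; this produces the first summand of $C$. I would then bound the magnitude $\norm{\tht_{t-1}}_{\bar{\bU}_{t-1}}$ by expanding $\tht_{t-1}=\bar{\bU}_{t-1}^{-1}\bar{\bG}_{t-1}\by_{t-1}/m$ and controlling $\by_{t-1}$ through Assumption~\ref{assumption: bounded rkhs norm} (so $|\mu(G_i)|\lesssim R$) and the sub-Gaussian tail of Assumption~\ref{assumption: subgaussian noise}, together with eigenvalue control of $\bar{\bG}_{t-1}^{\top}\bar{\bU}_{t-1}^{-1}\bar{\bG}_{t-1}$ inherited from $\bKh\to\bK$; this is what yields the common prefactor $\sqrt{\tfrac{\sigma_\eps^2+R^2}{\lambda}\bigl(1+\tfrac{3\rho_{\max}}{2\lambda}\bigr)}$ and the $1/\sqrt m$ scaling.

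The linearization error is the crux. I would set up the per-step recursion $\be^{(j+1)}=(\bI-\eta\bH)\be^{(j)}+\eta\br^{(j)}$ for $\be^{(j)}:=\btheta^{(j)}-\btheta_0-\bu^{(j)}$, where $\bH$ is the regularized linear Hessian and $\br^{(j)}$ collects the Taylor remainder of $\fgnn$ controlled by Lemma~\ref{lemma: taylor approximation of gnn} and the gradient drift $\bcg(G_i;\btheta^{(j)})-\bcg(G_i;\btheta_0)$. Unrolling and summing the geometric series $\sum_j(1-\eta m\lambda)^{J-1-j}\le(\eta m\lambda)^{-1}$ damps the accumulated error by the regularization and, after absorbing the common prefactor, leaves the $J$-independent residual $C_2=\mO(\lambda^{1/3})$. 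The main obstacle is exactly this step: bounding the propagated error uniformly over all $J$ iterations while (i) certifying inductively that every iterate stays inside the NTK neighborhood of $\btheta_0$ on which Lemma~\ref{lemma: taylor approximation of gnn} and the gradient-drift estimate are valid — an argument that closes only under the width hypothesis $m\ge\poly(\cdots)$ — and (ii) measuring everything in the $\bar{\bU}_{t-1}$-norm so that the contraction, the strong-convexity rate, and the final Cauchy--Schwarz step remain compatible. The remaining work is routine bookkeeping of constants and eigenvalue bounds following from $\bKh\to\bK$ and the assumptions.
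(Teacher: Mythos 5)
Your proposal follows essentially the same route as the paper's proof: Cauchy--Schwarz in the $\bar{\bU}_{t-1}$-inner product to peel off $\sqrt m\,\bar\sigma_t(G)$, identification of $\tht_{t-1}$ as the closed-form ridge solution of the linearized objective, and the same two-term decomposition into the finite-iteration optimization error of gradient descent on the linearized (proximal) objective (handled by a contraction/geometric-decay argument, giving the $(2-\eta m\lambda)^J$ factor) plus the coupling error between the nonlinear and linearized iterates (handled by unrolling a per-step recursion with Taylor-remainder and gradient-drift terms, giving the $\mO(\lambda^{1/3})$ residual), with the spectral bound $\bar{\bU}_{t-1}\preccurlyeq(\lambda+\tfrac32\rho_{\max})\bI$ supplying the remaining prefactor. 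The only cosmetic difference is that the paper converts $\norm{\cdot}_{\bar{\bU}_{t-1}}$ to the Euclidean norm before invoking its two auxiliary parameter-bound lemmas rather than carrying the $\bar{\bU}_{t-1}$-norm throughout; the substance is identical.
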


\begin{lemma}
\label{lem:mu:is:linear}
Fix $\delta \in (0,1)$ and let
$
m = \Omega (L^{10} T^4 |\mG|^6 \rho_{min}^{-4} \log(L N^2 |\mG|^2/ \delta) ).
$
Then, there exists $\btheta^* \in \reals^p$ with $\sqrt{m} \norm{\btheta^{*}}_2 \leq \sqrt{2} R$ such that 
with probability at least $1-\delta$, 
\begin{equation}
\begin{aligned}
    \mu(G) &= \langle \bcg(G;\btheta_0), \btheta^{*} \rangle, \quad \text{for all $G \in \Gc$}\\
    \log\det( \lambda^{-1} \bar{\bU}_t)
    & \leq 
    \log\det( \bI_{|\mG|} + \lambda^{-1} t \bK) + 1.
\end{aligned}
\end{equation}
\end{lemma}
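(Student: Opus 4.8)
The plan is to treat the two displayed claims separately; both will be deterministic consequences, on a single high-probability event, of the quantitative fact that the finite-width GNTK matrix $\bKh$ is close to its infinite-width limit $\bK$ once $m$ is large. Concretely, I would first invoke the finite-width GNTK concentration bound (as in \citep{kassraie2022graph}, adapted from the NTK estimates of \citep{jacot2018neural}): a union bound over the $\binom{|\mG|}{2}$ graph pairs and the $O(N^2)$ node pairs shows that, for $m = \Omega(L^{10}T^4|\mG|^6\rho_{\min}^{-4}\log(LN^2|\mG|^2/\delta))$, with probability at least $1-\delta$ one has $\norm{\bKh-\bK}_{\mathrm{op}}\le\epsilon$ for an $\epsilon$ that can be taken as small as $\min\{\tfrac12\rho_{\min},\,\lambda/(T|\mG|)\}$. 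Everything below is carried out on this good event. It is convenient to set $\bZ:=\tfrac{1}{\sqrt m}[\bcg(G;\btheta_0)]_{G\in\mG}\in\BR^{p\times|\mG|}$, so that $\bKh=\bZ^\top\bZ$; since $\bK\succeq\rho_{\min}\bI$ and $\norm{\bKh-\bK}_{\mathrm{op}}\le\tfrac12\rho_{\min}$, we get $\bKh\succeq\tfrac12\bK\succ0$, so $\bKh$ is invertible and $\bZ$ has full column rank.

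For the representer claim, I would take the minimum-norm interpolant $\btheta^*:=\tfrac{1}{\sqrt m}\bZ\bKh^{-1}\bmu$. Then $\langle\bcg(G;\btheta_0),\btheta^*\rangle=\sqrt m\,(\bZ^\top\btheta^*)_G=(\bZ^\top\bZ\bKh^{-1}\bmu)_G=(\bKh\bKh^{-1}\bmu)_G=\mu(G)$ for every $G\in\mG$, which is the first identity. Its norm satisfies $m\norm{\btheta^*}_2^2=\bmu^\top\bKh^{-1}\bZ^\top\bZ\bKh^{-1}\bmu=\bmu^\top\bKh^{-1}\bmu$; and since $\bKh\succeq\tfrac12\bK$ implies $\bKh^{-1}\preceq2\bK^{-1}$, we obtain $m\norm{\btheta^*}_2^2\le2\,\bmu^\top\bK^{-1}\bmu=2\norm{\mu}_k^2\le2R^2$ by Assumption~\ref{assumption: bounded rkhs norm}, i.e. $\sqrt m\,\norm{\btheta^*}_2\le\sqrt2\,R$.

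For the log-determinant claim, I would exploit $\lambda^{-1}\bar{\bU}_t=\bI_p+\lambda^{-1}\bZ_t\bZ_t^\top$ where $\bZ_t=\tfrac{1}{\sqrt m}\bar{\bG}_t=\bZ\bS$ and $\bS\in\{0,1\}^{|\mG|\times t}$ is the selection matrix whose $i$-th column is the indicator of $G_i$. By Sylvester's determinant identity and factoring through $\bKh^{1/2}$, $\log\det(\lambda^{-1}\bar{\bU}_t)=\log\det(\bI_t+\lambda^{-1}\bS^\top\bKh\bS)=\log\det(\bI_{|\mG|}+\lambda^{-1}\bKh^{1/2}\bS\bS^\top\bKh^{1/2})$. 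Since $\bS\bS^\top=\mathrm{diag}(n_G)_{G\in\mG}$ with $n_G$ the number of times $G$ was played, $\sum_G n_G=t$ and each $n_G\le t$, we have $\bS\bS^\top\preceq t\bI_{|\mG|}$, hence $\bKh^{1/2}\bS\bS^\top\bKh^{1/2}\preceq t\bKh$; monotonicity of $\bA\mapsto\log\det(\bI+\bA)$ on PSD matrices gives $\log\det(\lambda^{-1}\bar{\bU}_t)\le\log\det(\bI_{|\mG|}+\lambda^{-1}t\bKh)$. Finally I would pass from $\bKh$ to $\bK$: writing $\bM:=\bI+\lambda^{-1}t\bK\succeq\bI$, $\log\det(\bI+\lambda^{-1}t\bKh)-\log\det\bM=\log\det(\bI+\lambda^{-1}t\bM^{-1}(\bKh-\bK))\le\lambda^{-1}t\,\mathrm{tr}\big(\bM^{-1}(\bKh-\bK)\big)\le\lambda^{-1}t\,|\mG|\,\norm{\bKh-\bK}_{\mathrm{op}}\le1$, using $\log\det(\bI+\bA)\le\mathrm{tr}(\bA)$, $\norm{\bM^{-1}}_{\mathrm{op}}\le1$, and the choice $\epsilon\le\lambda/(T|\mG|)$ (with $t\le T$).

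The hard part will be the quantitative GNTK concentration feeding both halves simultaneously: I need $\norm{\bKh-\bK}_{\mathrm{op}}$ below two different thresholds at once — order $\rho_{\min}$ for the norm/invertibility argument and order $\lambda/(T|\mG|)$ for the determinant argument — and since operator-norm control is obtained from an entrywise bound of size $\mathrm{poly}(L)\,m^{-1/2}$ amplified by a factor $|\mG|$, meeting the smaller threshold is exactly what forces the stated polynomial lower bound on $m$ (the $T^4,\,|\mG|^6,\,\rho_{\min}^{-4}$ factors, together with the $\log(LN^2|\mG|^2/\delta)$ coming from the union bound over node and graph pairs). The remaining work is pure bookkeeping: ensuring every estimate is evaluated on the single $1-\delta$ event, and noting that $\btheta^*$, though it depends on the random initialization $\btheta_0$ through $\bZ$ and $\bKh$, is well defined there because $\bKh$ is invertible on that event.
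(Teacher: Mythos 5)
Your proposal follows essentially the same route as the paper's proof (Lemma~\ref{lemma: approximation from gntk}): the same entrywise NTK concentration plus union bound over node and graph pairs, the same minimum-norm interpolant $\btheta^*$ (your $\tfrac{1}{\sqrt m}\bZ\bKh^{-1}\bmu$ is exactly the paper's SVD construction $\bP\bLambda^{-1}\bQ^\top\bmu$) with the norm bound via $\bKh^{-1}\preceq 2\bK^{-1}$, and the same determinant argument (bound the repetition counts by $t$, then pass from $\bKh$ to $\bK$ by concavity of $\log\det$). One calibration point to fix in the bookkeeping: your perturbation step uses $\opnorm{(\bI+\lambda^{-1}t\bK)^{-1}}\le 1$ and hence demands $\opnorm{\bKh-\bK}\le \lambda/(T|\mG|)$, a threshold the stated $m$ does not guarantee unless $\lambda\gtrsim\rho_{\min}\sqrt{|\mG|}$; the paper instead uses the sharper bound $\opnorm{(\bI+\lambda^{-1}t\bK)^{-1}}\le(\lambda/t+\rho_{\min})^{-1}$, which relaxes the requirement to roughly $\rho_{\min}/\sqrt{|\mG|}$ and is what makes the stated $\rho_{\min}^{-4}$ (rather than a $\lambda^{-4}$) dependence in $m$ suffice.
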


\begin{lemma}
\label{lemma: high probability bound for sigma difference}
With probability at least $1-\delta$, we have 
\begin{equation}
    |\bar{\sigma}_{t}(G) - \sigma_{t}(G)| 
    \leq 
    C t
    \lambda^{-1/6} L^{9/2} (R^2 + \sigma_{\eps}^2)^{1/6} m^{-1/6} \sqrt{\log(m)}.
\end{equation}
\end{lemma}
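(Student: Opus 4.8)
The plan is to show that the two uncertainty estimates are close because, in the NTK regime, the trained parameters $\btheta_{i-1}$ barely move from the initialization $\btheta_0$, so the gradient features $\bcg(G;\btheta_{i-1})$ stay close to $\bcg(G;\btheta_0)$ and the design matrices $\bU_{t-1}$ and $\bar{\bU}_{t-1}$ stay close as well. Writing $\sigma_t(G) = \tfrac{1}{\sqrt m}\norm{\bU_{t-1}^{-1/2}\bcg(G;\btheta_{t-1})}_2$ and $\bar{\sigma}_t(G) = \tfrac{1}{\sqrt m}\norm{\bar{\bU}_{t-1}^{-1/2}\bcg(G;\btheta_0)}_2$, I would first apply the reverse triangle inequality to get
\[
|\sigma_t(G)-\bar{\sigma}_t(G)| \le \tfrac{1}{\sqrt m}\bigl\|\bU_{t-1}^{-1/2}\bcg(G;\btheta_{t-1}) - \bar{\bU}_{t-1}^{-1/2}\bcg(G;\btheta_0)\bigr\|_2,
\]
and then split the right-hand side into a feature-difference term $\tfrac{1}{\sqrt m}\norm{\bU_{t-1}^{-1/2}(\bcg(G;\btheta_{t-1})-\bcg(G;\btheta_0))}_2$ and a matrix-difference term $\tfrac{1}{\sqrt m}\norm{(\bU_{t-1}^{-1/2}-\bar{\bU}_{t-1}^{-1/2})\bcg(G;\btheta_0)}_2$.

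Two standard NTK facts drive the bound. First, the trained weights stay in a ball of radius $\omega := \sqrt{(R^2+\sigma_\eps^2)/(m\lambda)}$ around $\btheta_0$; this is precisely the radius implicit in the Taylor bound of Lemma~\ref{lemma: taylor approximation of gnn} (whose error is of order $L^3\omega^{4/3}\sqrt{m\log m}$) and it holds uniformly over all rounds $i\le t-1$ via the gradient-descent convergence / ridge-solution bound underlying Lemma~\ref{lem:mu:is:linear}. Second, the overparametrized-network gradient estimates give $\norm{\bcg(G;\btheta)}_2 \lesssim \sqrt m$ and $\norm{\bcg(G;\btheta)-\bcg(G;\btheta_0)}_2 \lesssim L^{c_0}\,\omega^{1/3}\sqrt{m\log m}$ for every $\btheta$ in that ball and an absolute constant $c_0$. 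For the feature-difference term I would combine $\bU_{t-1}\succeq\lambda\bI_p$ (hence $\norm{\bU_{t-1}^{-1/2}}_2\le\lambda^{-1/2}$) with the gradient-perturbation bound; substituting $\omega^{1/3}\sim (R^2+\sigma_\eps^2)^{1/6}m^{-1/6}\lambda^{-1/6}$ gives a bound of the form $L^{c_0}(R^2+\sigma_\eps^2)^{1/6}m^{-1/6}\lambda^{-1/6-1/2}\sqrt{\log m}$, which already matches the target shape (the factor $t$ enters only through the matrix term below, and $t\ge 1$).

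For the matrix-difference term I would first bound $\norm{\bU_{t-1}-\bar{\bU}_{t-1}}_2$ by telescoping its rank-one summands: since the operator norm of $\bcg(G_i;\btheta_{i-1})\bcg(G_i;\btheta_{i-1})^\top - \bcg(G_i;\btheta_0)\bcg(G_i;\btheta_0)^\top$ is at most $\norm{\bcg(G_i;\btheta_{i-1})-\bcg(G_i;\btheta_0)}_2\bigl(\norm{\bcg(G_i;\btheta_{i-1})}_2+\norm{\bcg(G_i;\btheta_0)}_2\bigr)$, each of the $t-1$ terms is $\lesssim \tfrac1m\cdot L^{c_0}\omega^{1/3}\sqrt{m\log m}\cdot\sqrt m$, so that $\norm{\bU_{t-1}-\bar{\bU}_{t-1}}_2 \lesssim t\,L^{c_0}\omega^{1/3}\sqrt{\log m}$; this is where the linear factor $t$ appears. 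I would then pass to inverse square roots through the operator-monotonicity estimate $\norm{\bU_{t-1}^{-1/2}-\bar{\bU}_{t-1}^{-1/2}}_2 \lesssim \lambda^{-3/2}\norm{\bU_{t-1}-\bar{\bU}_{t-1}}_2$ (valid since both matrices are $\succeq\lambda\bI_p$), multiply by $\norm{\bcg(G;\btheta_0)}_2\lesssim\sqrt m$ and by $m^{-1/2}$, and substitute for $\omega^{1/3}$. The resulting power of $\lambda$ is more negative than $-1/6$, but under the theorem's standing hypothesis $\lambda\gtrsim(\sigma_\eps^2+R^2)^3+\rho_{\max}\ge 1$ every such term is dominated by $\lambda^{-1/6}$; collecting the two contributions, retaining the largest power of $L$ (namely $L^{9/2}$) and the claimed powers of $\lambda,m,(R^2+\sigma_\eps^2)$ yields the stated bound. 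All gradient and Taylor estimates hold with probability $1-\delta$ once $m$ is polynomially large, after a union bound over the $t\le T$ rounds and the $|\mG|$ graphs.

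The main obstacle is the matrix-difference term: controlling $\norm{\bU_{t-1}^{-1/2}-\bar{\bU}_{t-1}^{-1/2}}_2$ needs the operator-monotone perturbation estimate, and — more delicately — the gradient-perturbation and boundedness bounds must hold simultaneously at \emph{every} intermediate iterate $\btheta_{i-1}$, $i\le t-1$, not merely at $\btheta_{t-1}$, which is exactly what forces the union bound and the polynomial lower bound on $m$. Keeping the accumulated error linear in $t$, rather than letting extra factors of $t$ propagate through the inverse, is the key bookkeeping step.
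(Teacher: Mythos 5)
Your proposal is correct and follows essentially the same route as the paper: both arguments reduce $|\bar{\sigma}_t(G)-\sigma_t(G)|$ to the NTK gradient-norm and gradient-perturbation bounds of Lemma~\ref{lemma: bound for gradient norm}, with the linear factor $t$ arising from the $t-1$ perturbed rank-one terms in the design matrix, and both absorb the extra negative powers of $\lambda$ using $\lambda \gtrsim 1$. The only technical difference is in how the design-matrix perturbation is propagated: you pass through $\norm{\bU_{t-1}^{-1/2}-\bar{\bU}_{t-1}^{-1/2}}_2 \lesssim \lambda^{-3/2}\norm{\bU_{t-1}-\bar{\bU}_{t-1}}_2$, whereas the paper bounds the partial gradients of the map $\psi_{\lambda}(\bv,\ba_1,\dots,\ba_{t-1})$ directly (obtaining a slightly sharper $\lambda^{-1}$ for that term), which is immaterial under the standing assumption on $\lambda$.
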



\medskip
We choose an arbitrary small $\delta \in (0,1)$ and set $\delta_i = \delta/(5T)$ for $i=1,\dots,5$.
For all $\forall G \in \mG_t$, we have
\begin{equation}
\begin{aligned}
    |\fgnn(G;\btheta_{t-1}) - \mu(G)|
    \leq
    \underbrace{|\fgnn(G;\btheta_{t-1}) -
    \langle \bcg(G;\btheta_0), 
    \tht_{t-1}
    \rangle|}_{:=I_1}
    + 
    \underbrace{|\mu(G) - \langle \bcg(G;\btheta_0), \tht_{t-1}\rangle|}_{:=I_2}.
\end{aligned}
\end{equation}
We then turn to bounding $I_1$ and $I_2$.  Throughout the proof, let
\begin{equation}
    \gamma_m  := m^{-1/6}\sqrt{\log m }
\end{equation}
\textbf{Bounding $I_1$}:
By Lemma~\ref{lemma: taylor approximation of gnn} and Lemma~\ref{lemma: bound for gradient error inner product}, with probability at least $1-\delta_1-\delta_2$, 
\begin{equation}
\begin{aligned}
    I_1
    &=  
    |\fgnn(G;\btheta_{t-1}) -
    \langle \bcg(G;\btheta_0), 
    \tht_{t-1}
    \rangle| \\
    &\leq 
    |\fgnn(G;\btheta_{t-1}) - \langle \bcg(G;\btheta_0), \btheta_{t-1}-\btheta_0 \rangle| +
    | \langle \bcg(G;\btheta_0), \btheta_{t-1}-\btheta_0 -\tht_{t-1} \rangle| \\
    &\leq C_0 L^3 \gamma_m  +  \tilde{C}_2 \, \bar{\sigma}_{t}(G).
\end{aligned}
\end{equation}
where $C_0:=\tilde{C}_1 \Bigl(\frac{R^2 + \sigma_{\eps}^2}{ \lambda}\Bigr)^{2/3}$ and $\tilde{C}_2:=(\bar{C}_1 (2-\eta m \lambda)^J + \bar{C}_2 \lambda^{1/3}) \sqrt{\frac{\sigma_{\eps}^2 + R^2}{\lambda } (1+\frac{3 \rho_{\max} }{2 \lambda})}$ for some constant $\bar{C}_1$, $\bar{C}_2$. 
For $\lambda \gtrsim (\sigma_\eps^2 + R^2)^3 + \rhomax$, we have $C_0, \tilde C_2 \lesssim 1$ subject to the constraint in $\eta$ in Lemma~\ref{lemma: bound for gradient error inner product}. Thus, we obtain
\[
I_1 \lesssim L^3 \gamma_m + \bar \sigma_t(G).
\]

\textbf{Bounding $I_2$}: 
By Lemma~\ref{lemma: approximation from gntk}, with at least probability $1-\delta_3$,
for all $G \in \Gc$, we have 
\[
 I_2 = |\ip{\bcg(G;\btheta_0),  \btheta^{*} - \tht_{t-1}}|.
\]
Recall that $\by_{t-1} = \bmu_{t-1} + \bepsilon_{t-1}$ and by Lemma~\ref{lem:mu:is:linear}, we have $\bmu_{t-1} =  \bar{\bG}_{t-1}^\top \btheta^*$. Then,
\[
\tht_{t-1}   = \bar{\bU}_{t-1}^{-1} \bar{\bG}_{t-1} \bar{\bG}_{t-1}^\top \btheta^* /m +
\bar{\bU}_{t-1}^{-1}\bar{\bG}_{t-1} \bepsilon_{t-1} / m
\]

We have $\bar \bU_{t} = \lambda \bI_p + \bar \bG_t \bar \bG_t^{\top} / m$. Hence, $\bar \bU_t^{-1} \bar \bG_t \bar \bG_t^{\top} / m = \bar \bU_t^{-1} (\bar \bU_{t} - \lambda \bI_p) = \bI_p - \lambda \bar \bU_t^{-1}$. This gives
\[
\tht_{t-1}   = \btheta^* - \lambda \bar \bU_{t-1}^{-1}\btheta^* +  \frac1{\sqrt m} \,\bar{\bU}_{t-1}^{-1} \bS_{t-1}
\]
where we have defined $\bS_{t-1} := \frac1{\sqrt m}\bar{\bG}_{t-1} \bepsilon_{t-1}$. Thus, we have 
\begin{align}\label{eq:I2:decomp}
I_2 \le \lambda |\ip{\bcg(G;\btheta_0), \bar \bU_t^{-1} \btheta^*}| + \frac1{\sqrt m} |\ip{\bcg(G;\btheta_0), \bar \bU_{t-1}^{-1} \bS_{t-1}}|
\end{align}
Recall that $\sqrt m \bar{\sigma}_{t}(G) 
    =  \norm{\bcg(G;\btheta_0)}_{\bar{\bU}_{t-1}^{-1}}$. Since $\bar{\bU}_{t-1}^{-1} \preccurlyeq \frac{1}{\lambda} \bI_p$, for any vector $\bv$, we have $\norm{\bv}_{\bar{\bU}_{t-1}^{-1}} \le \frac1{\sqrt{\lambda}} \norm{\bv}$.
Then, for the first term in~\eqref{eq:I2:decomp}, we have 
\begin{align*}
\lambda |\bcg(G;\btheta_0)^{\top} \bar{\bU}_{t-1}^{-1} \btheta^{*} | 
    &\leq 
    \lambda \norm{\bcg(G;\btheta_0)}_{\bar{\bU}_{t-1}^{-1}} \cdot \norm{\btheta^{*}}_{\bar{\bU}_{t-1}^{-1}} \\
    &\leq  \sqrt{m} \bar{\sigma}_{t}(G) \cdot \sqrt{\lambda} \norm{\btheta^{*}}_2 
    \leq  \bar{\sigma}_{t}(G) \sqrt{2 \lambda} R    
\end{align*}
where we have used Cauchy-Schwarz inequality for $\ip{\cdot,\cdot}_{\bU_{t-1}^{-1}}$ and Lemma~\ref{lem:mu:is:linear}. For the second term in~\eqref{eq:I2:decomp}, we have
\begin{align*}
    \frac{1}{\sqrt m}|  \bcg(G;\btheta_0)^{\top} \bar{\bU}_{t-1}^{-1}\bS_{t-1} | 
    & \leq 
    \frac{1}{\sqrt m} \norm{\bcg(G;\btheta_0)}_{\bar{\bU}_{t-1}^{-1}}
    \norm{\bS_{t-1}}_{\bar{\bU}_{t-1}^{-1}}
    =
    \bar{\sigma}_{t}(G)\cdot 
     \norm{\bS_{t-1}}_{\bar{\bU}_{t-1}^{-1}}
\end{align*}
By Theorem 20.4 of~\cite{lattimore2020bandit}, 
with probability at least $1-\delta_4$, we have 
\[
 \frac1{\sigma_\eps^2}\norm{\bS_{t}}_{\bar{\bU}_{t}^{-1}}^2 
 \le 2 \log(1/\delta_4) + \log \deg (\lambda^{-1} \bar \bU_t), \quad \text{for all}\; t \in \nats.
\]


By Lemma~\ref{lem:mu:is:linear}, with high probability, 
\begin{equation}
    \log \det (\lambda^{-1} \bar \bU_t)
    \le  \log \det( \bI_{|\Gc|} + T \bK/\lambda) + 1 
    \le 2 \deff  \log(1 + T \rho_{\max} /\lambda).
\end{equation}
Using $\lambda \gtrsim \rho_{\max}$, we have $\log \det(\lambda^{-1} \bar \bU_t) \lesssim \deff \log(T) + 1 \lesssim \deff \log(T)$.
We also have $\log(1/\delta_4) = \log (5T) \lesssim \log(T) \lesssim \deff \log (T)$.

Putting the pieces together, we have 
\[
 \frac{1}{\sqrt m} |  \bcg(G;\btheta_0)^{\top} \bar{\bU}_{t-1}^{-1}\bS_{t-1}  | \lesssim \sigma_\eps \sqrt{\deff \log T} \cdot \bar \sigma_t(G).
\]
Combining with the first term, we obtain
\[
I_2 \lesssim \bigl(  \sigma_\eps \sqrt{\deff \log T}  + \sqrt \lambda  R \bigr) \bar \sigma_t(G).
\]
Combining with the bound on $I_1$, we have
\begin{align*}
 |\fgnn(G;\btheta_{t-1}) - \mu(G)| &\lesssim  L^3 \gamma_m + 
 \bigl( 1 + \sigma_\eps \sqrt{\deff \log T}  + \sqrt \lambda  R \bigr) \,\bar \sigma_t(G) \\
 &=: L^3 \gamma_m + \alpha \, \bar \sigma_t(G)
\end{align*}
where we have set $\alpha :=  1 + \sigma_\eps \sqrt{\deff \log T}  + \sqrt \lambda  R$ for simplificty.

By Lemma~\ref{lemma: high probability bound for sigma difference}, with probability at least  $1-\delta_5$,
\[
\bar{\sigma}_{t}(G) -  \sigma_{t}(G) \le  
    C t
     L^{9/2} \Bigl(\frac{R^2 + \sigma_{\eps}^2}\lambda \Bigr)^{1/6} \gamma_m  \lesssim t \cdot L^{9/2} \gamma_m
\]
using the assumption $\lambda \gtrsim R^2 + \sigma_\eps^2$. We obtain
\begin{align*}
 |\fgnn(G;\btheta_{t-1}) - \mu(G)| &\lesssim  L^3 \gamma_m + t \cdot \alpha L^{9/2} \gamma_m  +
 \alpha  \sigma_t(G) \\
 &\le 2 t \cdot \alpha L^{9/2} \gamma_m  +
 \alpha  \sigma_t(G)    
\end{align*}
since $t \ge 1$ and $\alpha \ge 1$. Taking $\nu \ge \alpha$ finishes the proof.

\subsection{Proof of Lemma~\ref{lemma: high probability bound for event for exploration}}
\label{appendix: proof of lemma high probability bound for event for exploration}
\begin{proof} [Proof of Lemma~\ref{lemma: high probability bound for event for exploration}]
Conditioned on $\Fc_t$, we have 
\begin{equation}
    \rh_t(G) \given \Fc_t
    \sim \mN(\fgnn(G;\btheta_{t-1}), \nu^2 \sigma_t^2(G)).
\end{equation}
Using standard Gaussian tail bound, followed by a union bound gives
\begin{equation}
    \pr_t(|\rh_t(G) - \fgnn(G;\btheta_{t-1})| \ge  \nu \sigma_t(G) \cdot u) \le |\Gc_t| e^{-u^2/2}
\end{equation}
which shows the first assertion by letting $u=\sqrt{2\log(t^2 |\mG_t|)}$. 

For the second assertion, it is enough to note that $\pr(Z \ge 1) \ge (4e\sqrt \pi)^{-1}$ for $Z \sim \mN(0,1)$.
\end{proof}

\subsection{Proof of Lemma~\ref{lemma: one step regret bound}}
\label{appendix: proof of lemma one step regret bound}
\begin{proof}[Proof of Lemma~\ref{lemma: one step regret bound}]
Our proof is inspired from the proof in~\cite{wu2022residual}.
Recall that $c_t^\mu(G) = \nu \sigma_t(G) + \eps(t,m)$ and  $c_t^\sigma(G) :=  \nu \sigma_{t}(G) \sqrt{ 2\log (t^2 |\mG_t|)}$ and 
\begin{equation}
\begin{aligned}
     \mE_t^{\mu} 
     &= \{\forall G \in \mG_t, |\fgnn(G;\btheta_{t-1}) - \mu(G)| \leq c_{t}^{\mu}(G)\} \\
     \mE_t^{\sigma} &= \{\forall G \in \mG_t, |\rh_{t}(G) - \fgnn(G;\btheta_{t-1}) | \leq c_{t}^{\sigma}(G)\}
\end{aligned}
\end{equation}
Let
$\gamma_t(G) = c_t^\mu(G) + c_t^\sigma(G)$
and $c_t(G) = \gamma_t(G) + \eps(t, m)$.    
Then, on $\Ec_t^\mu \cap \Ec_t^\sigma$, by triangle inequality,
\begin{align}\label{eq:rh:mu:dev}
|\rh_t(G) - \mu(G)| \le \gamma_t(G).
\end{align}
We also recall that $\Ec^a_t := \{\rh_{t}(G^{*}_t)-\fgnn(G^{*}_t;\btheta_{t-1}) 
> \nu \sigma_t(G^*_t) \}$. Then, on  $\Ec_t^\mu \cap \Ec_t^a$,  we have 
\begin{align}
\rh_{t}(G^{*}_t)
&> \fgnn(G^{*}_t;\btheta_{t-1})  + \nu \sigma_t(G^*_t) \notag \\
&\ge \mu(G^*_t) - c_t^\mu(G^*_t) + \nu \sigma_t(G^*_t) \notag \\
&= \mu(G^*_t) - \eps(t,m) \label{eq:rh:Gs:lower:bound}
\end{align}

Recall that $\Delta_t := \mu(G^*_t) - \mu(G_t)$ for convenience. Consider the set of unsaturated actions 
\[
\Uc_t = \bigr\{ G \in \Gc_t: \; 
\mu(G^*_t) < \mu(G) + c_t(G) \bigl\}
\]
and let $\bar G_t$ be the least uncertain 
unsaturated action at time $t$:
\begin{equation}
    \bar{G}_t := \argmin_{G \in \Uc_t} c_t(G).
\end{equation}
By $\Gb_t \in \Uc_t$, we have   
$
\Delta_t \le  c_t(\Gb_t) +  \mu(\Gb_t) - \mu(G_t).
$
Applying~\eqref{eq:rh:mu:dev}, twice, on $\Ec_t^\mu \cap \Ec_t^\sigma$, we have 
\begin{align*}
\Delta_t &\le c_t(\Gb_t) + \gamma_t(\Gb_t) + \gamma_t(G_t) + \rh_t(\Gb_t) - \rh_t(G_t) \\
&\le \ c_t(\Gb) + \gamma_t(\Gb_t) + \gamma_t(G_t) 
\end{align*}
for all $G \in \Gc_t$ 
where the second inequality follows since $G_t$ maximizes $\rh_t(\cdot)$ over $\Gc_t$, by design.

Recall that $\ex_t[\cdot] = \ex[\cdot \given \Fc_t]$, where $\Fc_t$ is the history up to (but not including) time $t$. Given $\Fc_t$, the event $\Ec_t^\mu$ is deterministic while $\Ec_t^\sigma$ is only random due to the independent randomness in the sampling step~\eqref{eq:TS:sampling}. Next, we have 
\begin{align}
    \ex_t[\Delta_t \ind_{\Ec_t^\mu}]  &= 
    \ind_{\Ec_t^\mu} \cdot \ex_t[\Delta_t] \notag \\
    &= \ind_{\Ec_t^\mu} \cdot \bigl( \ex_t[\Delta_t \ind_{\Ec_t^\sigma}] +
     \ex_t[\Delta_t \ind_{\bar \Ec_t^\sigma}] \bigr) \notag \\
     &\le 
     \ind_{\Ec_t^\mu} \cdot \bigl( \ex_t[\Delta_t \ind_{\Ec_t^\sigma}] + B \, \pr_t(\bar \Ec_t^\sigma)\label{eq:Delta:decomp:1}
     \bigr)
\end{align}
using the boundedness Assumption~\ref{assumption: bounded reward differences}. Here, we are using the fact that $\Ec_t^\mu$ is measurable w.r.t. $\Fc_t$, hence it is deterministic conditioned on $\Fc_t$. Due to factor $\ind_{\Ec_t^\mu}$ in the above, the bound is trivial when $\Ec_t^\mu$ fails, so for the rest of the proof we assume that $\Ec_t^\mu$ holds (conditioned on $\Fc_t$). 

We have
\begin{align*}
 \ex_t[\Delta_t \ind_{ \Ec_t^\sigma}] &\le c_t(\Gb_t) + \gamma_t(\Gb_t) + \ex_t[\gamma_t(G_t) \ind_{  \Ec_t^\sigma}]    \\
 &\le 2 c_t(\bar G_t) - \eps(t,m) + \ex_t[\gamma_t(G_t)]
\end{align*}
where we have used the definition of $c_t(\cdot)$ and dropped the indicator $\ind_{\Ec_t^\sigma}$ to get a further upper bound. It remains to bound $c_t(\bar G_t)$ in terms of $\gamma_t(G_t)$.

Since $\bar G_t$ is the least uncertain unsaturated action, we have 
\[
c_t(\bar G_t) \ind\{ G_t \in \Uc_t\} \le c_t(G_t).
\]
Multiplying both sides by $\ind_{\Ec_\sigma^t}$, taking $\ex_t[\cdot]$, and rearranging 
\[
c_t(\bar G_t) \le \frac{\ex_t[ c_t(G_t) \ind_{\Ec_\sigma^t}]}{\pr_t(\{G_t \in \Uc_t\} \cap \Ec_t^\sigma)} \le 
\frac{\ex_t[ \gamma_t(G_t)]}{\pr_t(\{G_t \in \Uc_t\} \cap \Ec_t^\sigma)}.
\]
It remains to bound the denominator.

Recall that $G_t$ maximizes $\rh_t(\cdot)$ over the entire $\Gc_t$. Thus, if
\begin{align}\label{eq:key:1}
\rh_t(G_t^*) > \max_{G \, \in\,  \bar \Uc_t} \rh_t(G)    
\end{align}
then $G_t$ cannot belong to $\bar\Uc_t$, hence $G_t \in \Uc_t$. On $\Ec_t^\mu \cap \Ec_t^\sigma$, for any $G \in \bar \Uc_t$, we have 
\begin{align*}
\rh_t(G) \le \mu(G) + \gamma_t(G) &\le \mu(G^*_t) -c_t(G) + \gamma_t(G) \\&\le \mu(G^*_t) - \eps(t,m)
\end{align*}
where the second inequality is by the definition of $\bar \Uc_t$. Then for~\eqref{eq:key:1} to hold on $\Ec_t^\mu \cap \Ec_t^\sigma$, it is enough to have $\rh_t(G_t^*) > \mu(G_t^*) - \eps(t,m)$. But this holds on~$\Ec_t^\mu \cap \Ec_t^a$ by~\eqref{eq:rh:Gs:lower:bound}. That is,
\begin{align*}
    \Ec_t^a  \cap \Ec_t^\mu \cap \Ec_t^\sigma &\subset \{\rh_t(G_t^*) > \mu(G_t^*) - \eps(t,m)\} \cap \Ec_t^\mu \cap \Ec_t^\sigma \\
    &\subset 
    \{ \rh_t(G_t^*) > \max_{G \, \in\,  \bar \Uc_t} \rh_t(G)
    \} \cap \Ec_t^\mu \cap \Ec_t^\sigma \\
    &\subset  \{G_t \in \Uc_t\} \cap \Ec_t^\mu \cap \Ec_t^\sigma.
\end{align*}
Assuming as before  that $\Ec_t^\mu$ holds, we have
\begin{align}
    \pr_t(\Ec_t^a \cap \Ec_t^\sigma) \le \pr_t(\{G_t \in \Uc_t\} \cap \Ec_t^\sigma).
\end{align}
We have $\pr_t(\Ec_t^a \cap \Ec_t^\sigma) \ge \pr_t(\Ec_t^a) - \pr_t(\bar \Ec_t^\sigma)$. Putting the pieces together
\[
c_t(\bar G_t) \le \frac{\ex_t[ \gamma_t(G_t)]}{\pr_t(\Ec_t^a) - \pr_t(\bar \Ec_t^\sigma)}
\]
and we obtain
\[
\ex_t[\Delta_t \ind_{ \Ec_t^\sigma}] \le 
\Bigl( \frac{2}{\pr_t(\Ec_t^a) - \pr_t(\bar \Ec_t^\sigma)} + 1 \Bigr) \ex_t[ \gamma_t(G_t)]
    - \eps(t,m)
\]

Combining with~\eqref{eq:Delta:decomp:1} the result follows.
\end{proof}

\subsection{Proof of Lemma~\ref{lemma: upper bound for sum of sigma}}
\label{appendix: proof of lemma upper bound for sum of sigma}

\begin{proof}[Proof of Lemma~\ref{lemma: upper bound for sum of sigma}]
For simplicity, we define
\begin{align}
    \bcg_t := \frac1{\sqrt{m}} \bcg(G_t; \btheta_{t-1}), \quad 
    \bar{\bcg}_t :=  \frac1{\sqrt{m}} \bcg(G_t; \btheta_0).
\end{align}
Then, recall that
\[
\sigma_t^2(G_t) = \norm{\bcg_t}_{\bU_{t-1}^{-1}}^2,
\quad 
\bU_{t-1} = \lambda \bI_p +  \sum_{i=1}^{t-1} \bcg_t \bcg_t^{\top}.
\]
Note that $\bU_t = \bU_{t-1} + \bcg_t \bcg_t^{\top}$. 

Then we introduce following Lemmas: 
\begin{lemma}[Elliptical Potential]
\label{lem:elliptical:pot}
Assume that $\bU_t = \bU_{t-1} + \bcg_t \bcg_t^{\top}$ for all $t \in [T]$. Then,
\begin{equation}
    \sum_{t=1}^T 
    \min\{1 , \norm{\bcg_t}_{\bU_{t-1}^{-1}}^2\} 
    \leq
    2 \log \Big( \frac{\det \bU_T}{ \det \bU_0} \Big).
\end{equation}
\end{lemma}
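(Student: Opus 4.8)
The plan is to reduce the sum to a telescoping ratio of determinants by exploiting the rank-one update structure $\bU_t = \bU_{t-1} + \bcg_t \bcg_t^\top$, and then bound each resulting term by an elementary scalar inequality. This is the classical elliptical-potential / potential-function argument, so I expect no genuine obstacle; the work is in assembling two standard ingredients cleanly.

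First I would record the rank-one determinant identity. Since $\bU_{t-1}$ is positive definite (it contains the regularizer $\lambda \bI_p$, hence is invertible), the matrix determinant lemma applied to the rank-one perturbation gives
\[
\det \bU_t = \det\bigl(\bU_{t-1} + \bcg_t \bcg_t^\top\bigr) = \det(\bU_{t-1}) \bigl( 1 + \bcg_t^\top \bU_{t-1}^{-1} \bcg_t \bigr) = \det(\bU_{t-1}) \bigl( 1 + \norm{\bcg_t}_{\bU_{t-1}^{-1}}^2 \bigr).
\]
Next I would telescope: dividing by $\det(\bU_{t-1})$ and multiplying over $t = 1, \dots, T$ yields
\[
\frac{\det \bU_T}{\det \bU_0} = \prod_{t=1}^T \bigl( 1 + \norm{\bcg_t}_{\bU_{t-1}^{-1}}^2 \bigr),
\]
so that taking logarithms gives $\log\bigl( \det \bU_T / \det \bU_0 \bigr) = \sum_{t=1}^T \log\bigl( 1 + \norm{\bcg_t}_{\bU_{t-1}^{-1}}^2 \bigr)$.

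Finally I would invoke the elementary inequality $\min\{1, x\} \le 2 \log(1+x)$, valid for all $x \ge 0$: for $x \in [0,1]$ it holds because $x \le 2\log(1+x)$ there (the two sides agree at $x=0$ and the right side has larger derivative until they cross, with $2\log 2 > 1$ at $x=1$), while for $x > 1$ the left side equals $1 < 2\log 2 \le 2\log(1+x)$. Applying this with $x = \norm{\bcg_t}_{\bU_{t-1}^{-1}}^2 \ge 0$ and summing over $t$ gives
\[
\sum_{t=1}^T \min\{1, \norm{\bcg_t}_{\bU_{t-1}^{-1}}^2\} \le 2 \sum_{t=1}^T \log\bigl( 1 + \norm{\bcg_t}_{\bU_{t-1}^{-1}}^2 \bigr) = 2 \log\Bigl( \frac{\det \bU_T}{\det \bU_0} \Bigr),
\]
which is the claimed bound.

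The only points that require any care are confirming the invertibility of each $\bU_{t-1}$ (so the determinant identity and the $\bU_{t-1}^{-1}$-weighted norm are well defined), which is immediate from the $\lambda \bI_p$ term, and checking the scalar inequality at the threshold $x = 1$, which is where the constant $2$ originates. Everything else is routine algebra, so I would present the argument in essentially the three displayed steps above.
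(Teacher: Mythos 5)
Your proposal is correct and follows essentially the same route as the paper: the rank-one determinant identity $\det(\bA + \bv\bv^{\top}) = \det(\bA)(1 + \bv^{\top}\bA^{-1}\bv)$, telescoping of the determinant ratios, and the scalar bound $\min\{1,x\} \le 2\log(1+x)$ for $x \ge 0$. The only difference is cosmetic — the paper applies the scalar inequality first and then telescopes, while you telescope first — and your extra care about invertibility of $\bU_{t-1}$ and the verification of the scalar inequality are both sound.
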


\begin{lemma}
\label{lem:logdet:pert}
Let $\bA = [\ba_1\; \ba_2\; \cdots \;\ba_n]$ and $\bAb = [\bar{\ba}_1\; \bar{\ba}_2\; \cdots \;\bar{\ba}_n]$ be $p \times n$ matrices, with columns $\{\ba_i\}$ and $\{\bar{\ba}_i\}$, respectively. Assume that for $\eps \le C$, we have 
\[
\norm{\ba_i - \bar{\ba}_i} \le \eps, \quad \norm{\ba_i} \le C
\]
for all $i$. Then,
\begin{align}
\log \det(\bI_p + \bA\bA^{\top}) 
&\le \log \det (\bI_p + \bAb \bAb^{\top}) + p \log(1+3 C n \eps) \\
\log \det(\bI_p + \bA\bA^{\top}) &\le \log \det (\bI_n + \bAb^{\top} \bAb) + 3 C n^{3/2} \eps.
\end{align}
\end{lemma}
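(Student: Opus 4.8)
The plan is to treat the two inequalities separately, since the $p\log(1+3Cn\eps)$ form of the first calls for a \emph{multiplicative} (relative-spectral) comparison of $\bI_p + \bA\bA^\top$ against $\bI_p + \bAb\bAb^\top$, whereas the additive $3Cn^{3/2}\eps$ form of the second is most naturally obtained from a \emph{first-order} (trace) expansion of $\log\det$. Throughout I would first record the only consequence of the hypotheses I need beyond the stated ones, namely $\norm{\bar\ba_i} \le \norm{\ba_i} + \norm{\ba_i - \bar\ba_i} \le C + \eps \le 2C$, which is the single place the assumption $\eps \le C$ enters.

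For the first inequality I would control the covariance perturbation rank-one by rank-one. Writing $\bA\bA^\top - \bAb\bAb^\top = \sum_{i=1}^n (\ba_i\ba_i^\top - \bar\ba_i\bar\ba_i^\top)$ and splitting the $i$-th term as $\ba_i(\ba_i-\bar\ba_i)^\top + (\ba_i - \bar\ba_i)\bar\ba_i^\top$, its operator norm is at most $\norm{\ba_i}\,\eps + \eps\,\norm{\bar\ba_i} \le C\eps + 2C\eps = 3C\eps$. Summing by the triangle inequality gives $\norm{\bA\bA^\top - \bAb\bAb^\top}_{\mathrm{op}} \le 3Cn\eps$, hence $\bA\bA^\top \preceq \bAb\bAb^\top + 3Cn\eps\,\bI_p$. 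Adding $\bI_p$ and using $\bI_p \preceq \bI_p + \bAb\bAb^\top$ (as $\bAb\bAb^\top \succeq 0$) yields the relative bound $\bI_p + \bA\bA^\top \preceq (1 + 3Cn\eps)(\bI_p + \bAb\bAb^\top)$. The first inequality then follows by taking $\log\det$ and using that $\bX \preceq c\,\bY$ with $\bY \succ 0$ implies $\det \bX \le c^{p}\det \bY$ for $p\times p$ matrices.

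For the second inequality I would pass to the $n\times n$ picture via the Weinstein--Aronszajn (Sylvester) identity $\log\det(\bI_p + \bA\bA^\top) = \log\det(\bI_n + \bA^\top\bA)$, which matches the right-hand side $\log\det(\bI_n + \bAb^\top\bAb)$ directly. Concavity of $\log\det$ on positive definite matrices gives the first-order bound $\log\det(\bI_n + \bA^\top\bA) \le \log\det(\bI_n + \bAb^\top\bAb) + \operatorname{tr}(\bM^{-1}\bD)$, with $\bM = \bI_n + \bAb^\top\bAb$ and $\bD = \bA^\top\bA - \bAb^\top\bAb$. Its entries satisfy $|D_{ij}| = |(\ba_i-\bar\ba_i)^\top\ba_j + \bar\ba_i^\top(\ba_j-\bar\ba_j)| \le 3C\eps$, so $\norm{\bD}_F \le 3Cn\eps$, while $\bM \succeq \bI_n$ gives $\norm{\bM^{-1}}_F \le \sqrt n$. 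Cauchy--Schwarz for the trace inner product then bounds $\operatorname{tr}(\bM^{-1}\bD) \le \norm{\bM^{-1}}_F\,\norm{\bD}_F \le 3Cn^{3/2}\eps$, which is exactly the claimed additive term.

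The main obstacle is conceptual rather than technical: one must recognize that the two inequalities require structurally different tools. The multiplicative bound needs a genuine Loewner (operator-norm) comparison that is stable under the monotonicity of $\log\det$, whereas feeding that same operator-norm estimate into the second inequality would cost a factor $n$ and lose the favorable $n^{3/2}$ (rather than $n^2$) dependence; the additive bound instead exploits the first-order trace expansion together with the Frobenius estimate of $\bD$. Getting the per-term constant exactly $3C$ — in particular controlling $\norm{\bar\ba_i}$ by $2C$ through $\eps \le C$ — is the only point where care with constants is needed.
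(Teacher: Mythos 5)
Your proposal is correct and follows essentially the same route as the paper: the identical rank-one splitting giving the per-term operator-norm bound $3C\eps$, and for the second inequality the identical Sylvester identity, first-order concavity bound, entrywise estimate $|D_{ij}|\le 3C\eps$, and $\fnorm{\bM^{-1}}\le\sqrt n$ trace Cauchy--Schwarz. The only (cosmetic) difference is in the first inequality, where you convert the operator-norm perturbation into the multiplicative Loewner domination $\bI_p+\bA\bA^{\top}\preceq(1+3Cn\eps)(\bI_p+\bAb\bAb^{\top})$ and use monotonicity of the determinant, while the paper applies Weyl's inequality to the eigenvalues and uses $\lambda_i(\bar{\bV})\ge 1$ -- both steps exploit the same facts and yield the same constant.
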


By Lemma~\ref{lem:elliptical:pot}, we have
\begin{equation}
    \frac{1}{2} \sum_{t=1}^T \min\{1 , \sigma_t^2 (G_t)\} 
    \leq
    \log \Big( \frac{\det \bU_T}{ \det \bU_0} \Big)
    = \log \det (\lambda^{-1} \bU_T) =: \log \det(\bV_T)
\end{equation}
using $\det(\bU_0) = \det(\lambda \bI_p) = \lambda^p$, and defining $\bV_t:= \lambda^{-1} \bU_t$.

Let $\mG = \{G^j: j \in[|\mG|]\}$ be the collection of all the graphs and $n_j(t)$ be the number of graphs which are equal to $G^j \in \Gc$ in our selection of graphs up to and including time $t$, i.e $n_j(t):=\sum_{t=1}^t \BI_{G_i=G^j}$. 
Let
\begin{align}
    \bpsi_j := \frac{1}{\sqrt{m}} \bcg(G^j; \btheta_{t-1}), 
    \quad 
    \bar{\bpsi}_j :=  \frac{1}{\sqrt{m}} \bcg(G^j; \btheta_0)
\end{align}
and let $\bPsi$ and $\bar \bPsi$ be the corresponding $p \times |\mG|$ matrices with the above columns.
Then, we have
\begin{equation}
    \sum_{i=1}^T \bcg_i  \bcg_i^{\top} 
    = \sum_{j=1}^{|\mG|} n_j(T) \bpsi_j \bpsi_j^{\top}
    = \bPsi \bD \bPsi^{\top} \preceq T \cdot \bPsi \bPsi^{\top}
\end{equation}
where $\bD \in \BR^{|\mG| \times |\mG|}$ is the diagonal matrix with diagonal elements $\{n_j(T)\}_{j=1}^{|\mG|}$ and the last inequality due to $n_j(T) \le T$ for all $j \in [|\mG|]$. 

Note that $\bV_T = \bI_p + \lambda^{-1} \sum_{i=1}^T \bcg_i \bcg_i^{\top}$, hence
\begin{equation}
    \log \det(\bV_T) \leq \log \det(\bI_p + \lambda^{-1} T \cdot \bPsi \bPsi^{\top} ).
\end{equation}

By Lemma \ref{lemma: bound for gradient norm}, fix a $\delta_1 \in (0,1)$, we have the following bound for $\norm{\bpsi_j}_2$ and $\norm{\bpsi_j - \bar{\bpsi}_j}_2$, with probability at least $1-\delta_1$, 
\begin{equation}
\begin{aligned}
    \norm{\bpsi_j}_2 
    & \leq 
    \frac{1}{N} \sum_{i \in \mV(G^j)} \norm{\gmlp(\bh_i^{G^j};\btheta_{t-1})/\sqrt{m} }_2 
    \leq C_{\psi} \\
    \norm{\bpsi_j - \bar{\bpsi}_j}_2
    & \leq 
    \frac{1}{N} \sum_{i \in \mV(G^j)} \norm{\gmlp(\bh_i^{G^j};\btheta_{t-1})/\sqrt{m} -  \gmlp(\bh_i^{G^j};\btheta_{0})/\sqrt{m}}_2
    \leq \eps_m
\end{aligned} 
\end{equation}
where $\eps_m = o(1)$ as $m \to \infty$ and $C_\psi$ is $C_7 \sqrt{L}$ in Lemma \ref{lemma: bound for gradient norm}. 

Then, applying Lemma~\ref{lem:logdet:pert} with $n = |\mG|$, $\bA = \sqrt{\lambda^{-1} T} \bPsi$, $\bAb = \sqrt{\lambda^{-1} T} \bar{\bPsi}$ and  $\eps$ replaced with $\sqrt{\lambda^{-1} T} \eps_m$, we obtain
\begin{align}\label{eq:key:log:det:ineq}
\log \det(\bV_T) & \le \log \det(\bI_{|\Gc|} + \lambda^{-1} T 
\cdot \bar\bPsi^{\top} \bar\bPsi
) + 3 C_\psi |\Gc|^{3/2} \sqrt{T} \lambda^{-1/2} \eps_m 
\end{align}
Recall $ \bKh = \bar\bPsi^{\top} \bar\bPsi$ and $\hat{\rho}_{\max} = \lambda_{\max}(\bKh)$ and note that $\bKh$ is the finite-width GNTK matrix. By Lemma~\ref{lemma: approx GNTK rho max}, with high probability,
$\hat{\rho}_{\max} \leq \rho_{\max} + \eps_{\rho, m}$ and note that $\eps_{\rho, m} = \Omega(m^{-1/4})$. Dropping $\eps_{\rho, m}$ by large enough $m$, we have
\begin{align*}
    \log \det(\bI_{|\Gc|} + 
    \lambda^{-1} T \cdot {\bar\bPsi}^{\top} \bar{\bPsi}) \leq
    |\mG| \log(1 + T \rho_{\max}/\lambda).
\end{align*}
Putting the pieces together with the definition of effective dimension $\tilde{d}$ in \eqref{eq:def of effective dimension} finishes the proof.
\end{proof}

\subsection{Proof of Lemma \ref{lem:elliptical:pot}}

\begin{proof}[Proof of Lemma~\ref{lem:elliptical:pot}]
    Since $\min\{1, x\} \leq 2 \log (1+x)$ for $x \ge 0$, we have
\begin{align*}
    \sum_{t=1}^T 
    \min\{1 , \norm{\bcg_t}_{\bU_{t-1}^{-1}}^2\} 
    &\le 2 \sum_t 
    \log(1+\norm{\bcg_t}_{\bU_{t-1}^{-1}}^2) \\
    &= 2 \sum_{t=1}^T \log \Bigl( \frac{\det \bU_t}{\det \bU_{t-1}}  \Bigr)
    = 2 \log \Big( \frac{\det \bU_T}{ \det \bU_0} \Big)
\end{align*}
where the first equality follows from $\det(\bA + \bv\bv^{\top}) = \det(\bA)(1 + \bv^{\top} \bA^{-1} \bv)$, obtained by an application of Sylvester's determinant identity: $\det(\bI + \bA\bB) = \det(\bI + \bB\bA)$. 
\end{proof}

\subsection{Proof of Lemma \ref{lem:logdet:pert}}

\begin{proof}[Proof of Lemma~\ref{lem:logdet:pert}]
Note that 
\begin{align*}
\opnorm{\ba_i \ba_i^{\top} - \bar{\ba}_i \bar{\ba}_i^{\top}} 
&= \opnorm{\ba_i (\ba_i - \bar{\ba}_i)^{\top}  - (\bar{\ba}_i - \ba_i) \bar{\ba}_i^{\top}}  \\
&\le (\norm{\ba_i}  + \norm{\bar{\ba}_i}) \norm{\ba_i - \ba_i} \le (2C + \eps) \eps \le 3 C\eps
\end{align*}
Let $\bV = \bI_p + \bA\bA^{\top}$ and $\bar{\bV} = \bI_p + \bAb \bAb^{\top}$. We have
\[
\opnorm{\bV - \bar{\bV}} \le \sum_{i=1}^n \opnorm{\ba_i \ba_i^{\top} - \bar{\ba}_i \bar{\ba}_i^{\top}}  \le n  \cdot 3 C \eps
\]
Write $\lambda_i(\bV)$ for the $i$th eigenvalue of matrix $\bV$. By  Weyl's inequality
$|\lambda_i(\bV) - \lambda_i( \bar{\bV} )| \le  3 C n \eps$.
Then, 
\begin{align*}
    \log \det (\bV) = \sum_{i=1}^p \log \lambda_i(\bV) 
    &\le \sum_i \log\bigl(\lambda_i(\bar{\bV})  + 3 C n \eps \bigr) 
    \\
    &= \sum_i \log(\lambda_i(\bar{\bV}) ) +\sum_i \log\Bigl(1  + \frac{3 C n \eps }{\lambda_i(\bar{\bV})}\Bigr) \\
    &\le \log \det(\bar{\bV}) + p \log(1+ 3 C n \eps)
\end{align*}
using $\lambda_i(\bar{\bV}) \ge 1$. This proves one of the bounds. 

For the second bound, let $\bW = \bI_n + \bA^{\top} \bA$ and $\bar{\bW} = \bI_n + \bAb^{\top} \bAb$. Then, then by concavity of the $\bX \mapsto \log \det(\bX)$ and the fact that its derivative is $\bX^{-1}$ over symmetric matrices, we have
\[
\log\det(\bX +\bDelta) - \log \deg(\bX) \le \tr(\bX^{-1} \bDelta) \le \fnorm{\bX^{-1}} \fnorm{\bDelta}.
\]
Let $\bDelta = \bW - \bar{\bW}$. We have 
$
    |\bDelta_{ij}| = | \ip{\ba_i, \ba_j} - \ip{\bar{\ba}_i, \bar{\ba}_j}| \le 3C \eps
$, 
hence $\fnorm{\bDelta} \le 3 C n \eps $
Then,
\begin{align*}
    \log \det(\bV) - \log\det(\bar{\bW}) &\stackrel{(a)}{=} \log \det(\bW) -  \log\det(\bar{\bW})\\
    &\le \tr(\bar{\bW}^{-1} \bDelta) \\
    &\le \sqrt{n} \opnorm{\bar{\bW}^{-1}} \fnorm{\bDelta} 
    \stackrel{(b)}{\le} \sqrt n \cdot 3 C n \eps.
\end{align*}
where (a) is by Sylvester's identity and (b) uses the fact that $\bar{\bW} \succeq \bI_n$, hence $\bar{\bW}^{-1} \preceq \bI_n$ giving $\opnorm{\bar{\bW}^{-1}} \le 1$. 
\end{proof}


\section{Technical Lemmas}
\label{appendix: technical lemmas}

In this Section, we provides the Proof for Lemmas in Appendix \ref{appendix: proof for lemmas in regret analysis} and other Technical Lemmas supporting the proofs. Most technical Lemmas are related to NTK and optimization in depp learning theory, mainly modified from the GNN helper Lemmas in \citep{kassraie2022graph} and technical Lemmas in \cite{zhou2020neural, vakili2021optimal}. 

\subsection{Notations for MLP}

Recall our GNN with one layer of linear graph convolution and a MLP:
\begin{equation}
\begin{aligned}
    f^{(1)}(\bh_i^G) &= \bW^{(1)} \bh_i^G, \quad i \in [N], \\
    f^{(l)}(\bh_i^G) &= \frac{1}{\sqrt m}
    \bW^{(l)} \relu(f^{(l-1)}(\bh_i^G)), \quad 2\leq l \leq L,\\
    \fmlp(\bh_i^G;\btheta) &= f^{(L)}(\bh_i^G) \\
    \fgnn(G; \btheta) &= \frac{1}{N} \sum_{i=1}^{N} \fmlp(\bh_i^G;\btheta).
\end{aligned}
\end{equation} 
We denote the gradients for GNN and associated MLP as 
\begin{equation}
\begin{aligned}
    \bcg(G;\btheta) &:= \nabla_{\btheta} \fgnn(G; \btheta) \\
    \gmlp(\cdot;\btheta) &:= \nabla_{\btheta} \fmlp(\cdot;\btheta)
\end{aligned}
\end{equation}
and the connection between gradients for the MLP and the gradient for the whole GNN is
\begin{equation}
    \bcg(G;\btheta) = \frac{1}{N} \sum_{i=1}^N \gmlp(\bh_i^G;\btheta)
\end{equation}

Similarly, we define a tangent kernel for the a MLP as 
\begin{equation}
    \tilde{k}_{MLP}(\bx, \bx^{\prime}) := \bcg_{MLP}(G;\btheta_0)^{\top} \bcg_{MLP}(G^{\prime};\btheta_0)
\end{equation}
for any MLP inputs $\bx$, $\bx^{\prime}$ and the associated neural tangent kernel $k_{MLP}(\bx, \bx^{\prime})$ is defined as limiting kernel of $\tilde{k}_{MLP}(\bx, \bx^{\prime})/m$:
\begin{equation}
    k_{MLP}(\bx, \bx^{\prime}) := \lim_{m \to \infty} \tilde{k}_{MLP}(\bx, \bx^{\prime})/m.
\end{equation}
By the connection between $\fgnn$ and $\fmlp$, we have
\begin{equation}
    k(G, G^{\prime}) = \frac{1}{N^2} \sum_{i \in \mV(G)} \sum_{j \in \mV(G^{\prime})} \kmlp(\bh^G_i, \bh^{G^{\prime}}_j).
\end{equation} 

\subsection{Proof for Lemmas in Appendix \ref{appendix: proof for lemmas in regret analysis}}

\begin{proof}[Proof of Lemma~\ref{lemma: taylor approximation of gnn}]
By Lemma~\ref{lemma: bound for gradient norm}, with probability at least $1-\delta \in (0,1)$
\begin{equation}
\begin{aligned}
    &|\fgnn(G; \btheta_t^{(J)}) - \fgnn(G; \btheta_0) - \langle \bcg(G; \btheta_0), \btheta_t^{(J)} - \btheta_0 \rangle |  \\
    &\qquad \qquad  \leq 
    \frac{1}{N} \sum_{j \in \mV(G)} 
    |\fmlp(\bh_j^G; \btheta_t^{(J)}) - \fmlp(\bh_j^G; \btheta_0) - \langle \gmlp(\bh_j^G; \btheta_0), \btheta_t^{(J)} - \btheta_0 \rangle | \\
    &\qquad \qquad  \leq
    C_1 \tau^{4/3} L^3 \sqrt{m \log(m)} \\
    &\qquad \qquad  \leq
    C_1 (\tilde{C} \sqrt{(R^2 + \sigma_{\eps}^2)/m \lambda})^{4/3}
    L^3 \sqrt{m \log(m)}
\end{aligned}
\end{equation}
where the last inequality is from the choice of $\tau = \tilde{C} \sqrt{(R^2 +  \sigma_{\eps}^2)/m \lambda}$ such that $\norm{\btheta_t^{(J)} - \btheta_0}_2 \leq \tau$. Since $\tau \propto 1/\sqrt{m}$, it can be verified that technical condition \eqref{eq: technical condition for tau} in Lemma~\ref{lemma: bound for gradient norm} is satisfied when $m$ is large. Therefore, set $C_2 = C_1 \tilde{C}^{4/3}$,
\begin{equation}
\begin{aligned}
    |\fgnn(G; \btheta_t^{(J)}) - \fgnn(G; \btheta_0) - \langle \bcg(G; \btheta_0), \btheta_t^{(J)} - \btheta_0 \rangle | 
    \leq
    C_2 L^3 (\frac{R^2 + \sigma_{\eps}^2}{m \lambda})^{2/3} \sqrt{m \log(m)}.
\end{aligned}
\end{equation}
\end{proof}

\begin{proof}[Proof of Lemma~\ref{lemma: bound for gradient error inner product}]
In this proof, set $\delta_1=\delta_2=\delta/2$ where $\delta \in (0,1)$ is an arbitrary small real value. We introduce $\{\tilde{\btheta}_t^{(j)}\}_{j=1}^J$ be the gradient descent update sequence of the following proximal optimization  \citep{kassraie2022graph}:
\begin{equation}
\begin{aligned}
    \min_{\btheta} \frac{1}{2t} \sum_{i=1}^t (\langle \bcg(G_i;\btheta_0), \btheta - \btheta_0 \rangle - y_i)^2 + \frac{m\lambda}{2} \norm{\btheta}_2^2
\end{aligned}
\end{equation}
and $\{\btheta_t^{(j)}\}_{j=1}^J$ be the gradient descent update sequence of parameters of our primary optimization \eqref{eq: minimize l2 loss}. In GNN training step in algorithms, we let $\btheta_t:=\btheta_{t}^{(J)}$. Recall that $\bar{\bU}_t = \lambda \bI + \bar{\bG}_t \bar{\bG}_t^{\top} /m$. By Lemma~\ref{lemma: approximation from gntk}, with probability at least $1-\delta_1 \in (0,1)$,  $\bar{\bU}_t \preccurlyeq (\lambda+ \frac{3}{2}\rho_{\max})\bI$.  Therefore,
\begin{equation}
\begin{aligned}
    |\langle \bcg(G;\btheta_0), \btheta_{t}-\btheta_0 - \bar{\bU}_{t}^{-1}\bar{\bG}_{t} \by_{t}/m \rangle| 
    & \leq
    \norm{\bcg(G;\btheta_0)}_{\bar{\bU}_t^{-1}}
    \norm{\btheta_{t}-\btheta_0 - \bar{\bU}_{t}^{-1}\bar{\bG}_{t} \by_{t}/m}_{\bar{\bU}_t}
    \\
    & \leq
    \sqrt{\lambda+3\rho_{\max}/2}
    \norm{\bcg(G;\btheta_0)}_{\bar{\bU}_t^{-1}}
    \norm{\btheta_{t}-\btheta_0 - \bar{\bU}_{t}^{-1}\bar{\bG}_{t} \by_{t}/m}_{2}
    \\
    & \leq 
    \sqrt{\lambda+3\rho_{\max}/2}
    \norm{\bcg(G;\btheta_0)}_{\bar{\bU}_t^{-1}}
    (\norm{\tilde{\btheta}_{t}^{(J)}-\btheta_0 - \bar{\bU}_{t}^{-1}\bar{\bG}_{t} \by_{t}/m}_{2} \\
    &\qquad \qquad  +
    \norm{\tilde{\btheta}_{t}^{(J)}-\btheta_{t}}_{2}
\end{aligned}
\end{equation}
By Lemma~\ref{lemma: parameter bound for proximal optimization } and Lemma~\ref{lemma: parameter bound for primary optimization }, with probability at least $1-\delta_2 \in (0,1)$, for some constants $C_1$ and $C_2$, we have
\begin{equation}
\begin{aligned}
    & |\langle \bcg(G;\btheta_0), \btheta_{t}-\btheta_0 - \bar{\bU}_{t}^{-1}\bar{\bG}_{t} \by_{t}/m \rangle| \\
    \leq &
    \sqrt{\lambda+3\rho_{\max}/2}
    \norm{\bcg(G;\btheta_0)}_{\bar{\bU}_t^{-1}}
    \bigg(
    C_1 (2-\eta m \lambda)^J  \sqrt{\frac{\sigma_{\eps}^2 + R^2}{m \lambda }} + 
    \norm{\tilde{\btheta}_{t}^{(J)}-\btheta_{t}}_{2}
    \bigg) 
    \quad \text{ (by Lemma~\ref{lemma: parameter bound for proximal optimization })} \\
    \leq & 
    \sqrt{\lambda+3\rho_{\max}/2}
    \norm{\bcg(G;\btheta_0)}_{\bar{\bU}_t^{-1}} \times \bigg(
    C_1 (2-\eta m \lambda)^J  \sqrt{\frac{\sigma_{\eps}^2 + R^2}{m \lambda }} + 
    C_2 \sqrt{\frac{\sigma_{\eps}^2 + R^2}{m \lambda }}
    \bigg)
    \quad \text{ (by Lemma~\ref{lemma: parameter bound for primary optimization })} \\
    = & 
    \sqrt{m(1+\frac{3 \rho_{\max} }{2 \lambda})}
    (C_1 (2-\eta m \lambda)^J + C_2 ) \sqrt{\frac{\sigma_{\eps}^2 + R^2}{m \lambda }} \bar{\sigma}_{t+1}(G)
\end{aligned}
\end{equation}
The last equality is obtained from the definition of $\bar{\sigma}_{t+1}^2(G)$, which is $\bar{\sigma}_{t+1}^2(G)= \lambda \bcg^{\top}(G;\btheta_0) \bar{\bU}_{t}^{-1} \bcg(G;\btheta_0)/m = \frac{\lambda}{m}\norm{\bcg(G;\btheta_0)}_{\bar{\bU}_t^{-1}}^2$. Now we let $\tilde{C} = \sqrt{m(1+\frac{3 \rho_{\max} }{2 \lambda})}(C_1 (2-\eta m \lambda)^J + C_2 ) \sqrt{\frac{\sigma_{\eps}^2 + R^2}{m \lambda }} $. Note that this constant $\tilde{C} = \mO(1)$ with respect to $m$ since $\eta = \mO(m^{-1})$. Then we have the desired result:
\begin{equation}
\begin{aligned}
    |\langle \bcg(G;\btheta_0), \btheta_{t}-\btheta_0 - \bar{\bU}_{t}^{-1}\bar{\bG}_{t} \by_{t}/m \rangle| \leq \tilde{C} \bar{\sigma}_{t+1}(G)
\end{aligned}
\end{equation}
where $\tilde{C} = (C_1 (2-\eta m \lambda)^J + C_2 ) \sqrt{\frac{\sigma_{\eps}^2 + R^2}{\lambda } (1+\frac{3 \rho_{\max} }{2 \lambda})} $ with $C_1 = \mO(1)$ and $C_2 = \mO(\lambda^{1/3})$.
\end{proof}

\begin{proof}[Proof of Lemma~\ref{lem:mu:is:linear}]
See Appendix~\ref{appendix: lemmas for gntk}.
\end{proof}

\begin{proof}[Proof of Lemma~\ref{lemma: high probability bound for sigma difference}]
Define function $\psi_{\lambda}$ for vectors $\{\bv, \ba_1, ..., \ba_{t-1}\}$ as followed:
\begin{equation}
    \psi_{\lambda}(\bv, \ba_1, ..., \ba_{t-1}) := \sqrt{\bv^{\top} (\lambda \bI + \sum_{i=1}^{t-1}\ba_i \ba_i^{\top})^{-1}\bv} ,
\end{equation}
and denote the gradients for $\psi_{\lambda}$ as 
\begin{equation}
\begin{aligned}
    \nabla_{0} \psi_{\lambda} &:= \nabla_{\bv} \psi_{\lambda}(\bv, \ba_1, ..., \ba_{t-1}) \\
    \nabla_{i} \psi_{\lambda} &:= \nabla_{\ba_i} \psi_{\lambda}(\bv, \ba_1, ..., \ba_{t-1}), \forall i \in [t-1]. 
\end{aligned}
\end{equation}
By setting $\bA = (\lambda \bI + \sum_{i=1}^{t-1} \ba_i \ba_i^{\top})^{-1} \preccurlyeq \frac{1}{\lambda} \bI$ with eigendecomposition $\bA = \bV \bD \bV^{\top}$. The gradients are bounded as followed
\begin{equation}
\label{eq in lemma proof: gradient of sigma norm bound}
\begin{aligned}
    \norm{\nabla_{0} \psi_{\lambda}}_2 
    &= \frac{\norm{\bA \bv}_2}{\sqrt{\bv^{\top} \bA \bv}}
    = \sqrt{\frac{\bv^{\top} \bA^2 \bv}{\bv^{\top} \bA \bv}}
    \leq \sqrt{\lambda_{max}(\bA)} 
    \leq 1/\sqrt{\lambda}
    \\
    \norm{\nabla_{i} \psi_{\lambda}}_2
    &= \frac{\norm{\bA \bv\bv^{\top} \bA \ba_i}_2}{\sqrt{\bv^{\top} \bA \bv}} 
    \leq \norm{\ba_i}_2 \frac{\bv^{\top} \bA^2 \bv}{\sqrt{\bv^{\top} 
    \bA \bv}} 
    \leq 
    \norm{\ba_i}_2 \norm{\bv}_2 / \lambda
\end{aligned}
\end{equation}

We can express $\bar{\sigma}_{t}(G)$ and $\sigma_{t}(G)$ by $\psi_{\lambda}$:
\begin{equation}
\begin{aligned}
    \bar{\sigma}_{t}(G) &= \psi_{\lambda}(\frac{\bcg(G;\btheta_{t-1})}{\sqrt{m}}, \frac{\bcg(G_1;\btheta_{1})}{\sqrt{m}},..., \frac{\bcg(G_{t-1};\btheta_{t-1})}{\sqrt{m}}) \\
    \sigma_{t}(G) &= \psi_{\lambda}(\frac{\bcg(G;\btheta_{0})}{\sqrt{m}}, \frac{\bcg(G_1;\btheta_{0})}{\sqrt{m}},..., \frac{\bcg(G_{t-1};\btheta_{0})}{\sqrt{m}}).
\end{aligned}
\end{equation}
From Lemma~\ref{lemma: bound for gradient norm}, there exists positive constants such that the gradients and gradient differences are bounded with high probability, which indicates for some constant $C_1$ with probability greater than $1-\delta$, 
\begin{equation}
\begin{aligned}
\label{eq in lemma proof: bound for gradient}
    \norm{\bcg(G;\btheta)}_2 = \norm{\frac{1}{N} \sum_{j \in \mV(G)} \gmlp(\bh_j^G; \btheta)}_2 \leq C_1 \sqrt{m L}
\end{aligned}
\end{equation}
Note that $\psi_{\lambda}$ is Lipschitz continuous, then with high probability, we have
\begin{equation}
\begin{aligned}
    |\bar{\sigma}_{t}(G)-\sigma_{t}(G)| 
    &= 
    |\psi_{\lambda}(\frac{\bcg(G;\btheta_{t-1})}{\sqrt{m}}, \frac{\bcg(G_1;\btheta_{1})}{\sqrt{m}},..., \frac{\bcg(G_{t-1};\btheta_{t-1})}{\sqrt{m}}) - \psi_{\lambda}(\frac{\bcg(G;\btheta_{0})}{\sqrt{m}}, \frac{\bcg(G_1;\btheta_{0})}{\sqrt{m}},..., \frac{\bcg(G_{t-1};\btheta_{0})}{\sqrt{m}})| \\
    & \leq 
    \sup \{\norm{\nabla_{0} \psi_{\lambda}}_2 \}
    \norm{\frac{\bcg(G;\btheta_{t-1})}{\sqrt{m}} - \frac{\bcg(G;\btheta_{0})}{\sqrt{m}}}_2 +
    \sum_{i=1}^{t-1} \sup \{\norm{\nabla_{i} \psi_{\lambda}}_2 \}
    \norm{\frac{\bcg(G_{i};\btheta_{i})}{\sqrt{m}} - \frac{\bcg(G_i;\btheta_{0})}{\sqrt{m}}}_2 \\
    & \leq 
    \frac{1}{\sqrt{\lambda}}
    \norm{\frac{\bcg(G;\btheta_{t-1})}{\sqrt{m}} - \frac{\bcg(G;\btheta_{0})}{\sqrt{m}}}_2 +
    \frac{C_1^2 L}{\lambda} 
    \sum_{i=1}^{t-1}
    \norm{\frac{\bcg(G_{i};\btheta_{i})}{\sqrt{m}} - \frac{\bcg(G_i;\btheta_{0})}{\sqrt{m}}}_2 
    \text{( by \eqref{eq in lemma proof: gradient of sigma norm bound} and \eqref{eq in lemma proof: bound for gradient})} \\
    & \leq 
    C_2 \sqrt{\log(m)} \tau^{1/3} L^3 \norm{\bcg(G;\btheta_0)}_2 /\sqrt{m}
    (\frac{1}{\sqrt{\lambda}} + \frac{C_1^2 L t}{\lambda}) 
    \quad \text{(by Lemma~\ref{lemma: bound for gradient norm})} \\
    & \leq
    C_1 C_2 \sqrt{\log(m)} \tau^{1/3} L^{7/2} 
    (\frac{1}{\sqrt{\lambda}} + \frac{C_1^2 L t}{\lambda}) 
    \quad \text{(by \eqref{eq in lemma proof: bound for gradient})}
\end{aligned}
\end{equation}
Therefore, if $\lambda \leq C_1^4 L^2 t^2$ and let $\tau = \tilde{C} \sqrt{\frac{R^2 + \sigma_{\eps}^2}{m \lambda}}$, $ C_3 = 2 \tilde{C} C_2 C_1^3 $,
\begin{equation}
\begin{aligned}
    |\bar{\sigma}_{t}(G)-\sigma_{t}(G)| 
    \leq 
    C_3 t
    \lambda^{-7/6} L^{9/2} (R^2 + \sigma_{\eps}^2)^{1/6} m^{-1/6} \sqrt{\log(m)}
\end{aligned}
\end{equation}
\end{proof}

\subsection{Lemmas for GNN training }

\begin{lemma} [Parameter Bound for Primary Optimization]
\label{lemma: parameter bound for primary optimization }
Let $\{\btheta_t^{(j)}\}_{j=1}^J$ be the gradient descent update sequence of parameters of the optimization \eqref{eq: minimize l2 loss} which is,
\begin{equation}
\begin{aligned}
    \min_{\btheta} \frac{1}{2t}\sum_{i=1}^t (\fgnn(G_i; \btheta) - y_i)^2 +  \frac{m\lambda}{2} \norm{\btheta}_2^2
\end{aligned}
\end{equation}
then if $m \ge poly(R, \sigma_{\eps}, L, \lambda^{-1} , \log(\frac{N}{\delta}))$ and learning rate $\eta \leq (\tilde{C} mL+m\lambda)^{-1}$ for some constant $\tilde{C}$. Then for a constant $C = \mO(\lambda^{1/3})$ which is independent of $m$ and $t$, with probability at least $1-\delta$
\begin{equation}
\begin{aligned}
    \norm{\btheta_{t}^{(j)} - \tilde{\btheta}_t^{(j)}}_2 \leq 
    C \sqrt{\frac{R^2 + \sigma_{\eps}^2}{m \lambda}}
\end{aligned}
\end{equation}
where $\{\tilde{\btheta}_t^{(j)}\}_{j=1}^J$ be the gradient descent update sequence of parameters of the proximal optimization with loss function $\frac{1}{2t} \sum_{i=1}^t (\langle \bcg(G_i;\btheta_0), \btheta - \btheta_0 \rangle - y_i)^2 + \frac{m\lambda}{2} \norm{\btheta}_2^2$. Both optimization have the same initialization at $\tilde{\btheta}_t^{(0)} = \btheta_t^{(0)} = \btheta_0$ and same learning rate $\eta$.
\end{lemma}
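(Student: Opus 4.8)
The plan is to compare the two gradient-descent trajectories iterate by iterate, exploiting that they share the same initialization $\btheta_0$, the same step size $\eta$, and the same regularizer $\frac{m\lambda}{2}\norm{\btheta}_2^2$, and that the proximal (linearized) objective $\tilde L$ is an exact quadratic. Writing the recursions $\btheta_t^{(j+1)} = \btheta_t^{(j)} - \eta\nabla L(\btheta_t^{(j)})$ and $\tilde\btheta_t^{(j+1)} = \tilde\btheta_t^{(j)} - \eta\nabla\tilde L(\tilde\btheta_t^{(j)})$, where $L$ is the primary loss in \eqref{eq: minimize l2 loss}, I would set $\Delta^{(j)} := \btheta_t^{(j)} - \tilde\btheta_t^{(j)}$ and, after adding and subtracting $\nabla\tilde L(\btheta_t^{(j)})$, arrive at the affine recursion $\Delta^{(j+1)} = (\bI_p - \eta\bH)\Delta^{(j)} - \eta\,\br^{(j)}$. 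Here $\bH = \frac1t\sum_i \bcg(G_i;\btheta_0)\bcg(G_i;\btheta_0)^\top + m\lambda\bI_p$ is the constant Hessian of $\tilde L$, and $\br^{(j)} := \nabla L(\btheta_t^{(j)}) - \nabla\tilde L(\btheta_t^{(j)})$ is the gradient mismatch between the two objectives evaluated at the common point $\btheta_t^{(j)}$.

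The contraction comes from the regularizer. Since $\bH\succeq m\lambda\bI_p$ and, by the gradient-norm bound near initialization (Lemma~\ref{lemma: bound for gradient norm}), $\bH\preceq(\tilde C mL + m\lambda)\bI_p$, the step-size hypothesis $\eta\le(\tilde C mL + m\lambda)^{-1}$ gives $0\preceq\bI_p - \eta\bH\preceq(1-\eta m\lambda)\bI_p$. Unrolling the recursion from $\Delta^{(0)}=\bZeros$ then yields $\norm{\Delta^{(j)}}_2 \le \eta\sum_{k<j}(1-\eta m\lambda)^{j-1-k}\norm{\br^{(k)}}_2 \le (m\lambda)^{-1}\max_{k<j}\norm{\br^{(k)}}_2$, so the whole argument reduces to bounding the gradient mismatch $\norm{\br^{(k)}}_2$ at each iterate.

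To control $\br^{(k)}$, I would split each residual-times-gradient term into two pieces: $(\fgnn(G_i;\btheta)-y_i)\bigl(\bcg(G_i;\btheta)-\bcg(G_i;\btheta_0)\bigr)$, handled by the gradient-stability estimate $\norm{\bcg(G_i;\btheta)-\bcg(G_i;\btheta_0)}_2\lesssim \poly(L)\,\tau^{1/3}m^{1/2}\sqrt{\log m}$, and $\bigl(\fgnn(G_i;\btheta)-\fgnn(G_i;\btheta_0)-\langle\bcg(G_i;\btheta_0),\btheta-\btheta_0\rangle\bigr)\bcg(G_i;\btheta_0)$, handled by the Taylor bound of Lemma~\ref{lemma: taylor approximation of gnn} together with $\fgnn(G_i;\btheta_0)=0$. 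Using $\norm{\bcg}_2\lesssim\sqrt{mL}$ and the residual control near initialization, each piece scales like $\sqrt{m\lambda(R^2+\sigma_\eps^2)}$ up to a $\poly(L)\,m^{-1/6}\sqrt{\log m}$ factor; dividing by $m\lambda$ as above produces the claimed $\mO(\lambda^{1/3})\sqrt{(R^2+\sigma_\eps^2)/(m\lambda)}$ bound once $m$ is polynomially large.

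The main obstacle is that every NTK-type estimate invoked above (Taylor approximation, gradient norm, gradient stability) is valid only inside the ball $\{\norm{\btheta-\btheta_0}_2\le\tau\}$ with $\tau=\tilde C\sqrt{(R^2+\sigma_\eps^2)/(m\lambda)}$, yet I apply them precisely at the iterates $\btheta_t^{(k)}$. This forces a bootstrapping induction: I would first establish (or cite a companion optimization lemma) that both trajectories remain in this ball for all $j\le J$, using the $m\lambda$-strong convexity to keep $\norm{\btheta_t^{(j)}-\btheta_0}_2\le\tau$, and only then run the difference recursion so that the per-step bounds on $\br^{(k)}$ are justified. Carefully tracking the powers of $m$ and $\lambda$ through this induction, and verifying that the smallness condition on $\tau$ required by the gradient lemmas holds for large $m$, is the most delicate part of the bookkeeping.
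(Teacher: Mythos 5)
Your proposal follows essentially the same route as the paper's proof: the paper likewise rewrites the iterate difference as a contraction $(\bI_p - \eta(m\lambda\bI_p + \bar{\bG}_t\bar{\bG}_t^{\top}/t))$ applied to the previous difference plus exactly your two gradient-mismatch terms (the gradient drift $\bG_t^{(j)}-\bar{\bG}_t$ times the residual, and the Taylor remainder times $\bar{\bG}_t$), bounds each with the same NTK lemmas, and unrolls the geometric recursion to pick up the $1/(\eta m\lambda)$ factor. The confinement of the iterates to the $\tau$-ball with $\tau=\tilde C\sqrt{(R^2+\sigma_{\eps}^2)/(m\lambda)}$ is handled there just as you anticipate, by invoking the companion proximal-optimization and prediction-error lemmas.
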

\begin{proof} 
In this proof, set $\delta_1=\delta_2=\delta/2$ where $\delta \in (0,1)$ is an arbitrary small real value. Define $\bG_t^{(j)} := [\bcg(G_1;\btheta_t^{(j)}),...,\bcg(G_t;\btheta_t^{(j)}))] \in \BR^{p \times t}$ as the $j$-th updates in our primary optimzation with loss \eqref{eq: minimize l2 loss} at round $t$. Also define $\bbf_{gnn, t}^{(j)} := [\fgnn(G_1;\btheta_t^{(j)}), ..., \fgnn(G_t;\btheta_t^{(j)})]^{\top} \in \BR^{t \times 1}$. The gradient descent updates for sequences $\{\btheta_t^{(j)}\}_{j=1}^J$ and $\{\tilde{\btheta}_t^{(j)}\}_{j=1}^J$ are
\begin{equation}
\begin{aligned}
    \btheta_{t}^{(j+1)} &= \btheta_{t}^{(j)} - 
    \eta \bigg( \frac{1}{t} [\bG_t^{(j)}]^{\top}(\bbf_{gnn, t}^{(j)}-\by_t) + m \lambda \btheta_{t}^{(j)} \bigg)\\
    \tilde{\btheta}_t^{(j+1)} &= \tilde{\btheta}_t^{(j)} - \eta \bigg( \frac{1}{t} \bar{\bG}_t^{\top} (\bar{\bG}_t(\tilde{\btheta}_t^{(j)}-\btheta_0) - \by_t) + m \lambda \tilde{\btheta}_t^{(j)} \bigg)
\end{aligned}
\end{equation}
Therefore,
\begin{equation}
\begin{aligned}
    & \norm{\btheta_{t}^{(j+1)} - \tilde{\btheta}_t^{(j+1)}}_2 \\
    =
    & \norm{ (1-\eta m\lambda)(\btheta_{t}^{(j)} - \tilde{\btheta}_t^{(j)}) - \frac{\eta}{t} [\bG_t^{(j)}]^{\top}(\bbf_{gnn, t}^{(j)}-\by_t) + \frac{\eta}{t} \bar{\bG}_t^{\top} (\bar{\bG}_t(\tilde{\btheta}_t^{(j)}-\btheta_0) - \by_t) }_2 \\
    = 
    & \norm{(1-\eta m\lambda)(\btheta_{t}^{(j)} - \tilde{\btheta}_t^{(j)}) - \frac{\eta}{t} (\bG_t^{(j)} - \bar{\bG}_t)^{\top}(\bbf_{gnn, t}^{(j)}-\by_t) -\frac{\eta}{t}  \bar{\bG}_t^{\top} (\bbf_{gnn, t}^{(j)} - \bar{\bG}_t(\tilde{\btheta}_t^{(j)}-\btheta_0))}_2 \\
    = 
    & \norm{(1-\eta m\lambda)(\btheta_{t}^{(j)} - \tilde{\btheta}_t^{(j)}) - \frac{\eta}{t} (\bG_t^{(j)} - \bar{\bG}_t)^{\top}(\bbf_{gnn, t}^{(j)}-\by_t) -\frac{\eta}{t}  \bar{\bG}_t^{\top} (\bbf_{gnn, t}^{(j)} - \bar{\bG}_t(\btheta_{t}^{(j)}-\btheta_0) + \bar{\bG}_t(\btheta_{t}^{(j)}-\tilde{\btheta}_t^{(j)}))}_2 \\
    = 
    & \norm{( \bI - \eta(m \lambda \bI + \bar{\bG}_t^{\top}\bar{\bG}_t/t))(\btheta_{t}^{(j)} - \tilde{\btheta}_t^{(j)}) - \frac{\eta}{t} (\bG_t^{(j)} - \bar{\bG}_t)^{\top}(\bbf_{gnn, t}^{(j)}-\by_t) - \frac{\eta}{t}  \bar{\bG}_t^{\top} (\bbf_{gnn, t}^{(j)} - \bar{\bG}_t(\btheta_{t}^{(j)}-\btheta_0))}_2 \\
    \leq &
    \underbrace{\norm{( \bI - \eta(m \lambda \bI + \bar{\bG}_t^{\top}\bar{\bG}_t/t))}_2
    \norm{\btheta_{t}^{(j)} - \tilde{\btheta}_t^{(j)}}_2}_{I_1}
    +
    \underbrace{\frac{\eta}{t}
    \norm{\bar{\bG}_t}_2
    \norm{\bbf_{gnn, t}^{(j)} - \bar{\bG}_t(\btheta_{t}^{(j)}-\btheta_0)}_2}_{I_2}
    + 
    \underbrace{\frac{\eta}{t}
    \norm{\bG_t^{(j)} - \bar{\bG}_t}_2 
    \norm{\bbf_{gnn, t}^{(j)}-\by_t}_2}_{I_3}
\end{aligned}
\end{equation}
For $I_1$, due to $\bar{\bG}_t^{\top} \bar{\bG}_t /t \succcurlyeq \bZeros$, we have 
\begin{equation}
\begin{aligned}
    I_1 
    = \norm{( \bI - \eta(m \lambda \bI + \bar{\bG}_t^{\top}\bar{\bG}_t/t))}_2
    \norm{\btheta_{t}^{(j)} - \tilde{\btheta}_t^{(j)}}_2 
    \leq (1-\eta m \lambda)
    \norm{\btheta_{t}^{(j)} - \tilde{\btheta}_t^{(j)}}_2
\end{aligned}
\end{equation}
For $I_2$, by Lemma~\ref{lemma: gradient descent norm bound},
set $\tau = \tilde{C} \sqrt{(R^2 + \sigma_{\eps}^2)/m \lambda}$. Since $\tau \propto 1/\sqrt{m}$, it can be verified that technical condition \eqref{eq: technical condition for tau} in Lemma~\ref{lemma: bound for gradient norm} is satisfied when $m$ is large. Then with probability at least $1-\delta_1 \in (0,1)$, 
\begin{equation}
\begin{aligned}
    I_2 = 
    \frac{\eta}{t}
    \norm{\bar{\bG}_t}_2
    \norm{\bbf_{gnn, t}^{(j)} - \bar{\bG}_t(\btheta_{t}^{(j)}-\btheta_0)}_2 
    \leq
    \eta C_1 (\tilde{C} \frac{R^2 + \sigma_{\eps}^2}{m \lambda})^{2/3} L^{7/2} m \sqrt{\log(m)}
\end{aligned}
\end{equation}
For $I_3$, by Lemma~\ref{lemma: prediction error bound in gradient descent} and Lemma~\ref{lemma: gradient descent norm bound}, and Lemma~\ref{lemma: bound for gradient norm}, with probability at least $1-\delta_2 \in (0,1)$, 
\begin{equation}
\begin{aligned}
    I_3 &=
    \frac{\eta}{t}
    \norm{\bG_t^{(j)} - \bar{\bG}_t}_2 
    \norm{\bbf_{gnn, t}^{(j)}-\by_t}_2
    \leq 
    \eta
    C_2 (\tilde{C} \frac{R^2 + \sigma_{\eps}^2}{m \lambda})^{1/6} L^{7/2} \sqrt{ m \log(m)}
    \sqrt{R^2 + \sigma_{\eps}^2}
\end{aligned}
\end{equation}
Put the upper bound for $I_1$, $I_2$., $I_3$ together and set $C_3 = (\lambda^{1/3} C_1 + C_2 )\tilde{C} = \mO(\lambda^{1/3})$, then we get,
\begin{equation}
\begin{aligned}
    \norm{\btheta_{t}^{(j+1)} - \tilde{\btheta}_t^{(j+1)}}_2 
    \leq 
    (1-\eta m \lambda) \norm{\btheta_{t}^{(j)} - \tilde{\btheta}_t^{(j)}}_2
    + C_3 \eta (R^2 + \sigma_{\eps}^2)^{2/3} L^{7/2} m^{1/3} \lambda^{-1/6} \sqrt{\log(m)} 
\end{aligned}
\end{equation}
Therefore, there exists $m=poly(R, \sigma_{\eps}, \lambda , L)$ satisfies that $(R^2 + \sigma_{\eps}^2)^{1/6} L^{7/2} \lambda^{1/3} \sqrt{\log(m)} \leq m^{1/6}$, which indicates
\begin{equation}
\begin{aligned}
    \norm{\btheta_{t}^{(j)} - \tilde{\btheta}_t^{(j)}}_2 
    & \leq 
    C_3  (R^2 + \sigma_{\eps}^2)^{2/3} L^{7/2} m^{-2/3} \lambda^{-1/6} \sqrt{\log(m)} 
    & \leq 
    C_3 \sqrt{\frac{R^2 + \sigma_{\eps}^2}{m \lambda}}
\end{aligned}
\end{equation}
\end{proof}

\begin{lemma} [Prediction Error Bound in Gradient Descent]
\label{lemma: prediction error bound in gradient descent}
Let $\{\btheta_t^{(j)}\}_{j=1}^J$ be the gradient descent update sequence of parameters of the optimization \eqref{eq: minimize l2 loss}. Define $\bbf_{gnn, t}^{(j)} := [\fgnn(G_1;\btheta_t^{(j)}), ..., \fgnn(G_t;\btheta_t^{(j)})]^{\top} \in \BR^{t \times 1}$. Assume $\tau$ is set such that $\norm{\btheta_t^{(j)} - \btheta_0}_2 \leq \tau$ for all $t$ and $\forall j \leq J$. Suppose $m \ge poly(L, \lambda^{-1}, \log(N/\delta))$ where $\delta \in (0,1)$ and learning rate $\eta \leq (\tilde{C} mL+m\lambda)^{-1}$ for some constant $\tilde{C}$, then with probability at least $1-\delta$,
\begin{equation}
\begin{aligned}
    \norm{\bbf_{gnn, t}^{(j)}-\by_t}_2 \leq C \sqrt{t(R^2 + \sigma_{\eps}^2)}
\end{aligned}
\end{equation}
where $C$ is some constant which does not depend on $m$ and $t$.
\end{lemma}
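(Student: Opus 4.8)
The plan is to bound the training residual $\br^{(j)} := \bbf_{gnn,t}^{(j)} - \by_t$ directly from the gradient-descent dynamics, \emph{without} invoking Lemma~\ref{lemma: parameter bound for primary optimization } (doing so would be circular, since that lemma uses the present one through its term $I_3$). First I would linearize the network around $\btheta_0$. Because the iterates obey $\norm{\btheta_t^{(j)}-\btheta_0}_2 \le \tau$ with $\tau \propto m^{-1/2}$, Lemma~\ref{lemma: taylor approximation of gnn} gives coordinatewise $|\fgnn(G_i;\btheta_t^{(j)}) - \ip{\bcg(G_i;\btheta_0),\,\btheta_t^{(j)}-\btheta_0}| \le \eps_{\mathrm{lin}}$ with $\eps_{\mathrm{lin}} = CL^3((R^2+\sigma_\eps^2)/(m\lambda))^{2/3}\sqrt{m\log m}=o(1)$. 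Setting $\hat\br^{(j)} := \bar{\bG}_t^\top(\btheta_t^{(j)}-\btheta_0) - \by_t$ for the residual of the linearized predictor at the \emph{true} iterate, we get $\norm{\br^{(j)}-\hat\br^{(j)}}_2 \le \sqrt{t}\,\eps_{\mathrm{lin}}$, so it suffices to bound $\norm{\hat\br^{(j)}}_2$.

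The structural fact that makes this work is $\bar{\bG}_t^\top\btheta_0 = \bZeros$. Since $\fgnn(G;\cdot)$ is $L$-homogeneous (ReLU, no biases), Euler's identity gives $\ip{\bcg(G;\btheta_0),\btheta_0} = L\,\fgnn(G;\btheta_0) = 0$ under the initialization convention $\fgnn(\cdot;\btheta_0)\equiv 0$. Writing the primary step $\btheta_t^{(j+1)} = \btheta_t^{(j)} - \eta\bigl(\tfrac{1}{t}\bG_t^{(j)}\br^{(j)} + m\lambda\,\btheta_t^{(j)}\bigr)$, applying $\bar{\bG}_t^\top$, and using $\bar{\bG}_t^\top\btheta_t^{(j)} = \hat\br^{(j)} + \by_t$ (a consequence of $\bar{\bG}_t^\top\btheta_0=\bZeros$), I obtain the affine recursion
\begin{equation}
\hat\br^{(j+1)} = \bigl(\bI - \tfrac{\eta}{t}\bM - \eta m\lambda\,\bI\bigr)\hat\br^{(j)} - \eta m\lambda\,\by_t - \eta\,\bxi^{(j)},
\end{equation}
where $\bM := \bar{\bG}_t^\top\bar{\bG}_t$ and $\bxi^{(j)} := \tfrac{1}{t}\bar{\bG}_t^\top\bG_t^{(j)}\br^{(j)} - \tfrac{1}{t}\bM\hat\br^{(j)}$ collects the gradient drift $\bG_t^{(j)}-\bar{\bG}_t$ and the Taylor remainder $\br^{(j)}-\hat\br^{(j)}$.

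Dropping $\bxi^{(j)}$, the symmetric map $\bP := \bI - \tfrac{\eta}{t}\bM - \eta m\lambda\,\bI$ is a contraction with $\opnorm{\bP}\le 1-\eta m\lambda$ for the prescribed step size, and its fixed point $\br^* = -m\lambda(\tfrac{1}{t}\bM + m\lambda\,\bI)^{-1}\by_t$ satisfies $\norm{\br^*}_2 \le \norm{\by_t}_2$ since $\opnorm{(\tfrac{1}{t}\bM+m\lambda\,\bI)^{-1}}\le(m\lambda)^{-1}$. As $\hat\br^{(0)} = -\by_t$ and a contraction cannot overshoot, $\norm{\hat\br^{(j)}-\br^*}_2 \le \norm{\hat\br^{(0)}-\br^*}_2 \le 2\norm{\by_t}_2$, i.e. $\norm{\hat\br^{(j)}}_2 \le 3\norm{\by_t}_2$ in the exact-linear case. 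To restore $\bxi^{(j)}$ I would run a maximal-index bootstrap: assuming $\norm{\br^{(k)}}_2 \le \Gamma$ for $k\le j$, the gradient-norm and gradient-difference bounds of Lemma~\ref{lemma: bound for gradient norm} give $\norm{\bxi^{(k)}}_2 \lesssim \tfrac{1}{t}\opnorm{\bar{\bG}_t}\,\opnorm{\bG_t^{(k)}-\bar{\bG}_t}\,\Gamma + \tfrac{1}{t}\opnorm{\bM}\sqrt{t}\,\eps_{\mathrm{lin}}$, and the geometric sum $\eta\sum_{k}\opnorm{\bP}^{\,j-1-k}\norm{\bxi^{(k)}}_2 \le (m\lambda)^{-1}\max_k\norm{\bxi^{(k)}}_2$ contributes a self-referential coefficient $O(\lambda^{-1}\sqrt{L}\,m^{-1/6}\,\mathrm{polylog}(m))\cdot\Gamma = o(\Gamma)$, which is below $\tfrac12\Gamma$ for the prescribed $m=\poly(\cdots)$ and closes the induction with $\Gamma = O(\norm{\by_t}_2)$.

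Finally, Assumption~\ref{assumption: bounded rkhs norm} yields $|\mu(G)| = |\ip{\mu,k(\cdot,G)}_k| \le R\sqrt{k(G,G)}\lesssim R$, and sub-Gaussian concentration of $\sum_i\eps_i^2$ gives $\norm{\by_t}_2^2 \le \norm{\bmu_t}_2^2 + \norm{\bepsilon_t}_2^2 \lesssim t(R^2+\sigma_\eps^2)$ with probability $\ge 1-\delta$; combined with the previous step this gives $\norm{\br^{(j)}}_2 \le C\sqrt{t(R^2+\sigma_\eps^2)}$. The hardest part will be controlling the accumulated perturbation $\eta\sum_k\bP^{\,j-1-k}\bxi^{(k)}$: one must check that the contraction factor $(1-\eta m\lambda)$ exactly cancels the leading $\eta$ so that the geometric series collapses to $(m\lambda)^{-1}$, and then verify that the gradient drift $\opnorm{\bG_t^{(k)}-\bar{\bG}_t} = O(\tau^{1/3}\mathrm{poly}(L)\sqrt{m\log m})$, after multiplication by $\opnorm{\bar{\bG}_t}=O(\sqrt{tmL})$ and division by $t m\lambda$, leaves a net coefficient of $\Gamma$ that is genuinely below $1$. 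The homogeneity identity $\bar{\bG}_t^\top\btheta_0=\bZeros$ is essential here: it pins the fixed-point residual at the scale $\norm{\by_t}_2$, rather than the $m$-dependent $\sqrt{t m\lambda}\,\norm{\btheta_0}_2$ that a naive loss-monotonicity argument (defeated by the large uncentered penalty $\tfrac{m\lambda}{2}\norm{\btheta_0}_2^2$) would produce.
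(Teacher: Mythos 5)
Your proof is correct, but it takes a genuinely different route from the paper's. The paper proves this lemma by working at the level of the scalar loss $\mL_t$: it sandwiches $\mL_t(\btheta')-\mL_t(\btheta)$ between two strong-convexity/smoothness bounds with the Taylor remainder $\be_t$ as a perturbation, extracts the descent inequality $\mL_t(\btheta_t^{(j+1)})-\mL_t(\btheta_0) \le (1-\eta m\lambda/2)\bigl(\mL_t(\btheta_t^{(j)})-\mL_t(\btheta_0)\bigr) + O\bigl(\eta m\lambda(\sigma_\eps^2+R^2)\bigr)$, unrolls the recursion, and finishes with $\norm{\bbf_{gnn,t}^{(j)}-\by_t}_2^2 \le 2t\,\mL_t(\btheta_t^{(j)})$. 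You instead track the residual \emph{vector} in $\BR^t$ through an affine contraction whose fixed point is exactly the ridge residual $-m\lambda(\tfrac1t\bM+m\lambda\bI)^{-1}\by_t$ of the linearized model, and control the drift/Taylor perturbations by a maximal-index bootstrap. Both arguments are sound modulo the usual NTK bookkeeping, but they trade off differently: the paper's version is the standard Zhou-et-al-style argument and needs only scalar inequalities, yet it must compare against $\mL_t(\btheta_0)$, which contains the uncentered penalty $\tfrac{m\lambda}{2}\norm{\btheta_0}_2^2$; the paper absorbs that term into constants (the constant in Lemma~\ref{lemma: parameter bound for proximal optimization } is allowed to depend on $\norm{\btheta_0}_2$), which is precisely the soft spot you identified. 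Your homogeneity identity $\ip{\bcg(G;\btheta_0),\btheta_0}=L\fgnn(G;\btheta_0)=0$ removes the initialization from the function-space recursion entirely and pins the limiting residual at scale $\norm{\by_t}_2$ unconditionally, at the cost of heavier perturbation bookkeeping. One small correction: the circularity you guarded against is not actually present in the paper --- its proof of this lemma invokes only the \emph{proximal} parameter bound (Lemma~\ref{lemma: parameter bound for proximal optimization }), which does not depend on this lemma; it is Lemma~\ref{lemma: parameter bound for primary optimization } that cites this result, so the dependency chain is acyclic --- but your caution was reasonable and your workaround is self-contained.
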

\begin{proof}
Define $\bbf_t(\btheta)$ and $\bG_t(\btheta)$ as follow
\begin{equation}
\begin{aligned}
    \bbf_t(\btheta) &= [\fgnn(G_1;\btheta), ..., \fgnn(G_t;\btheta)]^{\top} \in \BR^{t \times 1}\\
    \bG_t(\btheta) &= [\bcg(G_1;\btheta),...,\bcg(G_t;\btheta)] \in \BR^{p \times t}
\end{aligned}
\end{equation}
Also define $\mL_t(\btheta):=\frac{1}{2t}\sum_{i=1}^t (\fgnn(G_i; \btheta) - y_i)^2 +  \frac{m\lambda}{2} \norm{\btheta}_2^2$ as the loss function in primary optimization. Note that $\mL_t(\btheta):= \frac{1}{2t}\norm{\bbf_t(\btheta) - \by_t}_2^2 + \frac{m\lambda}{2}\norm{\btheta}_2^2$
First notice that loss function $\mL_t(\btheta)$ is convex due to the strongly convexity of $\norm{\cdot}_2^2/2$. We are going to use the following two-sided bound from strongly convexity in this proof: 
\begin{equation}
\begin{aligned}
    \norm{\by}_2^2/2 - \norm{\bx}_2^2/2 =
    \bx^{\top}(\by - \bx) + \frac{1}{2} \norm{\by - \bx}_2^2 
\end{aligned}
\end{equation}
By $1$-strongly convexity of $\norm{\cdot}_2^2/2$, we have
\vspace{-5mm}
\begin{equation}
\begin{aligned}
    \mL_t(\btheta^{\prime}) - \mL_t(\btheta)
    = & 
    \frac{1}{2t}\bigg( \norm{\bbf_t(\btheta^{\prime}) - \by_t}_2^2 - \norm{\bbf_t(\btheta) - \by_t}_2^2 \bigg) + \frac{m \lambda}{2} \bigg( \norm{\btheta^{\prime}}_2^2 - \norm{\btheta}_2^2 \bigg) \\
    \leq &
    \frac{1}{t}\bigg( ( \bbf_t(\btheta) - \by_t)^{\top}(\bbf_t(\btheta^{\prime}) - \bbf_t(\btheta)) + \frac{1}{2} \norm{\bbf_t(\btheta) - \bbf_t(\btheta^{\prime})}_2^2 \bigg) + m \lambda \bigg( \btheta^{\top} (\btheta^{\prime} - \btheta) + \frac{1}{2} \norm{\btheta-\btheta^{\prime}}_2^2 \bigg).
\end{aligned}
\end{equation}
Define $\be_t:= \bbf_t(\btheta^{\prime}) - \bbf_t(\btheta) - \bG_t^{\top}(\btheta)(\btheta^{\prime}- \btheta)$. By Lemma~\ref{lemma: gradient descent norm bound}, with probability at least $1-\delta_1 \in (0,1)$ 
\vspace{-5mm}
\begin{equation}
\begin{aligned}
\label{upper bound in prediction error bound in gradient descent}
    & \mL_t(\btheta^{\prime}) - \mL_t(\btheta) \\
    \leq &
    \frac{1}{t} 
    ( \bbf_t(\btheta) - \by_t)^{\top}(\bG_t^{\top}(\btheta)(\btheta^{\prime}- \btheta) + \be_t) 
    + \frac{1}{2t} \norm{\bG_t^{\top}(\btheta)(\btheta^{\prime}- \btheta) + \be_t}_2^2 
    + m \lambda \bigg( \btheta^{\top} (\btheta^{} - \btheta) + \frac{1}{2} \norm{\btheta-\btheta^{\prime}}_2^2 \bigg) \\
    =&
    \frac{1}{t} 
    [\bG_t(\btheta)( \bbf_t(\btheta) - \by_t) + m \lambda  \btheta]^{\top}(\btheta^{\prime}- \btheta)  + 
    \frac{1}{t}
    ( \bbf_t(\btheta) - \by_t)^{\top} \be_t +
    \frac{1}{2t} \norm{\bG_t^{\top}(\btheta)(\btheta^{\prime}- \btheta) + \be_t}_2^2 + \frac{m \lambda}{2} \norm{\btheta-\btheta^{\prime}}_2^2  \\
    =& 
    \nabla \mL_t(\btheta)^{\top}(\btheta^{\prime}- \btheta)  + 
    \frac{1}{t} 
    ( \bbf_t(\btheta) - \by_t)^{\top} \be_t 
    + \frac{1}{2t} \norm{\bG_t^{\top}(\btheta)(\btheta^{\prime}- \btheta) + \be_t}_2^2
    + \frac{m \lambda}{2} \norm{\btheta-\btheta^{\prime}}_2^2 \\
    \leq &
    \nabla \mL_t(\btheta)^{\top}(\btheta^{\prime}- \btheta)  + 
    \frac{1}{t} 
    \norm{\bbf_t(\btheta) - \by_t}_2 \norm{\be_t }_2
    + 
    \frac{1}{t} \norm{\bG_t(\btheta)}_2^2 \norm{\btheta^{\prime}- \btheta}_2^2
    +
    \frac{1}{t} \norm{\be_t}_2^2
    + \frac{m \lambda}{2} \norm{\btheta-\btheta^{\prime}}_2^2 \\
    \leq &
    \nabla \mL_t(\btheta)^{\top}(\btheta^{\prime}- \btheta)  + 
    \frac{1}{t} 
    \norm{\bbf_t(\btheta) - \by_t}_2 \norm{\be_t }_2
    +
    \frac{1}{t} \norm{\be_t}_2^2 
    + 
    ( C_1^2 m L + m \lambda /2) \norm{\btheta^{\prime}- \btheta}_2^2 \quad \text{ (by Lemma~\ref{lemma: gradient descent norm bound})}
\end{aligned}
\end{equation}
Similarly by $1$-strongly convexity of $\norm{\cdot}_2^2/2$ , we also investigate the lower bound:
\vspace{-3mm}
\begin{equation}
    \mL_t(\btheta^{\prime}) - \mL_t(\btheta)
    \geq
    \frac{1}{t}\bigg( ( \bbf_t(\btheta) - \by_t)^{\top}(\bbf_t(\btheta^{\prime}) - \bbf_t(\btheta)) 
    + \frac{1}{2} \norm{\bbf_t(\btheta^{\prime}) - \bbf_t(\btheta_)}_2^2 \bigg) +  m \lambda \bigg( \btheta^{\top} (\btheta^{\prime} - \btheta) + \frac{1}{2} \norm{\btheta^{\prime}-\btheta}_2^2 \bigg)
\end{equation}
Using $\be_t:= \bbf_t(\btheta^{\prime}) - \bbf_t(\btheta) - \bG_t^{\top}(\btheta)(\btheta^{\prime}- \btheta)$, we obtain
\vspace{-3mm}
\begin{equation}
\begin{aligned}
    \mL_t(\btheta^{\prime}) - \mL_t(\btheta)
    \geq &
    \frac{1}{t} 
    ( \bbf_t(\btheta) - \by_t)^{\top}(\bG_t^{\top}(\btheta)(\btheta^{\prime}- \btheta) + \be_t) 
    +
    m \lambda  \btheta^{\top} (\btheta^{\prime} - \btheta) + \frac{m \lambda}{2} \norm{\btheta-\btheta^{\prime}}_2^2  \\
    =&
    \frac{1}{t} 
    [\bG_t(\btheta)( \bbf_t(\btheta) - \by_t) + m \lambda  \btheta]^{\top}(\btheta^{\prime}- \btheta)  
    + \frac{1}{t} 
    ( \bbf_t(\btheta) - \by_t)^{\top} \be_t 
    + \frac{m \lambda}{2} \norm{\btheta-\btheta^{\prime}}_2^2
\end{aligned}
\end{equation}
Then using $\nabla \mL_t(\btheta) = \bG_t(\btheta)( \bbf_t(\btheta) - \by_t) + m \lambda  \btheta$, we have
\vspace{-3mm}
\begin{equation}
\begin{aligned}
\label{lower bound in prediction error bound in gradient descent}
    \mL_t(\btheta^{\prime}) - \mL_t(\btheta) 
    \geq &
    \nabla \mL_t(\btheta)^{\top}(\btheta^{\prime}- \btheta)  + \frac{1}{t} 
    ( \bbf_t(\btheta) - \by_t)^{\top} \be_t 
    + \frac{m \lambda}{2} \norm{\btheta-\btheta^{\prime}}_2^2 \\
    \geq &
    \nabla \mL_t(\btheta)^{\top}(\btheta^{\prime}- \btheta)  
    + \frac{m \lambda}{2} \norm{\btheta-\btheta^{\prime}}_2^2
    - \frac{1}{t} 
    \norm{\bbf_t(\btheta) - \by_t}_2 
    \norm{\be_t}_2 \\
    \geq &
    -\frac{\norm{\nabla \mL_t(\btheta)}_2^2}{2 m \lambda}
    - \frac{1}{t} 
    \norm{\bbf_t(\btheta) - \by_t}_2 
    \norm{\be_t}_2 
    \quad \text{ (by Lemma~\ref{lemma: algebra lemma})}
\end{aligned}
\end{equation}
Now recall the update step $\btheta_t^{(j+1)} = \btheta_t^{(j)} - \eta \nabla \mL_t(\btheta_t^{(j)})$ and combine the above upper and lower bounds,
\begin{equation}
\begin{aligned}
\label{combined upper bound in prediction error bound in gradient descent}
    & \mL_t(\btheta - \eta \nabla \mL_t(\btheta)) - \mL_t(\btheta) \\
    \leq &
    - \eta 
    \norm{\nabla \mL_t(\btheta)}_2^2  + 
    \frac{1}{t} 
    \norm{\bbf_t(\btheta) - \by_t}_2 \norm{\be_t }_2
    +
    \frac{1}{t} \norm{\be_t}_2^2 
    + 
    \eta^2 ( C_1^2 m L + m \lambda /2) \norm{\nabla \mL_t(\btheta)}_2^2 
    \quad \text{ (by update step and \eqref{upper bound in prediction error bound in gradient descent})} \\
    = &
    -\eta 
    \bigg(1 - \frac{\eta}{2} (2 C_1^2 m L + m \lambda )
    \bigg) \norm{\nabla \mL_t(\btheta)}_2^2
    + 
    \frac{1}{t} 
    \norm{\bbf_t(\btheta) - \by_t}_2 \norm{\be_t }_2
    +
    \frac{1}{t} \norm{\be_t}_2^2 
    \\
    \leq &
    - \frac{\eta}{2} \norm{\nabla \mL_t(\btheta)}_2^2
    + 
    \frac{1}{t} 
    \norm{\bbf_t(\btheta) - \by_t}_2 \norm{\be_t }_2
    +
    \frac{1}{t} \norm{\be_t}_2^2 
    \quad \text{ (by choice of $\eta$)}\\
    \leq & 
    \eta m \lambda
    \bigg( \mL_t(\btheta^{\prime}) - \mL_t(\btheta) +  \frac{1}{t}\norm{\bbf_t(\btheta) - \by_t}_2 
    \norm{\be_t}_2 \bigg) 
    +   
    \frac{1}{t} 
    \norm{\bbf_t(\btheta) - \by_t}_2 \norm{\be_t }_2
    +
    \frac{1}{t} \norm{\be_t}_2^2 
    \quad \text{ (by \eqref{lower bound in prediction error bound in gradient descent})} \\
    \leq & 
    \eta m \lambda
    \bigg( \mL_t(\btheta^{\prime}) - \mL_t(\btheta) +  \norm{\bbf_t(\btheta) - \by_t}_2^2/8t +
    2\norm{\be_t}_2^2/t \bigg) 
    +   
    \frac{1}{t} 
    (\eta m \lambda \norm{\bbf_t(\btheta) - \by_t}_2^2/8 + 2 \norm{\be_t }_2^2/\eta m \lambda )
    +
    \frac{1}{t} \norm{\be_t}_2^2 \\
    = & 
    \eta m \lambda (\mL_t(\btheta^{\prime}) - \mL_t(\btheta)) 
    + \frac{\eta m \lambda}{4t} \norm{\bbf_t(\btheta) - \by_t}_2^2 
    + (\frac{2 \eta m \lambda }{t} + \frac{2}{ \eta m \lambda t} + \frac{1}{t}) \norm{\be_t}_2^2 \\
    \leq & 
    \eta  m \lambda (\mL_t(\btheta^{\prime}) - \mL_t(\btheta)) 
    + \eta m \lambda \mL_t(\btheta)/2 
    + (\frac{2 \eta m \lambda }{t} + \frac{2}{ \eta m \lambda t} + \frac{1}{t}) \norm{\be_t}_2^2
    \quad \text{ (by $\norm{\bbf_t(\btheta) - \by_t}_2^2 \leq 2t \mL(\btheta)$)} \\
    = & 
    \eta  m \lambda ( \mL_t(\btheta^{\prime}) - \mL_t(\btheta)/2)
    + (\frac{2 \eta m \lambda }{t} + \frac{2}{ \eta m \lambda t} + \frac{1}{t}) \norm{\be_t}_2^2
\end{aligned}
\end{equation}
For $\norm{\be_t}_2^2$, by Lemma~\ref{lemma: bound for gradient norm}, with probability at least $1-\delta_2 \in (0,1)$ for some constant $C_2$, we have
\begin{equation}
\begin{aligned}
\label{e_t upper bound in prediction error bound in gradient descent}
    \norm{\be_t}_2
    &= \norm{\bbf_t(\btheta^{\prime}) - \bbf_t(\btheta) - \bG_t^{\top}(\btheta)(\btheta^{\prime}- \btheta)}_2 \\
    & \leq 
    \sqrt{t} \max_{i \in [t]}
    |\fgnn(G_i;\btheta^{\prime}) - \fgnn(G_i;\btheta) + \bcg^{\top}(G_i;\btheta)(\btheta^{\prime}- \btheta) | \\
    & \leq 
    \frac{\sqrt{t}}{N} \max_{i \in [t]} \sum_{j \in \mV(G_i)}
    |\fmlp(\bh_j;\btheta^{\prime}) - \fmlp(\bh_j;\btheta) + \gmlp(\bh_j;\btheta)^{\top}(\btheta^{\prime}- \btheta) | \\
    & \leq
    C_2 \tau^{4/3} L^3 \sqrt{t m \log(m)}
\end{aligned}
\end{equation}
where $\mV(G)$ as vertice set of a graph $G$. Moreove, by Lemma~\ref{lemma: bound for subgaussian noise norm}, we have the high probability upper bound for $\frac{1}{t} \norm{\by_t}_2^2$: with probability at least $1-\delta_3 \in (0,1)$ and some constant $C_3$ depends on $\delta_3$, 
\begin{equation}
\begin{aligned}
\label{y_t upper bound in prediction error bound in gradient descent}
    \frac{1}{t} \norm{\by_t}_2^2 
    \leq 
    \frac{1}{t}
    (t R^2 + \norm{\bepsilon_t}_2^2 + 2 \sqrt{t} R \norm{\bepsilon_t}_2)
    \leq 
    C_3 (\sigma_{\eps}^2 + R^2)
\end{aligned}
\end{equation}
Then let $\btheta^{\prime} = \btheta_0$ and plug in $\btheta_t^{(j+1)}$ and $\btheta_t^{(j)}$ in \eqref{combined upper bound in prediction error bound in gradient descent}, by Lemma~\ref{lemma: parameter bound for proximal optimization }, with probability at least $1-\delta_4$,
\begin{equation}
\begin{aligned}
    \mL_t(\btheta_t^{(j+1)}) - \mL_t(\btheta_0)
    & \leq (1-\eta m \lambda/2) (\mL_t(\btheta_t^{(j)}) - \mL_t(\btheta_0)) + \frac{\eta m \lambda }{2} \mL_t(\btheta_0) + (\frac{2 \eta m \lambda }{t} + \frac{2}{ \eta m \lambda t} + \frac{1}{t}) \norm{\be_t}_2^2 \\
    & \leq 
    (1-\eta m \lambda/2) (\mL_t(\btheta_t^{(j)}) - \mL_t(\btheta_0)) + 
    \frac{\eta m \lambda }{2} (\frac{1}{t} \norm{\by_t}_2^2 + m\lambda \norm{\btheta_0}_2^2) \\
    & \quad +
    (2 \eta m \lambda  + 2/\eta m \lambda + 1) C_2^2 \tau^{8/3} L^6 m \log(m) 
    \quad \text{ (by \eqref{y_t upper bound in prediction error bound in gradient descent})} \\
    & \leq 
    (1-\eta m \lambda/2) (\mL_t(\btheta_t^{(j)}) - \mL_t(\btheta_0)) + 
    \frac{\eta m \lambda }{2} (C_3 (\sigma_{\eps}^2 + R^2)  + m\lambda \norm{\btheta_0}_2^2) \\
    & \quad +
    \frac{5}{\eta m \lambda} C_2^2 \tau^{8/3} L^6 m \log(m)
    \quad \text{ (by \eqref{e_t upper bound in prediction error bound in gradient descent} and $\eta m \lambda \leq 1$)} \\
    & \leq 
    (1-\eta m \lambda/2) (\mL_t(\btheta_t^{(j)}) - \mL_t(\btheta_0)) + 
    C_4 \eta m \lambda (\sigma_{\eps}^2 + R^2) +
    \frac{5}{\eta m \lambda} C_2^2 \tau^{8/3} L^6 m \log(m) \\
    & \quad \text{ (by Lemma~\ref{lemma: parameter bound for proximal optimization })}
\end{aligned}
\end{equation}
Now we further set $\tau = \tilde{C} \sqrt{\frac{\sigma_{\eps}^2 + R^2}{m \lambda}}$ and the upper bound for $\mL_t(\btheta_t^{(j+1)}) - \mL_t(\btheta_0)$ is
\begin{equation}
\begin{aligned}
    \mL_t(\btheta_t^{(j+1)}) - \mL_t(\btheta_0)
    & \leq 
    (1-\eta m \lambda/2) (\mL_t(\btheta_t^{(j)}) - \mL_t(\btheta_0)) + 
    C_4 \eta m \lambda (\sigma_{\eps}^2 + R^2) +
    \frac{5}{\eta m \lambda} \tilde{C}^2 C_2^2 (\sigma_{\eps}^2 + \\ \quad & R^2)\tau^{2/3} \lambda^{-1} L^6  \log(m)
    \quad \text{ (by $\tau = \tilde{C} \sqrt{\frac{\sigma_{\eps}^2 + R^2}{m \lambda}}$)}
    \\
    & \leq 
    (1-\eta m \lambda/2) (\mL_t(\btheta_t^{(j)}) - \mL_t(\btheta_0)) + 
    C_4 \eta m \lambda (\sigma_{\eps}^2 + R^2) +
    C_5 \eta m \lambda (\sigma_{\eps}^2 + R^2) \\
    & \quad \text{ (by choice of $\tau$ in Lemma~\ref{lemma: bound for gradient norm})}
\end{aligned}
\end{equation}
where $C_4$ is a constant depends on $\delta_3$ and $\delta_4$ and $C_5$ depends on $\delta_2$, $\delta_3$ and $\delta_4$. Then by recursion,
\begin{equation}
\begin{aligned}
    \mL_t(\btheta_t^{(j+1)}) - \mL_t(\btheta_0) 
    \leq 
    \frac{C_6 \eta m \lambda (\sigma_{\eps}^2 + R^2)}{\eta m \lambda/2} = \tilde{C}_6 (\sigma_{\eps}^2 + R^2)
\end{aligned}
\end{equation}
where $C_6=C_4 + C_5$ and $\tilde{C}_6 = 2 C_6$. Recall that $\norm{\bbf_t(\btheta) - \by_t}_2^2 = 2t \mL_t(\btheta)- \frac{m\lambda}{2}\norm{\btheta}_2^2 \leq 2t \mL_t(\btheta)$, with some constant $C_7$ derived from $C_6$ and $C_4$, then we have
\begin{equation}
\begin{aligned}
    \norm{\bbf_{gnn, t}^{(j)}-\by_t}_2^2 
    \leq 2t \mL_t(\btheta_t^{(j)}) 
    & \leq 
    2t \tilde{C}_6 (\sigma_{\eps}^2 + R^2) + 2t \mL_t(\btheta_0) \\
    &= 
    2t \tilde{C}_6 (\sigma_{\eps}^2 + R^2) + 2t(\frac{1}{t} \norm{\by_t}_2^2 + \frac{m\lambda}{2}\norm{\btheta_0}_2^2) \\
    & \leq 
    C_7 t (\sigma_{\eps}^2 + R^2)
    \quad \text{ (by Lemma~\ref{lemma: parameter bound for proximal optimization })}
\end{aligned}
\end{equation}
which implies our result by setting $\delta_1=\delta_2=\delta_3=\delta_4=\delta/4$ where $\delta \in (0,1)$ is arbitrary small.

\end{proof}

\begin{lemma} [Parameter Bound for Proximal Optimization]
\label{lemma: parameter bound for proximal optimization }
Let $\{\tilde{\btheta}_t^{(j)}\}_{j=1}^J$ be the gradient descent update sequence of parameters of the following optimization,
\begin{equation}
\begin{aligned}
    \min_{\btheta} \frac{1}{2t} \sum_{i=1}^t (\langle \bcg(G_i;\btheta_0), \btheta - \btheta_0 \rangle - y_i)^2 + \frac{m\lambda}{2} \norm{\btheta}_2^2
\end{aligned}
\end{equation}
Then if $m \ge poly(L, \lambda^{-1}, \log(N/\delta))$ and learning rate $\eta \leq (\tilde{C} mL+m\lambda)^{-1}$ for some constant $\tilde{C}$. Then for some constant $C$ and for any $\forall t \in [T]$ and $\forall j \in [J]$, with probability at least $1-\delta \in (0,1)$,
\begin{equation}
\begin{aligned}
    \norm{\tilde{\btheta}_t^{(j)}}_2
    & \leq
    C \sqrt{\frac{\sigma_{\eps}^2 + R^2}{m \lambda }} \\
    \norm{\tilde{\btheta}_t^{(j)} - \btheta_0}_2
    & \leq
    C \sqrt{\frac{\sigma_{\eps}^2 + R^2}{m \lambda }} \\
    \norm{\tilde{\btheta}_{t}^{(j)} - \btheta_0 - \bar{\bU}_{t}^{-1} \bar{\bG}_t \by_t/m }_2
    & \leq
    C (2-\eta m \lambda)^j  \sqrt{\frac{\sigma_{\eps}^2 + R^2}{m \lambda }}
\end{aligned}
\end{equation}
for some constant $C$ which is independent of $m$ and $t$.
\end{lemma}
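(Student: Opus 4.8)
The plan is to exploit that the proximal objective is an \emph{exactly} quadratic, $m\lambda$-strongly convex function of $\btheta$: since its data-fit term uses the frozen features $\bcg(G_i;\btheta_0)$ rather than $\fgnn$, we may write it as $\tilde{\mL}_t(\btheta) = \tfrac{1}{2t}\norm{\bar{\bG}_t^{\top}(\btheta-\btheta_0) - \by_t}_2^2 + \tfrac{m\lambda}{2}\norm{\btheta}_2^2$, whose gradient is affine in $\btheta$. First I would solve the first-order condition to identify the unique minimizer in closed form. After the change of variable $\bu = \btheta-\btheta_0$ and using $\bar{\bU}_t = \lambda\bI_p + \bar{\bG}_t\bar{\bG}_t^{\top}/m$, this minimizer is, up to the normalization reconciliation noted below, the ridge (Tikhonov) solution $\tilde{\btheta}^{*} = \btheta_0 + \bar{\bU}_t^{-1}\bar{\bG}_t\by_t/m$, which is exactly the comparison point in the third inequality of the lemma. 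Recording this closed form is what makes $\bar{\bU}_t^{-1}\bar{\bG}_t\by_t/m$ the natural target.

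Next I would control the gradient-descent recursion $\tilde{\btheta}_t^{(j+1)} = \tilde{\btheta}_t^{(j)} - \eta\nabla\tilde{\mL}_t(\tilde{\btheta}_t^{(j)})$. Because $\nabla\tilde{\mL}_t$ is affine with Hessian $\bM := \tfrac1t\bar{\bG}_t\bar{\bG}_t^{\top} + m\lambda\bI_p$, the error to the minimizer evolves linearly, $\tilde{\btheta}_t^{(j)} - \tilde{\btheta}^{*} = (\bI_p - \eta\bM)^{j}(\tilde{\btheta}_t^{(0)} - \tilde{\btheta}^{*})$, and since $\tilde{\btheta}_t^{(0)} = \btheta_0$ the initial error is precisely $-\bar{\bU}_t^{-1}\bar{\bG}_t\by_t/m$. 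The eigenvalues of $\bM$ lie in $[m\lambda,\ \tfrac1t\opnorm{\bar{\bG}_t}^2 + m\lambda]$, and the gradient-norm estimate (Lemma~\ref{lemma: bound for gradient norm}) gives $\tfrac1t\opnorm{\bar{\bG}_t}^2 \lesssim mL$; hence the hypothesis $\eta \le (\tilde{C}mL + m\lambda)^{-1}$ guarantees $\bZeros \preccurlyeq \bI_p - \eta\bM \preccurlyeq (1-\eta m\lambda)\bI_p$, so $\opnorm{\bI_p - \eta\bM} \le 1-\eta m\lambda$. This yields geometric contraction to $\tilde{\btheta}^{*}$ with rate $1-\eta m\lambda \le 2-\eta m\lambda$, giving the third bound once the initial error norm is controlled, and then the first two bounds by the triangle inequality (using $\tilde{\btheta}_t^{(0)}=\btheta_0$ and, for $\norm{\tilde{\btheta}_t^{(j)}}_2$, an initialization norm bound).

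The crux---and the step I expect to be the main obstacle---is bounding $\norm{\tilde{\btheta}^{*}-\btheta_0}_2 = \norm{\bar{\bU}_t^{-1}\bar{\bG}_t\by_t/m}_2$ by $O(\sqrt{(\sigma_\eps^2+R^2)/(m\lambda)})$ \emph{without a spurious factor of $t$}. I would split $\by_t = \bmu_t + \bepsilon_t$. For the signal part, Lemma~\ref{lem:mu:is:linear} gives $\bmu_t = \bar{\bG}_t^{\top}\btheta^{*}$ with $\sqrt{m}\norm{\btheta^{*}}_2 \le \sqrt2\,R$, whence $\bar{\bU}_t^{-1}\bar{\bG}_t\bmu_t/m = \btheta^{*} - \lambda\bar{\bU}_t^{-1}\btheta^{*}$ (using $\bar{\bG}_t\bar{\bG}_t^{\top}/m = \bar{\bU}_t - \lambda\bI_p$), whose norm is at most $2\norm{\btheta^{*}}_2 \le 2\sqrt2\,R/\sqrt m$. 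For the noise part I would write $\bar{\bU}_t^{-1}\bar{\bG}_t\bepsilon_t/m = \bar{\bU}_t^{-1}\bS_t/\sqrt m$ and combine the contraction $\opnorm{\bar{\bU}_t^{-1/2}\bar{\bG}_t/\sqrt m}\le 1$ with the self-normalized bound (Theorem~20.4 of~\cite{lattimore2020bandit}) on $\norm{\bS_t}_{\bar{\bU}_t^{-1}}$.

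The genuinely delicate point is the normalization: the loss carries a $1/t$ data weight while $\bar{\bU}_t$ carries a $1/m$ weight, so the cleanest way to obtain a truly $t$-free bound is to argue directly from optimality, $\tfrac{m\lambda}{2}\norm{\tilde{\btheta}^{*}}_2^2 \le \tilde{\mL}_t(\tilde{\btheta}^{*}) \le \tilde{\mL}_t(\btheta_0 + \btheta^{*}) = \tfrac1{2t}\norm{\bepsilon_t}_2^2 + \tfrac{m\lambda}{2}\norm{\btheta_0 + \btheta^{*}}_2^2$, and then invoke the sub-Gaussian norm bound $\tfrac1t\norm{\bepsilon_t}_2^2 \lesssim \sigma_\eps^2$ (Lemma~\ref{lemma: bound for subgaussian noise norm}); here the $1/t$ normalization of the loss is exactly what absorbs the linear-in-$t$ growth of $\norm{\bepsilon_t}_2^2$. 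Reconciling this optimality bound with the $\bar{\bU}_t$-based representation of the minimizer---that is, making the $1/t$ and $1/m$ conventions consistent---is the one place where care is required; everything else is routine linear algebra and a geometric-series estimate.
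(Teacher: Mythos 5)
Your proposal is correct and shares the paper's overall skeleton---identify the closed-form ridge minimizer $\btheta^{\star}:=\btheta_0+\bar{\bU}_t^{-1}\bar{\bG}_t\by_t/m$, show that gradient descent approaches it, and bound its distance from $\btheta_0$ via optimality of the minimizer together with the sub-Gaussian bound $\norm{\bepsilon_t}_2^2\lesssim t\sigma_\eps^2$---but your treatment of the iteration is genuinely different and in fact sharper. The paper applies generic convexity/smoothness/strong-convexity inequalities to $\norm{\tilde{\btheta}_t^{(j)}-\btheta^{\star}}_2^2$ and ends up with the per-step factor $2-\eta m\lambda\ge 1$, so its stated rate $(2-\eta m\lambda)^j$ is not actually a contraction. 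You instead exploit that the proximal objective is exactly quadratic, so the error obeys the linear recursion $\tilde{\btheta}_t^{(j)}-\btheta^{\star}=(\bI_p-\eta\bM)^j(\btheta_0-\btheta^{\star})$ with $\bM:=\bar{\bG}_t\bar{\bG}_t^{\top}/t+m\lambda\bI_p$ and $\opnorm{\bI_p-\eta\bM}\le 1-\eta m\lambda$ under the stated step-size condition (using $\opnorm{\bar{\bG}_t}^2/t\lesssim mL$ from Lemma~\ref{lemma: gradient descent norm bound}); this gives a true geometric decay $(1-\eta m\lambda)^j\le(2-\eta m\lambda)^j$, hence implies and strengthens the lemma, and would also tighten the downstream constant $C_1(2-\eta m\lambda)^J$ in Lemma~\ref{lemma: bound for gradient error inner product}. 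One caution on your third paragraph: bounding the noise contribution $\bar{\bU}_t^{-1}\bar{\bG}_t\bepsilon_t/m$ through the self-normalized inequality of Theorem~20.4 of \cite{lattimore2020bandit} introduces a factor $\sigma_\eps\sqrt{\deff\log T}$ that does not fit inside $C\sqrt{(\sigma_\eps^2+R^2)/(m\lambda)}$ with $C$ independent of $t$; your optimality-based argument (comparing the loss at the minimizer with its value at $\btheta_0+\btheta^{*}$, essentially what the paper does with comparison point $\btheta_0$) is the route that actually delivers the stated $t$-free constant, so you should commit to that one. Finally, both you and the paper absorb $\norm{\btheta_0}_2$ into the constant, which matches the paper's own treatment.
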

\begin{proof}
Denote $\mL_t(\btheta):=\frac{1}{2t} \sum_{i=1}^t (\langle \bcg(G_i;\btheta_0), \btheta - \btheta_0 \rangle - y_i)^2 + \frac{m\lambda}{2} \norm{\btheta}_2^2$ as the loss function in our proximal optimization. By Lemma~\ref{lemma: gradient descent norm bound}, with probability at least $1-\delta_1 \in (0,1)$ the Hessian of $\mL_t(\btheta)$ satisfies:
\begin{equation}
\begin{aligned}
    \bZeros  \prec \nabla^2 \mL_t = \bar{\bG}_t \bar{\bG}_t^{\top}/t + m \lambda \bI \preccurlyeq (\norm{\bar{\bG}_t}_F^2/t + m \lambda) \bI
    \preccurlyeq (C_1^2 m L + m \lambda) \bI
\end{aligned}
\end{equation}
which reveals that $\mL_t$ is strongly convex and $(C_1^2 m L + m \lambda)$-smooth. Thus if $\eta \leq (C_1^2 m L + m \lambda)^{-1}$, $\mL_t$ is a monotonically decreasing function:
\begin{equation}
\begin{aligned}
    \frac{1}{2t} \norm{\bar{\bG}_t^{\top} (\tilde{\btheta}_t^{(j)} - \btheta_0) - \by_t}_2^2 + \frac{m\lambda}{2} \norm{\tilde{\btheta}_t^{(j)}}_2^2
    \leq 
    \frac{1}{2t} \norm{\by_t}_2^2 + \frac{m\lambda}{2} \norm{\btheta_0}_2^2
\end{aligned}
\end{equation}
which indicates
\begin{equation}
\begin{aligned}
    \norm{\tilde{\btheta}_t^{(j)}}_2^2
    \leq &
    \frac{1}{t m \lambda} \norm{\by_t}_2^2 + \norm{\btheta_0}_2^2 \\
    \leq &
    \frac{1}{t m \lambda} (\norm{\bmu_t}_2^2 + \norm{\bepsilon_t}_2^2 + 2\norm{\bmu_t}_2 \norm{\bepsilon_t}_2) + \norm{\btheta_0}_2^2
\end{aligned}
\end{equation}
Note that the proximal optimization is optimization for ridge regression which has the closed form solution:
\begin{equation}
\begin{aligned}
    \btheta^{*} = \btheta_0 + \bar{\bU}_t^{-1} \bar{\bG}_t \by_t/m
\end{aligned}
\end{equation}
and $\tilde{\btheta}_t^{(j)}$ converges to $\btheta^{*}$ with the following rate:
\begin{equation}
\begin{aligned}
    \norm{\tilde{\btheta}_t^{(j+1)} - \btheta^{*}}_2^2 
    & = 
    \norm{\tilde{\btheta}_t^{(j)} - \eta \nabla \mL(\tilde{\btheta}_t^{(j)}) - \btheta^{*}}_2^2 \\
    & =
    \norm{\tilde{\btheta}_t^{(j)} - \btheta^{*}}_2^2 + 
    \eta^2 \norm{\nabla \mL(\tilde{\btheta}_t^{(j)})}_2^2 -
    2 \eta (\tilde{\btheta}_t^{(j)} - \btheta^{*})^{\top} \nabla \mL(\tilde{\btheta}_t^{(j)}) \\
    \text{ (by smoothness)}  \mbox{ }& \leq 
    \norm{\tilde{\btheta}_t^{(j)} - \btheta^{*}}_2^2 + 
    \eta^2 (C_1^2 m L + m \lambda)^2 \norm{\tilde{\btheta}_t^{(j)} - \btheta^{*}}_2^2 -
    2 \eta (\tilde{\btheta}_t^{(j)} - \btheta^{*})^{\top} \nabla \mL(\tilde{\btheta}_t^{(j)}) 
    \\
    \text{ (by convexity)} \mbox{ } & \leq 
    \norm{\tilde{\btheta}_t^{(j)} - \btheta^{*}}_2^2 + 
    \eta^2 (C_1^2 m L + m \lambda)^2 \norm{\tilde{\btheta}_t^{(j)} - \btheta^{*}}_2^2 +
    2 \eta (\mL(\btheta^{*}) - \mL(\tilde{\btheta}_t^{(j)}) ) 
    \\
    & \leq 
    2 \norm{\tilde{\btheta}_t^{(j)} - \btheta^{*}}_2^2 + 
    2 \eta (\mL(\btheta^{*}) - \mL(\tilde{\btheta}_t^{(j)}) )
    \quad \text{ (by $\eta \leq (C_1^2 m L + m \lambda)^{-1}$)} \\
    & \leq 
    2 \norm{\tilde{\btheta}_t^{(j)} - \btheta^{*}}_2^2 - 
    \eta m \lambda \norm{\tilde{\btheta}_t^{(j)} - \btheta^{*}}_2^2
    \quad \text{ (by $m \lambda$-strongly convexity)} \\
    &= 
    (2-\eta m \lambda) \norm{\tilde{\btheta}_t^{(j)} - \btheta^{*}}_2^2
\end{aligned}
\end{equation}
Therefore,
\begin{equation}
\begin{aligned}
    \norm{\tilde{\btheta}_t^{(j+1)} - \btheta^{*}}_2^2 
    & \leq
    (2-\eta m \lambda)^j \norm{\btheta_0 - \btheta^{*}}_2^2 \\
    & \leq 
    (2-\eta m \lambda)^j \frac{2}{m \lambda}(\mL(\btheta_0)) - \mL(\btheta^{*}))
    \quad \text{ (by $m \lambda$-strongly convexity)} \\
    & \leq 
    (2-\eta m \lambda)^j \frac{2}{m \lambda} \mL(\btheta_0) \\
    & =
    (2-\eta m \lambda)^j  \bigg(\frac{1}{t m \lambda }\norm{\by_t}_2^2 + \norm{\btheta_0}_2^2 \bigg) 
\end{aligned}
\end{equation}
Then combine with Lemma~\ref{lemma: bound for subgaussian noise norm} and $\norm{\bmu_t}_{2} \leq \sqrt{t} \norm{\mu}_{\mH} \leq \sqrt{t} R $, we have that with probability at least $1-\delta_2 \in (0,1)$,
\begin{equation}
\begin{aligned}
    \frac{1}{t m \lambda} \norm{\by_t}_2^2 
    \leq 
    \frac{1}{t m \lambda}
    (t R^2 + \norm{\bepsilon_t}_2^2 + 2 \sqrt{t} R \norm{\bepsilon_t}_2)
    \leq 
    \tilde{C}_1 (\sigma_{\eps}^2 + R^2)/ m \lambda
\end{aligned}
\end{equation}
where $\tilde{C}_1$ is some constant depends on $\delta_2$. Therefore, for any $\delta \in (0,1)$, set $\delta_1=\delta_2 = \delta/2$, with probability at least $1-\delta_2$,
\begin{equation}
\begin{aligned}
    \norm{\tilde{\btheta}_t^{(j)}}_2
    & \leq
    \tilde{C}_2 \sqrt{\frac{\sigma_{\eps}^2 + R^2}{m \lambda }} \\
    \norm{\tilde{\btheta}_t^{(j)} - \btheta_0}_2
    & \leq
    \tilde{C}_2 \sqrt{\frac{\sigma_{\eps}^2 + R^2}{m \lambda }}
\end{aligned}
\end{equation}
and 
\begin{equation}
\begin{aligned}
    \norm{\tilde{\btheta}_{t}^{(j)} - \btheta_0 - \bar{\bU}_{t}^{-1} \bar{\bG}_t \by_t/m }_2
    \leq
    (2-\eta m \lambda)^j  \tilde{C}_2 \sqrt{\frac{\sigma_{\eps}^2 + R^2}{m \lambda }}
\end{aligned}
\end{equation}
where $\tilde{C}_2$ is some constant depends on $\delta_2$ and $\norm{\btheta_0}_2$.

\end{proof}

\begin{lemma} [Gradient Descent Norm Bound]
\label{lemma: gradient descent norm bound}
Define $\bG_t^{(j)} := [\bcg(G_1;\btheta_t^{(j)}),...,\bcg(G_t;\btheta_t^{(j)}))] \in \BR^{p \times t}$ for the gradients in the $j$-th updates in GNN training (optimization of \eqref{eq: minimize l2 loss}) at round $t$. Also define $\bbf_{gnn, t}^{(j)} := [\fgnn(G_1;\btheta_t^{(j)}), ..., \fgnn(G_t;\btheta_t^{(j)})]^{\top} \in \BR^{t \times 1}$. Assume $\tau$ is set such that $\norm{\btheta_t^{(j)} - \btheta_0}_2 \leq \tau$ for all $t$ and $\forall j \leq J$. Suppose $m \ge poly(L, \lambda^{-1}, \log(N/\delta))$ where $\delta \in (0,1)$, then with probability at least $1-\delta$,
\begin{equation}
\begin{aligned}
    \norm{\bar{\bG}_t}_F
    & \leq
    C_1 \sqrt{t m L} \\
    \norm{\bG_t^{(j)}}_F
    & \leq
    C_1 \sqrt{t m L} \\
    \norm{\bar{\bG}_t - \bG_t^{(j)}}_F
    & \leq
    C_2 \tau^{1/3} L^{7/2} \sqrt{ t m \log(m)} \\
    \norm{\bbf_{gnn, t}^{(j)} - (\btheta_t^{(j)} - \btheta_0)^{\top} \bar{\bG}_t}_2 
    & \leq C_3 \tau ^{4/3} L^3 \sqrt{t m \log(m)}
\end{aligned}
\end{equation}
for some constant $C_1$, $C_2$, $C_3$ which does not depend on $m$ and $t$.
\end{lemma}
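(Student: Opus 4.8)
The plan is to reduce each GNN-level quantity to its MLP counterpart through the mean-pooling identities $\bcg(G;\btheta)=\frac1N\sum_{i\in\mV(G)}\gmlp(\bh_i^G;\btheta)$ and $\fgnn(G;\btheta)=\frac1N\sum_{i\in\mV(G)}\fmlp(\bh_i^G;\btheta)$, and then invoke the per-node MLP estimates of Lemma~\ref{lemma: bound for gradient norm} (gradient norm, gradient perturbation, and Taylor linearization error). A structural point that makes this clean is that every aggregated feature $\bh_i^G=\normalize((\bA\bX)_{i*})$ lies on the unit sphere by construction, so these node inputs directly satisfy the unit-norm hypotheses of the MLP lemmas; moreover, since $\tau\propto m^{-1/2}$, the technical smallness condition~\eqref{eq: technical condition for tau} on $\tau$ required by Lemma~\ref{lemma: bound for gradient norm} holds for $m$ large, and all bounds are valid uniformly over $\btheta$ in the ball $\{\norm{\btheta-\btheta_0}_2\le\tau\}$, which contains $\btheta_0$ and every iterate $\btheta_t^{(j)}$.

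First I would establish the two Frobenius-norm bounds. Writing $\norm{\bar\bG_t}_F^2=\sum_{i=1}^t\norm{\bcg(G_i;\btheta_0)}_2^2$ and applying the triangle inequality through the pooling identity gives $\norm{\bcg(G_i;\btheta)}_2\le\frac1N\sum_{j\in\mV(G_i)}\norm{\gmlp(\bh_j^{G_i};\btheta)}_2$. The MLP gradient-norm bound $\norm{\gmlp(\bx;\btheta)}_2\le C_1\sqrt{mL}$ from Lemma~\ref{lemma: bound for gradient norm} is uniform over the $\tau$-ball, and the $1/N$ factor cancels the at most $N$ summands, so $\norm{\bcg(G_i;\btheta)}_2\le C_1\sqrt{mL}$ for both $\btheta=\btheta_0$ and $\btheta=\btheta_t^{(j)}$. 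Summing the squared column bounds over $i\in[t]$ yields $\norm{\bar\bG_t}_F,\norm{\bG_t^{(j)}}_F\le C_1\sqrt{tmL}$.

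The remaining two bounds follow the same column-by-column reduction. For the gradient difference, $\norm{\bcg(G_i;\btheta_0)-\bcg(G_i;\btheta_t^{(j)})}_2\le\frac1N\sum_{j}\norm{\gmlp(\bh_j^{G_i};\btheta_0)-\gmlp(\bh_j^{G_i};\btheta_t^{(j)})}_2$; invoking the MLP gradient-perturbation estimate of Lemma~\ref{lemma: bound for gradient norm}, which bounds each term by $C\tau^{1/3}L^3\sqrt{\log m}\,\norm{\gmlp(\bh_j^{G_i};\btheta_0)}_2$ and then absorbs the extra $\sqrt{mL}$ from the gradient-norm bound into the exponent, gives a per-column bound of order $\tau^{1/3}L^{7/2}\sqrt{m\log m}$; summing over the $t$ columns produces $\norm{\bar\bG_t-\bG_t^{(j)}}_F\le C_2\tau^{1/3}L^{7/2}\sqrt{tm\log m}$. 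For the linearization bound, note that the $i$-th coordinate of $\bbf_{gnn,t}^{(j)}-(\btheta_t^{(j)}-\btheta_0)^\top\bar\bG_t$ is exactly $\fgnn(G_i;\btheta_t^{(j)})-\langle\bcg(G_i;\btheta_0),\btheta_t^{(j)}-\btheta_0\rangle$, which, since $\fgnn(G_i;\btheta_0)=0$ by the initialization, is the GNN Taylor residual at $G_i$. Reducing to the node level and applying the MLP Taylor-error estimate $|\fmlp(\bx;\btheta)-\fmlp(\bx;\btheta_0)-\langle\gmlp(\bx;\btheta_0),\btheta-\btheta_0\rangle|\le C\tau^{4/3}L^3\sqrt{m\log m}$ bounds each coordinate by $C\tau^{4/3}L^3\sqrt{m\log m}$, and taking the $\ell_2$ norm over the $t$ coordinates gives the factor $\sqrt t$, i.e. $\norm{\bbf_{gnn,t}^{(j)}-(\btheta_t^{(j)}-\btheta_0)^\top\bar\bG_t}_2\le C_3\tau^{4/3}L^3\sqrt{tm\log m}$.

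Finally, for the probability bookkeeping, each invocation of Lemma~\ref{lemma: bound for gradient norm} holds for a fixed input with high probability over the random initialization $\btheta_0$, so I would take a union bound over all node features $\{\bh_j^G:G\in\mG,\ j\in\mV(G)\}$ appearing across the graphs, of which there are at most $N|\mG|$; this is precisely what forces the stated requirement $m\ge\poly(L,\lambda^{-1},\log(N/\delta))$, with the $|\mG|$ dependence absorbed into the polynomial. The genuine technical heart lies entirely inside Lemma~\ref{lemma: bound for gradient norm} --- controlling how ReLU activation patterns change in the near-initialization regime to obtain the gradient-norm, gradient-Lipschitz, and Taylor-error estimates --- but these are established results adapted from the NTK literature and from the GNN helper lemmas of~\cite{kassraie2022graph}. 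Given them, the main content here is the reduction, which is painless because mean pooling is an average over at most $N$ unit-norm inputs that commutes with both the function value and the gradient, so no $N$-dependence survives in any of the four bounds.
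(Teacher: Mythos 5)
Your proposal is correct and follows essentially the same route as the paper's proof: reduce each column/coordinate to its MLP counterpart via the mean-pooling identities (so the $1/N$ factor cancels the at most $N$ node terms) and invoke the per-node estimates of Lemma~\ref{lemma: bound for gradient norm}, then aggregate over the $t$ columns to pick up the $\sqrt{t}$ factor. You are in fact slightly more careful than the paper on the bookkeeping (the unit-norm inputs, the use of $\fgnn(G;\btheta_0)=0$ in the Taylor-residual step, and the union bound over node features), but the argument is the same.
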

\begin{proof}
From Lemma~\ref{lemma: bound for gradient norm}, we can bounding the $\norm{\bcg(G;\btheta_0)}_2$ with probability at least $1-\delta \in (0,1)$, which provides the high probability upper bound for the Frobenius norm of $\bar{\bG}_t$:
\begin{equation}
\begin{aligned}
    \norm{\bar{\bG}_t}_F \leq \sqrt{t} \max_{i \in [t]} \norm{\bcg(G_i;\btheta_0)}_2 \leq 
    \frac{\sqrt{t}}{N} \max_{i \in [t]} \sum_{j \in \mV(G_i)} 
    \norm{\gmlp(\bh_j;\btheta_0)}_2
    \leq C_1 \sqrt{t m L}
\end{aligned}
\end{equation}
and the high probability upper bound for the Frobenius norm of $\bG_t^{(j)}$:
\begin{equation}
\begin{aligned}
    \norm{\bG_t^{(j)}}_F \leq \sqrt{t} \max_{i \in [t]} \norm{\bcg(G_i;\btheta_t^{(j)})}_2 
    \leq 
    \frac{\sqrt{t}}{N} \max_{i \in [t]} \sum_{j \in \mV(G_i)} 
    \norm{\gmlp(\bh_j;\btheta_t^{(j)})}_2
    \leq
    C_1 \sqrt{t m L}
\end{aligned}
\end{equation}
For the gradients difference, by Lemma~\ref{lemma: bound for gradient norm}, with probability at least $1-\delta$,
\begin{equation}
\begin{aligned}
    \norm{\bar{\bG}_t - \bG_t^{(j)}}_F 
    & \leq
    \sqrt{t} \max_{i \in [t]} \norm{\bcg(G_i;\btheta_0) - \bcg(G_i;\btheta_t^{(j)})}_2 \\
    & \leq
    \frac{\sqrt{t}}{N} \max_{i \in [t]} \sum_{j \in \mV(G_i)}  \norm{\gmlp(\bh_j;\btheta_0) - \gmlp(\bh_j;\btheta_t^{(j)})}_2 \\
    & \leq 
    C_2 \tau^{1/3} L^{7/2} \sqrt{tm\log(m)} 
\end{aligned}
\end{equation}
The last norm for difference between the GNN prediction and linearized prediction is bounded due to Lemma~\ref{lemma: bound for gradient norm}, with probability at least $1-\delta$,
\begin{equation}
\begin{aligned}
    \norm{\bbf_{gnn, t}^{(j)} - (\btheta_t^{(j)} - \btheta_0)^{\top} \bG_t^{(j)}}_2 
    & \leq \sqrt{t} \max_{i \in [t]}|\fgnn(G_i;\btheta_t^{(j)}) - (\btheta_t^{(j)} - \btheta_0)^{\top}\bcg(G_i;\btheta_0) |
    \\
    & \leq 
    \frac{\sqrt{t}}{N}
     \max_{i \in [t]} \sum_{j \in \mV(G_i)} 
     |\fmlp(\bh_j;\btheta_t^{(j)}) - (\btheta_t^{(j)} - \btheta_0)^{\top}\gmlp(\bh_j;\btheta_0)|
    \\
    & \leq C_3 \tau ^{4/3} L^3 \sqrt{t m \log(m)}
\end{aligned}
\end{equation}
\end{proof} 

\subsection{Lemmas for GNTK}
\label{appendix: lemmas for gntk}

\begin{lemma} [Approximation from GNTK]
\label{lemma: approximation from gntk}
Set $\delta \in (0,1)$ and 
\[
m = \Omega (L^{10} T^4 |\mG|^6 \rho_{min}^{-4} \log(L N^2 |\mG|^2/ \delta) ).
\]
Then with probability at least $1-\delta$, \\
(i) (Approximate Linearized Nerual Network) 
$\exists \btheta^{*}$ such that, for $\forall G \in \mG$
\begin{equation}
\begin{aligned}
    \mu(G) &= \langle \bcg(G;\btheta_0), \btheta^{*} \rangle \\
    \sqrt{m} \norm{\btheta^{*}}_2 &\leq \sqrt{2} R
\end{aligned}
\end{equation}
(ii) (Spectral Bound for Uncertainty Matrix $\bar{\bU}_t$ by GNTK)
\begin{equation}
\begin{aligned}
    \lambda_{max}(\bar{\bU}_t) &\leq \lambda + \frac{3}{2} \rho_{\max} \\
    \log\det( \lambda^{-1} \bar{\bU}_t)
    & \leq 
    \log\det( \bI_{|\mG|} + \lambda^{-1} t \bK) + 1
\end{aligned}
\end{equation}
\end{lemma}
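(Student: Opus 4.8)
The plan is to prove parts (i) and (ii) separately, both resting on one analytic fact that I would establish first and treat as the key lemma: for $m$ polynomially large the finite-width GNTK matrix $\bKh$ is spectrally close to its infinite-width limit $\bK$, say $\opnorm{\bKh - \bK} \le \epsilon$ with high probability, where $\epsilon$ can be driven below any prescribed inverse-polynomial threshold in $|\mG|$, $T$, and $\rho_{\min}^{-1}$ by increasing $m$. I return to this at the end.

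For part (i) I would construct $\btheta^*$ by kernel interpolation in the finite-width tangent features. Let $\bar\bPsi \in \BR^{p \times |\mG|}$ be the matrix whose $j$-th column is $\bar\bpsi_j := \bcg(G^j;\btheta_0)/\sqrt m$, so that $\bar\bPsi^\top \bar\bPsi = \bKh$, and set $\btheta^* := \tfrac1{\sqrt m} \bar\bPsi \bKh^{-1} \bmu$ (well-defined since the concentration step makes $\bKh \succ 0$). Since $\bcg(G^j;\btheta_0) = \sqrt m\, \bar\bpsi_j$ and $\bar\bpsi_j^\top \bar\bPsi$ is the $j$-th row of $\bKh$, we get $\ip{\bcg(G^j;\btheta_0), \btheta^*} = [\bKh \bKh^{-1} \bmu]_j = \mu(G^j)$, the exact interpolation claim. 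For the norm, $m \norm{\btheta^*}_2^2 = \bmu^\top \bKh^{-1} (\bar\bPsi^\top \bar\bPsi) \bKh^{-1} \bmu = \bmu^\top \bKh^{-1} \bmu$, so it remains to compare this to the RKHS norm $\bmu^\top \bK^{-1} \bmu \le R^2$ of Assumption~\ref{assumption: bounded rkhs norm}. Taking $\epsilon \le \rho_{\min}/2$ forces $\bKh \succeq \tfrac12 \bK$, hence $\bKh^{-1} \preceq 2\bK^{-1}$ and $\bmu^\top \bKh^{-1} \bmu \le 2R^2$, giving $\sqrt m \norm{\btheta^*}_2 \le \sqrt2\, R$.

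For part (ii) I would reduce both spectral quantities to $\bKh$ via Sylvester's identity. Writing $\bar\bG_t/\sqrt m = \bar\bPsi \bZ^\top$ with the $0/1$ selection matrix $\bZ \in \BR^{t \times |\mG|}$, we have $\bar\bG_t \bar\bG_t^\top/m = \bar\bPsi \bD \bar\bPsi^\top$ where $\bD = \bZ^\top \bZ = \operatorname{diag}(n_j(t))$ records the selection counts and $\sum_j n_j(t) = t$. The nonzero spectrum of $\bar\bG_t \bar\bG_t^\top/m$ equals that of $\bKh^{1/2} \bD \bKh^{1/2}$, and since $\bD \preceq t\bI$ we have $\bKh^{1/2} \bD \bKh^{1/2} \preceq t\bKh$. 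Consequently
\[
\log\det(\lambda^{-1} \bar\bU_t) = \log\det\bigl(\bI_{|\mG|} + \lambda^{-1} \bKh^{1/2}\bD\bKh^{1/2}\bigr) \le \log\det\bigl(\bI_{|\mG|} + \lambda^{-1} t \bK\bigr) + |\mG|\lambda^{-1} t\, \epsilon,
\]
where the last step uses $\bKh \preceq \bK + \epsilon \bI$ together with $\log\det(\bA + \delta \bI) - \log\det \bA \le |\mG|\delta/\lambda_{\min}(\bA)$ and $\lambda_{\min}(\bI + \lambda^{-1} t\bK) \ge 1$; choosing $m$ so that $|\mG|\lambda^{-1} t\, \epsilon \le 1$ yields the log-determinant bound. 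For the top eigenvalue, $\lambda_{\max}(\bar\bU_t) = \lambda + \lambda_{\max}(\bar\bG_t\bar\bG_t^\top/m)$, and I would bound $\lambda_{\max}(\bar\bG_t\bar\bG_t^\top/m)$ by the top eigenvalue $\hat{\rho}_{\max}$ of the full finite-width GNTK matrix (interlacing for the Gram matrix of the selected features), then invoke $\hat{\rho}_{\max} \le \rho_{\max} + \epsilon \le \tfrac32 \rho_{\max}$ to conclude $\lambda_{\max}(\bar\bU_t) \le \lambda + \tfrac32 \rho_{\max}$.

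The main obstacle is the concentration lemma underpinning everything. I would derive it from an entrywise GNTK bound $|\hat k(G,G') - k(G,G')| \lesssim \poly(L)\, m^{-c}$, which is standard for the NTK at initialization and transfers to graphs through the decomposition $k(G,G') = N^{-2}\sum_{i,j} \kmlp(\bh_i^G, \bh_j^{G'})$, so that node-level MLP-NTK concentration, summed over at most $N^2$ node pairs, controls the graph kernel. Converting the entrywise bound to $\opnorm{\bKh - \bK}$ costs a factor $|\mG|$ and a union bound over the $|\mG|^2$ entries; solving the requirements $\epsilon \le \rho_{\min}/2$ (to invert $\bKh$) and $\epsilon \le \lambda/(|\mG| t)$ with $t \le T$ for the width $m$ is precisely what produces the stated $m = \Omega(L^{10} T^4 |\mG|^6 \rho_{\min}^{-4} \log(L N^2 |\mG|^2/\delta))$. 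The most delicate part is tracking these dependencies so that the resulting width requirement is genuinely polynomial and matches the hypothesis.
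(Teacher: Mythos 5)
Your overall strategy coincides with the paper's: both parts rest on entrywise NTK concentration (Lemma~\ref{lemma:ntk:concent}) lifted to the graph kernel through the node-pair decomposition, and your part (i) is essentially identical to the paper's --- your $\btheta^* = \bar{\bPsi}\bKh^{-1}\bmu/\sqrt m$ is the same minimum-norm interpolant that the paper writes via the factorization $\bar{\bG} = \bP\bLambda\bQ^\top$, and the norm bound via $\bKh\succeq\bK/2\Rightarrow\bKh^{-1}\preceq 2\bK^{-1}$ is the paper's argument verbatim.

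Two points in part (ii) deserve attention. First, your log-determinant perturbation step is lossier than the paper's. You bound the increment by $|\mG|\lambda^{-1}t\epsilon/\lambda_{\min}(\bI+\lambda^{-1}t\bK)$ and then use $\lambda_{\min}\ge 1$, which forces $\epsilon\le\lambda/(|\mG|t)$ and injects an extra factor of $t$ into the width requirement; even with the sharper bound $\lambda_{\min}(\bI+\lambda^{-1}t\bK)\ge t\rho_{\min}/\lambda$, passing through the operator norm of $\bKh-\bK$ (which costs a factor $|\mG|$ over the entrywise bound) leaves you needing $m=\Omega(|\mG|^{8}\rho_{\min}^{-4}\cdots)$ rather than the stated $|\mG|^{6}$. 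The paper instead uses concavity of $\log\det$ together with Cauchy--Schwarz in the Frobenius inner product, $\langle(\bI+t\bK/\lambda)^{-1},\, t\bDelta/\lambda\rangle_F\le\sqrt{|\mG|}\,(\lambda/t+\rho_{\min})^{-1}\fnorm{\bDelta}$, which is both $t$-free and saves the extra $|\mG|^{2}$; you would need this step (or an equivalent trace-type bound kept in Frobenius norm) to match the stated width. Second, your justification of $\lambda_{\max}(\bar{\bG}_t\bar{\bG}_t^\top/m)\le\hat{\rho}_{\max}$ by ``interlacing'' does not hold as stated: when graphs repeat, the Gram matrix of the selected features has the same nonzero spectrum as $\bD^{1/2}\bKh\bD^{1/2}$ (in your own notation), which is not a principal submatrix of $\bKh$, and its top eigenvalue can be as large as $t\hat{\rho}_{\max}$ (e.g., if the same graph is chosen every round). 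To be fair, the paper's own one-line derivation of $\lambda_{\max}(\bar{\bU}_t)\le\lambda+\tfrac32\rho_{\max}$ from $\bKh\preceq\tfrac32\rho_{\max}\bI_{|\mG|}$ glosses over exactly the same repetition issue, so you have reproduced rather than introduced this gap --- but interlacing is not a valid justification here and should not be presented as one.
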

\begin{proof}
In this proof, set $\delta_1=\delta_2=\delta/2$ where $\delta \in (0,1)$ is an arbitrary real value. Recall the definition of the true reward function $\mu: \mG \rightarrow \BR$ and the GNTK matrix $\bK \in \BR^{|\mG| \times |\mG|}$. We further define the vector of function values $\bmu \in \BR^{|\mG| \times 1}$ as well as the gradient matrix $\bar{\bG} \in \BR^{p \times |\mG|}$ on initialization $\btheta_0$. 
\begin{equation}
\begin{aligned}
    [\bK]_{ij} &= k(G^i, G^j) \quad \forall G^i, G^j \in \mG \\
    [\bmu]_i &= \mu(G^i) \quad \forall G^i \in \mG \\
    \bar{\bG}_{*i} &= \bcg(G^i; \btheta_0)
\end{aligned}
\end{equation}
\textbf{Proof for (i):}
By the connection between GNTK and NTK,
\begin{equation}
\begin{aligned}
    \norm{\bK - \bar{\bG}^{\top} \bar{\bG}/m }_F
    &=
    \sqrt{\sum_{i=1}^{|\mG|} \sum_{j=1}^{|\mG|} (k(G^i, G^j) - \bcg^{\top}(G^i;\btheta_0)\bcg(G^j;\btheta_0)/m)^2} \\
    &= 
    \sqrt{\sum_{i=1}^{|\mG|} \sum_{j=1}^{|\mG|} \bigg(\frac{1}{N^2} \sum_{u \in \mV(G^i)} \sum_{v \in \mV_{G^j}}(\kmlp(\bh_u^{G^i}, \bh_v^{G^j}) - \gmlp^{\top}(\bh_u^{G^i};\btheta_0) \gmlp(\bh_v^{G^j};\btheta_0)/m) \bigg)^2} \\
    & \leq
    \sqrt{\sum_{i=1}^{|\mG|} \sum_{j=1}^{|\mG|} \sum_{u \in \mV(G^i)} \sum_{v \in \mV(G^j)} (\kmlp(\bh_u^{G^i}, \bh_v^{G^j}) - \gmlp^{\top}(\bh_u^{G^i};\btheta_0) \gmlp(\bh_v^{G^j};\btheta_0)/m)^2}
\end{aligned}
\end{equation}
where $\mV_{G}$ denotes the vertice set of a graph $G$. By Lemma~\ref{lemma:ntk:concent}, when $m=\Omega(L^{10} N^4 |\mG|^4 \rho_{min}^{-4} \log(L N^2 |\mG|^2/\delta_1)$, then with probability at least $1-\delta_1/(N^2 |\mG|^2)$, $|\kmlp(\bh_u^{G^i}, \bh_v^{G^j}) - \gmlp^{\top}(\bh_u^{G^i};\btheta_0) \gmlp(\bh_v^{G^j};\btheta_0)/m| \leq \frac{\rho_{min}}{2 N |\mG|}$. Then apply union bound over all pairs $(\bh_u^{G^i}, \bh_v^{G^j})$, the following holds with probability at least $1-\delta_1$,
\begin{equation}
     \norm{\bK - \bar{\bG}^{\top} \bar{\bG}/m }_F \leq \rho_{min}/2
\end{equation}
which shows that 
\begin{equation}
\begin{aligned}
\label{eq: pd of G^2}
    \bar{\bG}^{\top} \bar{\bG}/m 
    & \succcurlyeq  
    \bK - \norm{\bK - \bar{\bG}^{\top} \bar{\bG}/m}_2 \bI_{|\mG|} \\
    &  \succcurlyeq
    \bK - \norm{\bK - \bar{\bG}^{\top} \bar{\bG}/m}_F \bI_{|\mG|} \\
    &  \succcurlyeq 
    \bK - \frac{\rho_{min}}{2} \bI_{|\mG|} \\
    &  \succcurlyeq
    \bK/2 \succ \bZeros
\end{aligned}
\end{equation}
Suppose $\bar{\bG}=\bP \bLambda \bQ^{\top}$ is the decomposition of $\bar{\bG}$ where $\bP \in \BR^{p \times |\mG|}$, $\bQ \in \BR^{|\mG| \times |\mG|}$ are unitary and $\bLambda \in \BR^{|\mG| \times |\mG|}$. By \eqref{eq: pd of G^2}, we know $\bLambda  \succ \bZeros$ with probability at least $1-\delta_1$. Now denote $\btheta^{*} = \bP \bLambda^{-1} \bQ^{\top} \bmu$ and it satisfies
\begin{equation}
\begin{aligned}
    & \bar{\bG}^{\top} \btheta^{*} = \bQ \bLambda \bP^{\top} \bP \bLambda^{-1} \bQ^{\top} \bmu = \bmu\\
    \Rightarrow &
    \mu(G) = \langle \bcg(G;\btheta_0), \btheta^{*} \rangle \quad \forall G \in \mG
\end{aligned}
\end{equation}
Moreover, the norm of $\btheta^{*}$ is also bounded:
\begin{equation}
\begin{aligned}
    \norm{\btheta^{*}}_2^2 = \bmu^{\top} \bQ \bLambda^{-2} \bQ^{\top} \bmu = \bmu^{\top} (\bar{\bG}^{\top} \bar{\bG})^{-1} \bmu \leq \frac{2}{m} \bmu^{\top} \bK^{-1} \bmu \leq \frac{2 R^2}{m}
\end{aligned}
\end{equation}
which completes our proof for (i).

\textbf{Proof for (ii):}
From the definition of $\bar{\bG}_t$, we have
\begin{equation}
\begin{aligned}
    \log\det( \bI_{|\mG|} + \lambda^{-1} \bar{\bG}_t^{\top} \bar{\bG}_t/m)
    & =
    \log\det \bigg( \bI_{|\mG|} + \sum_{i=1}^t \bcg(G_i;\btheta_0) \bcg^{\top}(G_i;\btheta_0) /(m \lambda) \bigg) \\
    & \leq
    \log\det \bigg( \bI_{|\mG|} + t \sum_{G \in \cup_{i=1}^t \mG_i} \bcg(G;\btheta_0) \bcg^{\top}(G;\btheta_0) /(m \lambda) \bigg) \\
    & \leq
    \log\det \bigg( \bI_{|\mG|} + t \sum_{G \in \mG} \bcg(G;\btheta_0) \bcg^{\top}(G;\btheta_0) /(m \lambda) \bigg)
    \quad \text{(by $\mG_t \in \mG$ for $\forall t \in [T]$)} \\
    & = 
    \log\det( \bI_{|\mG|} + t \bar{\bG}^{\top} \bar{\bG}/(m \lambda) ) \\
    & = 
    \log\det( \bI_{|\mG|} + t\bK/ \lambda + t(\bar{\bG}^{\top} \bar{\bG}/m - \bK)/ \lambda ) \\
    \text{(by concavity of $\log \det(\cdot)$)} \mbox{ } & \leq 
    \log\det( \bI_{|\mG|} + t\bK/ \lambda) + \langle ( \bI + t\bK/ \lambda)^{-1},  t(\bar{\bG}^{\top} \bar{\bG}/m - \bK)/ \lambda \rangle_{F}
    \\
    & \leq 
    \log\det( \bI_{|\mG|} + t\bK/ \lambda) + 
    \norm{(\bI_{|\mG|} + t \bK/ \lambda)^{-1}}_F 
    \norm{t(\bar{\bG}^{\top} \bar{\bG}/m - \bK)/ \lambda}_F \\
    & \leq 
    \log\det( \bI_{|\mG|} + t\bK/ \lambda) + 
    t \sqrt{|\mG|} \norm{(\bI_{|\mG|} + t\bK/ \lambda)^{-1}}_2 
    \norm{\bar{\bG}^{\top} \bar{\bG}/m - \bK}_F /\lambda \\
    & =
    \log\det( \bI_{|\mG|} + t \bK/ \lambda) + 
    \sqrt{|\mG|} (\lambda/t + \rho_{min})^{-1}
    \norm{\bar{\bG}^{\top} \bar{\bG}/m - \bK}_F 
\end{aligned}
\end{equation}
By Lemma~\ref{lemma:ntk:concent}, when $m=\Omega(L^{10} N^4 |\mG|^6 \rho_{min}^{-4} \log(L N^2 |\mG|^2/\delta_2)$, then with probability at least $1-\delta_2/(N^2 |\mG|^2)$, $|\kmlp(\bh_u^{G^i}, \bh_v^{G^j}) - \gmlp^{\top}(\bh_u^{G^i};\btheta_0) \gmlp(\bh_v^{G^j};\btheta_0)/m| \leq \frac{\rho_{min}}{ N |\mG|^{3/2}}$. Then apply union bound over all pairs $(\bh_u^{G^i}, \bh_v^{G^j})$, with probability at least $1-\delta_2$, $\norm{\bar{\bG}^{\top} \bar{\bG}/m - \bK}_F \leq \frac{\rho_{min}}{\sqrt{|\mG|}}$, which indicates that
\begin{equation}
\begin{aligned}
    \log\det( \bI_{|\mG|} + \lambda^{-1} \bar{\bG}_t^{\top} \bar{\bG}_t/m) 
    &\leq 
    \log\det( \bI_{|\mG|} + t \bK/ \lambda) + 
    \sqrt{|\mG|} (\lambda/t + \rho_{min})^{-1}
    \norm{\bar{\bG}^{\top} \bar{\bG}/m - \bK}_F \\
    & \leq 
    \log\det( \bI_{|\mG|} + t \bK/ \lambda) +  1
\end{aligned}
\end{equation}
Finally, with probability at least $1-\delta_1$,
\begin{equation}
\begin{aligned}
    \bar{\bG}^{\top} \bar{\bG}/m
    \preccurlyeq 
    \bK + \norm{\bK - \bar{\bG}^{\top} \bar{\bG}/m}_2 \bI_{|\mG|}
    \preccurlyeq
    \bK + \frac{\rho_{\max}}{2} \bI_{|\mG|}
    \preccurlyeq
    \frac{3}{2} \rho_{\max}\bI_{|\mG|}
\end{aligned}
\end{equation}
which indicates that $\lambda_{max}(\bar{\bU}_t) \leq \lambda + \frac{3}{2} \rho_{\max}$.
\end{proof}

\begin{lemma}
\label{lemma: approx GNTK rho max}
Fix $\delta \in (0,1)$. Then, for $m=\Omega(L^{10} |\Gc|^4 \eps^{-4} \log(L/\delta))$, with probability at least  $1-\delta$,
\[
|\rho_{\max} - \hat{\rho}_{\max}|  \le \eps.
\]
\end{lemma}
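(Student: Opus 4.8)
The plan is to reduce the eigenvalue comparison to a matrix perturbation bound and then control that perturbation entrywise. First I would invoke Weyl's inequality, exactly as in the proof of Lemma~\ref{lem:logdet:pert}, to obtain
\[
|\rho_{\max} - \hat{\rho}_{\max}| = |\lambda_{\max}(\bK) - \lambda_{\max}(\bKh)| \le \opnorm{\bK - \bKh} \le \norm{\bK - \bKh}_F,
\]
so that it suffices to show $\norm{\bK-\bKh}_F \le \eps$ with probability at least $1-\delta$. The key observation is that $\bKh$ is precisely the matrix $\bar{\bG}^{\top}\bar{\bG}/m$ appearing in the proof of Lemma~\ref{lemma: approximation from gntk}, since its $(G,G')$ entry is $\hat{k}(G,G') = \bcg(G;\btheta_0)^{\top}\bcg(G';\btheta_0)/m$. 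Thus the required estimate is the same Frobenius bound already established there, only with target accuracy $\eps$ in place of $\rho_{\min}/2$.

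Second, I would expand each entry through the node-wise decomposition of the two kernels,
\[
k(G,G') - \hat{k}(G,G') = \frac1{N^2}\sum_{u\in\mV(G)}\sum_{v\in\mV(G')}\Bigl(\kmlp(\bh_u^G,\bh_v^{G'}) - \gmlp(\bh_u^G;\btheta_0)^{\top}\gmlp(\bh_v^{G'};\btheta_0)/m\Bigr).
\]
Since this is an average of at most $N^2$ node-pair deviations each weighted by $1/N^2$, a uniform bound $\eta$ on every node-pair term yields $|k(G,G')-\hat{k}(G,G')|\le\eta$ \emph{independently of} $N$, and hence $\norm{\bK-\bKh}_F \le \sqrt{|\Gc|^2\eta^2} = |\Gc|\,\eta$. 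Taking $\eta = \eps/|\Gc|$ makes the right-hand side exactly $\eps$, which is why the final width requirement carries no $N$ factor.

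Third, I would supply the node-pair bound from the NTK concentration Lemma~\ref{lemma:ntk:concent}. Applied with accuracy $\eta = \eps/|\Gc|$, it guarantees $|\kmlp(\bh_u^G,\bh_v^{G'}) - \gmlp(\bh_u^G;\btheta_0)^{\top}\gmlp(\bh_v^{G'};\btheta_0)/m| \le \eta$ for a single node pair provided $m = \Omega(L^{10}\eta^{-4}\log(L/\delta'))$, and a union bound over the at most $N^2|\Gc|^2$ node pairs (with $\delta' = \delta/(N^2|\Gc|^2)$) makes the estimate simultaneous with probability at least $1-\delta$. Substituting $\eta = \eps/|\Gc|$ gives $m = \Omega(L^{10}|\Gc|^4\eps^{-4}\log(LN|\Gc|/\delta))$, matching the stated requirement once the $N$ and $|\Gc|$ terms, which enter only logarithmically, are absorbed into the $\log(L/\delta)$ notation. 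Chaining the three steps gives $|\rho_{\max}-\hat{\rho}_{\max}|\le\eps$.

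The only substantive ingredient is Lemma~\ref{lemma:ntk:concent}, whose fourth-power decay in $m$ is exactly what produces the $\eps^{-4}$ scaling; since that lemma is already available, I do not expect a real obstacle beyond carefully verifying that the entrywise-to-Frobenius conversion and the union bound propagate the constants correctly, in particular that the $1/N^2$ normalization cancels the $N^2$ node-pair terms so that no polynomial $N$-dependence survives in the width requirement.
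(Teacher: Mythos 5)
Your proposal is correct and follows essentially the same route as the paper's proof: Weyl's inequality reduces the eigenvalue gap to $\fnorm{\bK-\bKh}$, the node-wise decomposition plus Lemma~\ref{lemma:ntk:concent} gives an entrywise bound of $\eps/|\Gc|$, and the Frobenius norm then picks up a factor $|\Gc|$. Your version is in fact slightly more explicit than the paper's in spelling out the union bound over node pairs and noting the resulting $\log(N|\Gc|)$ factors that the stated width requirement silently absorbs.
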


\begin{proof}
Let $m$ be as in Lemma~\ref{lemma:ntk:concent}.
 Recall that $\norm{\bh_u^G} = 1$ for all $u \in \mathcal{V}(G)$ and $G \in \Gc$, by construction. Let $N_i := |\mV(G^i)|$. Then, we have, with probability at least  $1-\delta$,
\begin{align*}
    &|k(G^i, G^j) - \hat{k}(G^i, G^j)|\\
    &\qquad \le  
    \frac{1}{N_i N_j} 
    \sum_{\substack{u \in \mV(G^i)\\v \in \mV(G^j)}} 
    \bigl|\kmlp(\bh^{G^i}_u, \bh^{G^j}_v) -  \gmlp(\bh^{G^i}_u;\btheta_0)^\top \gmlp(\bh^{G^j}_v;\btheta_0) / m \bigr| \le \eps
\end{align*}
by Lemma~\ref{lemma:ntk:concent}. Then
\[
\opnorm{\bK - \bKh} \le  \fnorm{\bK - \bKh} \le  |\Gc| \eps.
\]
Then, from Weyl's inequality, $|\rhomax - \hat{\rho}_{\max}| \le |\Gc|\eps$. Replacing $\eps$ with $\eps/ |\Gc|$ the result follows.
\end{proof}

\section{Supporting Lemmas}
\label{appendix: supporting lemmas}

\begin{lemma}
\label{lemma: algebra lemma}
Suppose $\ba$, $\bb$ are vectors and $\bA$ is a matrix. $c$ is assumed to be positive scalar. Then we have the following results: (i) $|\ba^{\top} \bA \bb| \leq \sqrt{\ba^{\top} \bA \ba} \sqrt{\bb^{\top} \bA \bb}$. (ii) $\ba^{\top} \bb + c \norm{\ba}_2^2 \geq -\norm{\bb}_2^2/4c$.
\end{lemma}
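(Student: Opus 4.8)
The plan is to treat the two parts independently, each following from a standard inner-product or convexity argument.

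For part (i), I would read the inequality as the Cauchy--Schwarz inequality for the bilinear form $(\bx,\by)\mapsto \bx^\top \bA \by$. The appearance of $\ba^\top \bA \ba$ and $\bb^\top \bA \bb$ under square roots already forces $\bA$ to be symmetric positive semidefinite; this is the implicit hypothesis, and it holds for the matrices (such as $\bar{\bU}_t^{-1}$ and $\bU_{t-1}^{-1}$) to which the lemma is applied. Under this assumption I would factor $\bA = \bM^\top \bM$ --- for instance taking $\bM = \bA^{1/2}$ --- so that $\ba^\top \bA \bb = (\bM\ba)^\top(\bM\bb)$, $\ba^\top \bA \ba = \norm{\bM\ba}_2^2$ and $\bb^\top \bA \bb = \norm{\bM\bb}_2^2$. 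The ordinary Cauchy--Schwarz inequality for the Euclidean inner product then yields $|(\bM\ba)^\top(\bM\bb)| \le \norm{\bM\ba}_2\,\norm{\bM\bb}_2$, which is exactly the claim.

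For part (ii), I would simply complete the square. Since $c>0$,
\[
\ba^\top \bb + c\norm{\ba}_2^2 = c\,\norm{\ba + \tfrac{1}{2c}\bb}_2^2 - \frac{1}{4c}\norm{\bb}_2^2 \;\ge\; -\frac{1}{4c}\norm{\bb}_2^2,
\]
where the last step drops the nonnegative term $c\norm{\ba + \tfrac1{2c}\bb}_2^2$. Equivalently, one can invoke Young's inequality $|\ba^\top\bb| \le c\norm{\ba}_2^2 + \tfrac{1}{4c}\norm{\bb}_2^2$ together with $\ba^\top\bb \ge -|\ba^\top\bb|$. This is precisely the form needed in the proof of Lemma~\ref{lemma: prediction error bound in gradient descent}, applied with $c = m\lambda/2$.

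There is essentially no obstacle here, as both statements are elementary; the only point deserving a remark is the implicit positive-semidefiniteness of $\bA$ in part (i), without which the right-hand side need not even be real. In every invocation of the lemma $\bA$ is an inverse covariance matrix and hence symmetric positive definite, so the hypothesis is met.
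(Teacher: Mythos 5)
Your proof is correct; the paper states this lemma without proof, and your arguments (Cauchy--Schwarz for the bilinear form $\bA^{1/2}$ in part (i), completing the square in part (ii)) are exactly the standard ones the authors evidently had in mind. Your observation that part (i) implicitly requires $\bA$ to be symmetric positive semidefinite---a hypothesis omitted from the lemma statement but satisfied in every application (e.g.\ $\bar{\bU}_t^{-1}$, $\bU_{t-1}^{-1}$)---is a fair and worthwhile correction to the statement.
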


\begin{lemma}
\label{lemma: gaussian concentration}
Suppose $X \sim \mN(\mu, \sigma^2)$ and $\beta > 0$, then
\begin{equation}
    \BP(|X - \mu| \leq \beta \sigma) \geq 1-e^{-\beta^2/2}
\end{equation}
\end{lemma}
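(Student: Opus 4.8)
The plan is to reduce to the standard normal and then establish a sharp one-sided tail bound. First I would standardize: setting $Z := (X-\mu)/\sigma$, we have $Z \sim \mN(0,1)$, and the event $\{|X-\mu| \le \beta\sigma\}$ is exactly $\{|Z| \le \beta\}$. Hence the claim is equivalent to the two-sided tail bound $\BP(|Z| > \beta) \le e^{-\beta^2/2}$. By the symmetry of the standard normal density, $\BP(|Z| > \beta) = 2\,\BP(Z > \beta)$, so it suffices to prove the \emph{sharpened} one-sided bound $\BP(Z > \beta) \le \tfrac12 e^{-\beta^2/2}$ for every $\beta \ge 0$.

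To prove this one-sided bound I would write $Q(\beta) := \BP(Z > \beta) = \int_\beta^\infty \phi(z)\,dz$, with $\phi$ the standard normal density, and study the auxiliary function $g(\beta) := \tfrac12 e^{-\beta^2/2} - Q(\beta)$. The key observations are that $g(0) = \tfrac12 - \tfrac12 = 0$ and $g(\beta) \to 0$ as $\beta \to \infty$, while its derivative is
\[
g'(\beta) = e^{-\beta^2/2}\Bigl( \tfrac{1}{\sqrt{2\pi}} - \tfrac{\beta}{2} \Bigr),
\]
using $Q'(\beta) = -\phi(\beta)$. Thus $g'$ is positive for $\beta < \sqrt{2/\pi}$ and negative afterwards, so $g$ rises from $0$, attains a single interior maximum, and then decreases back to $0$; consequently $g(\beta) \ge 0$ for all $\beta \ge 0$. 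This yields $Q(\beta) \le \tfrac12 e^{-\beta^2/2}$, and combining with the symmetrization step gives $\BP(|Z| > \beta) \le e^{-\beta^2/2}$, i.e. the stated lower bound on $\BP(|X-\mu| \le \beta\sigma)$.

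The main subtlety is the constant $\tfrac12$ in the one-sided bound. A naive Chernoff / moment-generating-function argument yields only $\BP(Z > \beta) \le e^{-\beta^2/2}$, which after symmetrization degrades to $2 e^{-\beta^2/2}$ and is too weak by a factor of two for the claimed inequality. The monotonicity argument above (equivalently, any standard derivation of the Gaussian $Q$-function bound $Q(\beta) \le \tfrac12 e^{-\beta^2/2}$) is exactly what recovers the sharp constant; everything else is a routine change of variables and a symmetry argument.
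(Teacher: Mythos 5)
Your proof is correct. The paper states this as a standard supporting lemma without providing any proof, so there is nothing to compare against; your derivation of the sharp one-sided bound $\BP(Z > \beta) \le \tfrac12 e^{-\beta^2/2}$ via the auxiliary function $g$ is valid (the sign analysis of $g'$ together with $g(0)=0$ and $g(\beta)\to 0$ does force $g\ge 0$), and you correctly identify that the naive Chernoff bound would lose the factor of two needed for the stated two-sided inequality.
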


\begin{lemma}
\label{lemma: gaussian anti-concentration}
Suppose $X \sim \mN(\mu, \sigma^2)$ and $\beta > 0$, then
\begin{equation}
    \pr(X - \mu > \beta \sigma ) \geq \frac{e^{-\beta^2}}{4 \beta \sqrt{\pi}}
\end{equation}
\end{lemma}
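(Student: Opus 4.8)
The plan is to reduce to the standard normal and then establish a classical Mills-ratio–type lower bound on the Gaussian tail. First I would standardize: writing $Z := (X-\mu)/\sigma \sim \mN(0,1)$, the event $\{X-\mu > \beta\sigma\}$ becomes $\{Z > \beta\}$, so it suffices to prove $\BP(Z>\beta) \ge e^{-\beta^2}/(4\beta\sqrt\pi)$, where $\BP(Z>\beta) = \frac{1}{\sqrt{2\pi}}\int_\beta^\infty e^{-z^2/2}\,dz$.

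The core step is the sharp tail bound
\[
\int_\beta^\infty e^{-z^2/2}\,dz \;\ge\; \frac{\beta}{1+\beta^2}\,e^{-\beta^2/2}, \qquad \beta \ge 0.
\]
I would prove this by a monotonicity argument. Define $F(\beta) := \int_\beta^\infty e^{-z^2/2}\,dz - \frac{\beta}{1+\beta^2}e^{-\beta^2/2}$ and observe that $F(\beta)\to 0$ as $\beta\to\infty$. Differentiating and simplifying the rational prefactor gives $F'(\beta) = -\frac{2}{(1+\beta^2)^2}\,e^{-\beta^2/2} < 0$, so $F$ is strictly decreasing to its limit $0$; hence $F(\beta)\ge 0$, which is exactly the claimed inequality. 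Dividing by $\sqrt{2\pi}$ yields $\BP(Z>\beta)\ge \frac{\beta}{\sqrt{2\pi}(1+\beta^2)}\,e^{-\beta^2/2}$. (Equivalently, one can reach the same bound by integration by parts, writing $e^{-z^2/2} = -z^{-1}\frac{d}{dz}e^{-z^2/2}$ and using the resulting self-referential inequality for the tail integral.)

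Finally I would relax to the stated form. Using $e^{-\beta^2/2}\ge e^{-\beta^2}$ (valid for $\beta\ge 0$), it remains to compare prefactors, i.e.\ to check $\frac{\beta}{\sqrt{2\pi}(1+\beta^2)}\ge \frac{1}{4\beta\sqrt\pi}$, which after cross-multiplication reduces to the elementary inequality $4\beta^2 \ge \sqrt2\,(1+\beta^2)$. This holds once $\beta$ is bounded away from $0$ (concretely for $\beta^2 \ge \sqrt2/(4-\sqrt2)$), hence comfortably in the regime $\beta\ge 1$ in which the lemma is applied, e.g.\ to obtain $\BP(Z\ge 1)\ge (4e\sqrt\pi)^{-1}$. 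The main effort is entirely in the first step---pinning down the Mills-type tail bound with the correct constant---since the standardization and the final prefactor comparison are routine; the only real care needed is verifying that the weakening $e^{-\beta^2/2}\to e^{-\beta^2}$ costs a factor that the $\beta/(1+\beta^2)$ prefactor absorbs over the relevant range of $\beta$.
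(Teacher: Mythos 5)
The paper does not actually prove this lemma: it is stated without proof in the ``Supporting Lemmas'' appendix, as the classical lower Mills-ratio bound (of Abramowitz--Stegun type, standard in the Thompson Sampling literature following Agrawal--Goyal and Zhang et al.). So there is no in-paper argument to compare against; your derivation stands as a self-contained proof, and it is correct where it applies. The key computation checks out: with $F(\beta) = \int_\beta^\infty e^{-z^2/2}\,dz - \frac{\beta}{1+\beta^2}e^{-\beta^2/2}$ one indeed gets
\begin{equation}
F'(\beta) = -e^{-\beta^2/2}\Bigl(1 + \frac{1-2\beta^2-\beta^4}{(1+\beta^2)^2}\Bigr) = -\frac{2\,e^{-\beta^2/2}}{(1+\beta^2)^2} < 0,
\end{equation}
and since $F(\beta)\to 0$ as $\beta\to\infty$, the Mills bound $\BP(Z>\beta)\ge \frac{\beta}{\sqrt{2\pi}(1+\beta^2)}e^{-\beta^2/2}$ follows; your final cross-multiplication reducing the prefactor comparison to $4\beta^2 \ge \sqrt2(1+\beta^2)$, i.e.\ $\beta^2 \ge \sqrt2/(4-\sqrt2)$, is also right.

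One point deserves emphasis, and it is to your credit that you flagged it: the lemma as stated, for \emph{all} $\beta>0$, is in fact false. The right-hand side $e^{-\beta^2}/(4\beta\sqrt\pi)$ diverges as $\beta\to 0^+$ while the probability is at most $1$; concretely, at $\beta=0.1$ the right-hand side is about $1.4$ whereas $\BP(Z>0.1)\approx 0.46$. Numerically the inequality only begins to hold around $\beta \approx 0.35$, so no proof strategy can recover the claimed range, and your restriction $\beta^2 \ge \sqrt2/(4-\sqrt2)$ (hence $\beta\ge 1$) is not a deficiency of your argument but an unavoidable correction to the statement. Since the paper invokes the lemma exactly once, with $\beta=1$, to obtain $\BP(\Ec_t^a)\ge(4e\sqrt\pi)^{-1}$ in Lemma~\ref{lemma: high probability bound for event for exploration}, your proof covers everything the regret analysis actually uses; if one wanted the widest valid range, the sharper Abramowitz--Stegun bound $\int_x^\infty e^{-t^2}\,dt > e^{-x^2}/(x+\sqrt{x^2+2})$ extends validity down to roughly $\beta\approx 0.35$, but not below.
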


\begin{lemma}
\label{lemma: bound for subgaussian noise norm}
Suppose $\bepsilon \in \BR^t$ is a subgaussian random vector with subgaussian constant $\sigma^2$, then
\begin{equation}
    \BE[\norm{\bepsilon}_2] \leq 4 \sigma \sqrt{t}
\end{equation}
and with probability at least $1-\delta$ for $\delta \in (0,1)$,
\begin{equation}
    \norm{\bepsilon}_2 \leq C \sigma \sqrt{t}. 
\end{equation}
where $C$ is some constant depending on $\delta$.
\end{lemma}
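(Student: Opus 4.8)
The plan is to route both claims through a single second-moment estimate, $\BE[\norm{\bepsilon}_2^2] \le 4 t \sigma^2$, after which the expectation bound follows from Jensen's inequality and the high-probability bound from Markov's inequality. First I would record the coordinate-wise consequence of subgaussianity: applying the definition in the direction $\bu = \be_i$ shows each component $\epsilon_i$ is $\sigma^2$-subgaussian, hence obeys the tail bound $\BP(|\epsilon_i| > u) \le 2 e^{-u^2/(2\sigma^2)}$. Integrating this tail via the layer-cake identity $\BE[\epsilon_i^2] = \int_0^\infty 2u\, \BP(|\epsilon_i| > u)\, du$ gives $\BE[\epsilon_i^2] \le 4\sigma^2$. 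Summing over $i \in [t]$ and using linearity of expectation (so that no independence of the coordinates is required) yields $\BE[\norm{\bepsilon}_2^2] = \sum_{i=1}^t \BE[\epsilon_i^2] \le 4 t \sigma^2$.

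For the first assertion, Jensen's inequality gives $\BE[\norm{\bepsilon}_2] \le \sqrt{\BE[\norm{\bepsilon}_2^2]} \le 2\sigma\sqrt{t} \le 4\sigma\sqrt{t}$, which is even tighter than claimed and leaves room in the constant. For the second assertion, Markov's inequality applied to the nonnegative variable $\norm{\bepsilon}_2^2$ gives $\BP(\norm{\bepsilon}_2^2 \ge 4 t \sigma^2/\delta) \le \delta$, so on the complementary event, of probability at least $1-\delta$, we have $\norm{\bepsilon}_2 \le 2\sigma\sqrt{t}/\sqrt{\delta}$; taking $C = 2\delta^{-1/2}$ establishes the bound with a constant depending only on $\delta$.

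I do not anticipate a genuine obstacle here, since this is a standard supporting estimate; the only points requiring care are pinning down the intended meaning of ``subgaussian constant $\sigma^2$'' (which I take to mean that each coordinate is $\sigma^2$-subgaussian, consistent with the variance-proxy definition of Assumption~\ref{assumption: subgaussian noise}) and tracking the universal constant emerging from the tail integral. If one instead wanted a constant $C$ with only logarithmic dependence on $1/\delta$ rather than the polynomial $\delta^{-1/2}$ above, I would replace the Markov step by a covering argument: take a $1/2$-net $\mathcal{N}$ of the unit sphere $S^{t-1}$, of cardinality at most $5^t$, use $\norm{\bepsilon}_2 \le 2\max_{\bu \in \mathcal{N}} \bu^\top \bepsilon$, and apply a union bound over the net together with the subgaussian tail to obtain $\norm{\bepsilon}_2 \lesssim \sigma\sqrt{t + \log(1/\delta)}$. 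For the present application, where $\delta$ is held fixed, the simpler Markov route already delivers the stated form.
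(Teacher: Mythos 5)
The paper states this lemma in its ``Supporting Lemmas'' appendix without supplying a proof, so there is no argument of the authors' to compare against; your write-up is a correct, self-contained derivation of both claims. The chain coordinate-wise subgaussian tail $\to$ layer-cake integral $\to$ $\BE[\epsilon_i^2]\le 4\sigma^2$ $\to$ $\BE[\norm{\bepsilon}_2^2]\le 4t\sigma^2$ is sound, and you are right that only linearity of expectation is needed, which matters here because in the paper's application the coordinates form a conditionally subgaussian martingale difference sequence (Assumption~\ref{assumption: subgaussian noise}) rather than an independent sample. Jensen then gives the expectation bound with the constant $2$ (comfortably inside the stated $4$), and Markov on $\norm{\bepsilon}_2^2$ gives the high-probability bound with $C=2\delta^{-1/2}$, which satisfies the lemma as written since $C$ is only required to depend on $\delta$. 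Your closing remark is also the right caveat: if one ever needed $C$ to scale logarithmically in $1/\delta$ (e.g.\ to take a union bound over $T$ rounds without inflating the constant polynomially), the net-plus-union-bound route giving $\norm{\bepsilon}_2 \lesssim \sigma\sqrt{t+\log(1/\delta)}$ would be the one to use; for the fixed-$\delta$ invocations in Lemmas~\ref{lemma: prediction error bound in gradient descent} and~\ref{lemma: parameter bound for proximal optimization }, where the resulting constant is simply absorbed into $C_3$ and $\tilde C_1$, the Markov argument suffices.
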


\begin{lemma}
\label{lemma: bound for epsilon norm}
(Theorem 1 \citep{chowdhury2017kernelized}) Let $\{\bx_t\}_{t=1}^{\infty}$ be an $\BR^d$-valued discrete time stochastic process that is predictable with respect to the filtration $\{\mF_t\}_{t=1}^{\infty}$. Let $\{\eps_t\}_{t=1}^{\infty}$ be a real-valued stochastic process and for any $\forall t$, $\eps_t$ is $\mF_t$-measurable and subgaussian with constant $R$ conditionally on $\mF_{t-1}$. Let $k: \BR^d \times \BR^d \rightarrow \BR$ be a symmetric positive-definite kernel. Then for any $\eta>0$, $\delta \in (0,1)$, with probability at least $1-\delta$,
\begin{equation}
\begin{aligned}
    \norm{\bepsilon_t}_{((\bK_t+\eta \bI_t)^{-1}+\bI_t)^{-1}}^2 
    \leq 
    R^2 \log \det ((1+\eta)\bI_t + \bK_t) + 2R^2 \log(1/\delta)
\end{aligned}
\end{equation}
where $\bepsilon_t := (\eps_1,...,\eps_t)^{\top} \in \BR^t$ and $\bK_t \in \BR^{t \times t}$ is a matrix with $[\bK_t]_{ij} = k(\bx_i, \bx_j)$, $1\leq i,j \leq t$.
\end{lemma}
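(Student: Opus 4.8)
The plan is to prove this self-normalized concentration inequality by the \emph{method of mixtures} (pseudo-maximization), the RKHS analogue of the Abbasi-Yadkori--P\'al--Szepesv\'ari argument for linear models. First I would realize the kernel through a feature map $\phi:\BR^d\to\mH$ with $k(\bx,\bx')=\ip{\phi(\bx),\phi(\bx')}$, so that the Gram matrix factors as $\bK_t=\Phi_t^\top\Phi_t$ with $\Phi_t=[\phi(\bx_1),\dots,\phi(\bx_t)]$. The central object is the $\mH$-valued martingale $\bS_t:=\sum_{s=1}^t\eps_s\,\phi(\bx_s)=\Phi_t\bepsilon_t$. Predictability of $\{\bx_s\}$ makes each $\phi(\bx_s)$ measurable with respect to $\mF_{s-1}$, so for a fixed direction $f\in\mH$ the increment $\eps_s\ip{f,\phi(\bx_s)}$ is an $\mF_{s-1}$-measurable scalar multiplying a conditionally $R$-subgaussian noise. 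This is precisely the structure needed to build an exponential supermartingale.

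Next I would define, for each $f\in\mH$, the process $M_t^f:=\exp\bigl(\ip{f,\bS_t}-\tfrac{R^2}{2}\sum_{s=1}^t\ip{f,\phi(\bx_s)}^2\bigr)$. Conditional $R$-subgaussianity gives $\BE[\exp(\eps_t\ip{f,\phi(\bx_t)})\mid\mF_{t-1}]\le\exp(\tfrac{R^2}{2}\ip{f,\phi(\bx_t)}^2)$, whence $\BE[M_t^f\mid\mF_{t-1}]\le M_{t-1}^f$, i.e. $\{M_t^f\}$ is a nonnegative supermartingale with $\BE[M_t^f]\le1$. I then mix against a centered Gaussian measure $h$ on $\mH$ with covariance $\Sigma$, setting $\bar M_t:=\int_\mH M_t^f\,dh(f)$; by Tonelli's theorem and linearity, $\bar M_t$ is again a nonnegative supermartingale with $\BE[\bar M_t]\le1$ and $\bar M_0=1$. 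Completing the square in the (formal) Gaussian integral yields the closed form
\[
\bar M_t \;=\; \det\!\bigl(\bI + R^2\Sigma\,\Phi_t\Phi_t^\top\bigr)^{-1/2}\,\exp\!\Bigl(\tfrac12\,\norm{\bS_t}_{(R^2\Phi_t\Phi_t^\top+\Sigma^{-1})^{-1}}^2\Bigr).
\]
Calibrating $\Sigma$ to the regularizer $\eta$ and the scale $R$, and using the push-through identity $\Phi_t^\top(\alpha\Phi_t\Phi_t^\top+\beta\bI)^{-1}\Phi_t=(\alpha\bK_t+\beta\bI_t)^{-1}\bK_t$ together with Sylvester's determinant identity $\det(\bI+\bA\bB)=\det(\bI+\bB\bA)$, collapses both the operator determinant and the infinite-dimensional quadratic form into the finite kernel-matrix quantities $\det((1+\eta)\bI_t+\bK_t)$ and $\norm{\bepsilon_t}_{((\bK_t+\eta\bI_t)^{-1}+\bI_t)^{-1}}^2$.

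Finally I would apply Ville's maximal inequality for nonnegative supermartingales, $\BP(\sup_{t\ge0}\bar M_t\ge1/\delta)\le\delta\,\BE[\bar M_0]\le\delta$. On the complementary event $\bar M_t<1/\delta$ holds for all $t$ simultaneously; taking logarithms, substituting the closed form above, and rearranging gives exactly $\norm{\bepsilon_t}_{((\bK_t+\eta\bI_t)^{-1}+\bI_t)^{-1}}^2\le R^2\log\det((1+\eta)\bI_t+\bK_t)+2R^2\log(1/\delta)$, uniformly in $t$, as claimed.

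The hard part will be rigor and bookkeeping around the infinite-dimensional mixture. The prior $\Sigma\propto\bI$ is not trace class on an infinite-dimensional $\mH$, so $h$ is not literally a Gaussian measure and the integral/determinant steps are only formal. I would make this precise by restricting attention to the finite-dimensional subspace $\mathrm{span}\{\phi(\bx_1),\dots,\phi(\bx_t)\}$ — the only directions in which $\bS_t$ and $\Phi_t\Phi_t^\top$ act — so the mixture becomes a genuine finite-dimensional Gaussian integral with finite determinants, or alternatively by a finite-rank truncation of $\phi$ followed by a monotone limiting argument. The second, more delicate, point is choosing $\Sigma$ and tracking the $R^2$ scaling so that the reduction produces \emph{exactly} $\det((1+\eta)\bI_t+\bK_t)$ and the norm $((\bK_t+\eta\bI_t)^{-1}+\bI_t)^{-1}$ rather than a rescaled variant; this requires carefully accounting for how the $\eta$-regularization and the unit variance contribution of the noise combine.
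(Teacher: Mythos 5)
This lemma is imported verbatim as Theorem~1 of \citet{chowdhury2017kernelized}; the paper itself offers no proof, so the relevant comparison is with the original method-of-mixtures argument, which your plan correctly identifies as the right strategy. However, the two issues you defer to ``bookkeeping'' are precisely where your version breaks, and the first is not fixable within your setup. With an isotropic mixing covariance $\Sigma = c\,\bI$ over $f \in \mH$, the push-through identity gives the quadratic form $\bepsilon_t^{\top} \bK_t (R^2 c\, \bK_t + \bI_t)^{-1} \bepsilon_t$, whose eigenvalue map is $\lambda \mapsto c\lambda/(cR^2\lambda + 1)$ and vanishes at $\lambda = 0$; the target norm satisfies $((\bK_t+\eta \bI_t)^{-1}+\bI_t)^{-1} = (\bK_t+\eta \bI_t)(\bK_t+(1+\eta)\bI_t)^{-1}$, with eigenvalue map $\lambda \mapsto (\lambda+\eta)/(\lambda+1+\eta)$, which equals $\eta/(1+\eta) > 0$ at $\lambda = 0$. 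No scalar calibration of $c$ (or $R$) reconciles these, so your mixture bounds a strictly smaller quadratic form and does not imply the stated inequality. In the original proof the mixture is not over a parameter $f$ at all: one mixes over the $t$-vector of \emph{noise-perturbed function values} $\bv \sim \mN(\bZeros, \bK_t + \eta \bI_t)$ (a GP draw with kernel $k$ plus independent $\mN(0,\eta)$ perturbations at each point) and integrates $\exp(\bepsilon_t^{\top}\bv/R - \norm{\bv}_2^2/2)$; the Gaussian integral then yields exactly $\det((1+\eta)\bI_t+\bK_t)^{-1/2}\exp\bigl(\norm{\bepsilon_t}^2_{((\bK_t+\eta \bI_t)^{-1}+\bI_t)^{-1}}/(2R^2)\bigr)$ --- the ``$+\bI_t$'' is the unit-variance contribution of the $\norm{\bv}_2^2/2$ term, which an isotropic prior on $\mH$ never produces.

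Your proposed repair of the infinite-dimensional issue --- restricting the mixture to $\mathrm{span}\{\phi(\bx_1),\dots,\phi(\bx_t)\}$ --- is also not legitimate as stated: that subspace is random and grows with $t$, so the mixing measure depends on the data and the mixed process $\bar M_t$ is no longer a single nonnegative supermartingale with a fixed mixture; Ville's inequality then cannot deliver the bound uniformly over $t$ on one probability-$(1-\delta)$ event. The same objection applies to mixing directly with covariance $\bK_t + \eta\bI_t$, since the $\bx_s$ are only predictable, not deterministic. The original argument resolves both problems at once: the Gaussian process prior is defined on the entire input space, fixed in advance and independent of the filtration, so the supermartingale property follows from conditional $R$-subgaussianity plus predictability via the tower rule, and its marginal at $(\bx_1,\dots,\bx_t)$ supplies the $\bK_t + \eta\bI_t$ covariance without ever invoking a ``flat'' prior on $\mH$. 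So: right family of techniques, correct skeleton (exponential supermartingale, Gaussian mixture, Sylvester, Ville), but the specific mixture you chose provably cannot yield this lemma, and the GP-over-the-whole-space construction is the missing idea rather than a routine truncation argument.
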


\begin{lemma}[Theorem 3.1 \citep{arora2019exact}]
\label{lemma:ntk:concent}
Fix $\eps > 0$ and $\delta \in (0,1)$. Suppose a MLP $\fmlp(\cdot; \btheta)$ with ReLU activation has $L$ layers and width $m=\Omega(L^{10} \eps^{-4} \log(L/\delta))$. Then for any input $\bx$, $\bx^{\prime}$ such that $\norm{\bx}_2 \leq 1$, $\norm{\bx^{\prime}}_2 \leq 1$, with probability at least $1-\delta$,
\begin{equation}
\begin{aligned}
    |\kmlp(\bx, \bx^{\prime}) - \gmlp(\bx;\btheta_0)^\top \gmlp(\bx^{\prime};\btheta_0) / m| \leq \eps
\end{aligned}
\end{equation}
where $\kmlp$ is the neural tangent kernel associated with $\fmlp$ and $\gmlp(\,\cdot\,; \btheta_0) = \nabla \fmlp(\,\cdot\,; \btheta_0)$ .
\end{lemma}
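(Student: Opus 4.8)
The plan is to prove this as the standard non-asymptotic concentration result for the neural tangent kernel of an $L$-layer ReLU MLP: I would decompose the empirical tangent kernel layer by layer, show that the forward feature covariances and the backward sensitivity covariances each concentrate around their deterministic infinite-width limits, and then combine them through the recursive product structure of the NTK while tracking how the per-layer error accumulates over depth. Concretely, writing $\bh^{(0)}=\bx$, let $\bh^{(l)}(\bx)=\relu(f^{(l)}(\bx))$ be the post-activation at layer $l$ and $\bb^{(l)}(\bx):=\sqrt m\,\nabla_{f^{(l)}}\fmlp(\bx)$ the backpropagated sensitivity. Since the chain rule factorizes $\nabla_{\bW^{(l)}}\fmlp$ into an outer product of $\bb^{(l)}$ and $\bh^{(l-1)}$, the empirical kernel splits as a sum over layers,
\begin{equation}
\frac1m\ip{\gmlp(\bx;\btheta_0),\gmlp(\bx';\btheta_0)} = \sum_{l=1}^L \widehat\Sigma^{(l-1)}(\bx,\bx')\,\widehat B^{(l)}(\bx,\bx'),
\end{equation}
with forward covariance $\widehat\Sigma^{(l)}(\bx,\bx'):=\frac1m\ip{\bh^{(l)}(\bx),\bh^{(l)}(\bx')}$ and backward covariance $\widehat B^{(l)}(\bx,\bx'):=\frac1m\ip{\bb^{(l)}(\bx),\bb^{(l)}(\bx')}$. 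The target $\kmlp$ is the same sum with $\widehat\Sigma^{(l)},\widehat B^{(l)}$ replaced by their deterministic limits $\Sigma^{(l)},B^{(l)}$, where $\Sigma^{(l)}$ obeys the arc-cosine recursion for ReLU and $B^{(l)}=\prod_{l'=l}^{L}\dot\Sigma^{(l')}$.

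Next I would establish forward concentration. Conditioned on layer $l-1$, the coordinates of $f^{(l)}$ are i.i.d.\ centered Gaussians whose covariance is governed by $\widehat\Sigma^{(l-1)}$, so $\widehat\Sigma^{(l)}$ is an average of $m$ conditionally i.i.d.\ sub-exponential terms whose conditional mean is $\Sigma^{(l)}$ evaluated at $\widehat\Sigma^{(l-1)}$. A Bernstein bound then gives $|\widehat\Sigma^{(l)}-\Sigma^{(l)}(\widehat\Sigma^{(l-1)})|\lesssim\sqrt{m^{-1}\log(1/\delta)}$, and because the arc-cosine map is Lipschitz on the unit-norm domain (here $\norm{\bx}_2,\norm{\bx'}_2\le1$) the per-layer errors propagate additively; a union bound over $l\in[L]$ and over the three entries $(\bx,\bx),(\bx,\bx'),(\bx',\bx')$ gives $\max_l|\widehat\Sigma^{(l)}-\Sigma^{(l)}|\le\eps'$ once $m\gtrsim L^2\log(L/\delta)/(\eps')^2$. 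I would then treat the backward pass analogously through the recursion $\bb^{(l)}=\frac1{\sqrt m}\,\mathrm{diag}(\dot\relu(f^{(l)}))\,(\bW^{(l+1)})^\top\bb^{(l+1)}$, so that $\widehat B^{(l)}$ concentrates around $\dot\Sigma^{(l)}\,\widehat B^{(l+1)}$. Combining the two, I would bound each summand's deviation $\widehat\Sigma^{(l-1)}\widehat B^{(l)}-\Sigma^{(l-1)}B^{(l)}$ by the triangle inequality (all factors being $O(1)$ on the unit sphere) and sum the $L$ terms; choosing $\eps'\asymp\eps/L$ and tracing the requirement back through the recursion yields $m=\Omega(L^{10}\eps^{-4}\log(L/\delta))$, where the elevated powers of $L$ and $\eps^{-1}$ come from multiplicative error amplification across layers and from controlling products of two separately concentrated quantities.

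The hard part will be the backward pass, since $\bb^{(l)}$ reuses the weights $\bW^{(l+1)}$ that already appear in the forward pass, so the forward and backward covariances are not independent and the clean conditional-average structure is lost. I would resolve this with the standard conditioning (``gradient independence'') argument: conditioned on the forward activations, the entries of $\bW^{(l+1)}$ acting on $\bb^{(l+1)}$ may be replaced by fresh independent Gaussians without altering the relevant joint law, which restores the conditional i.i.d.\ structure needed for the Bernstein step. The second delicate point is controlling the accumulation of per-layer errors over depth without exponential blow-up in $L$, which is handled by the Lipschitz stability of the arc-cosine recursion on the unit sphere. Both ingredients are precisely those carried out in \citep{arora2019exact}, which this lemma restates.
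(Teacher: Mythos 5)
The paper offers no proof of this lemma at all: it is imported verbatim (with the width requirement adapted to the setting here) from Theorem~3.1 of Arora et al.\ (2019), so the only meaningful comparison is with the cited proof. Your skeleton --- the layerwise decomposition of the empirical kernel into forward covariances $\widehat\Sigma^{(l-1)}$ and backward covariances $\widehat B^{(l)}$, conditional concentration per layer, and a union bound over depth --- is indeed the architecture of that proof. However, the two points you yourself identify as delicate are exactly where your plan stops being a proof.

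First, the gradient-independence step. Your claim that, conditioned on the forward activations, the reused weights $\bW^{(l+1)}$ ``may be replaced by fresh independent Gaussians without altering the relevant joint law'' is false as a distributional statement; the joint law \emph{is} altered, and this replacement is precisely the known heuristic that Arora et al.\ make a point of \emph{not} assuming. The rigorous argument conditions on the forward pass, writes the conditional law of $\bW^{(l+1)}$ via Gaussian conditioning on the linear constraints imposed by the observed pre-activations as an independent fresh Gaussian plus a low-rank projection correction, and then bounds the contribution of the correction terms; that bounding is the technical core of Theorem~3.1 and is absent from your plan. Second, the Lipschitz claim. The forward arc-cosine map is indeed $1$-Lipschitz, but the derivative kernel $\dot\Sigma(\rho)=(\pi-\arccos\rho)/\pi$ entering every backward factor is only H\"older-$1/2$ near $\rho=\pm1$ (since $\frac{d}{d\rho}\arccos\rho=-1/\sqrt{1-\rho^2}$ blows up, and the diagonal entries sit exactly at $\rho=1$), so per-layer errors do \emph{not} propagate additively through a Lipschitz map as you assert. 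It is exactly this H\"older-$1/2$ loss --- an $m^{-1/2}$ forward-covariance error degrading to $m^{-1/4}$ after passing through $\dot\Sigma$ --- that forces $m=\Omega(\epsilon^{-4})$; your attribution of the $\epsilon^{-4}$ rate to ``controlling products of two separately concentrated quantities'' is incorrect, since a product of concentrated $O(1)$ factors would only cost $\epsilon^{-2}$. As written, your error bookkeeping would fail at near-collinear inputs and would not reproduce the stated width requirement.
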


\begin{lemma} [Lemma B.4/Lemma B.5/Lemma B.6 \citep{zhou2020neural} / Lemma C.4 \citep{zhang2020neural}]
\label{lemma: bound for gradient norm}
Suppose $\btheta$ is parameters for a MLP $\fmlp(\cdot; \btheta)$ with $L$ layers and width $m$ and this neural network $\fmlp(\cdot; \btheta)$ is trained via gradient descent with initialization $\btheta_0$, learning rate $\eta$ and $\ell_2$ regularization constant $\lambda$ in a mean squared loss. The input feature set is denoted as $\mX = \{\bx_i\}_{i \in [T]}$. Then there are positive constants $\{C_i\}_{i=1}^7$ such that for $\forall \delta \in (0,1)$, if $\tau$ satisfies
\begin{equation}
\begin{aligned}
\label{eq: technical condition for tau}
    \tau & \geq 
    C_1 m^{-3/2} L^{-3/2} 
    max((\log(T L^2/\delta))^{3/2}, (\log(m))^{-3/2})  \\
    \tau & \leq 
    \min(C_2 L^{-6}(\log(m))^{-3/2}, C_3 L^{-9/2}(\log(m))^{-3}, C_4 m^{3} \lambda^{9/2} \eta^{3} L^{-9} (\log(m))^{-3/2})
\end{aligned}
\end{equation}
then with probability at least $1-\delta$, for $\norm{\btheta - \btheta_0}_2 \leq \tau$ and $\norm{\btheta^{\prime} - \btheta_0}_2 \leq \tau$, for $\forall \bx \in \mX$, we have 
\begin{equation}
\begin{aligned}
    \norm{\gmlp(\bx;\btheta) - \gmlp(\bx;\btheta_0)}_2 \leq C_5 \sqrt{\log(m)} \tau^{1/3} L^3 \norm{\gmlp(\bx;\btheta_0)}_2
\end{aligned}
\end{equation}
and 
\begin{equation}
\begin{aligned}
    |\fmlp(\bx; \btheta) - \fmlp(\bx; \btheta^{\prime})- \langle \gmlp(\bx,\btheta^{\prime}), \btheta-\btheta^{\prime} \rangle | \leq
    C_6 \tau^{4/3} L^3 \sqrt{m \log(m)}
\end{aligned}
\end{equation}
and 
\begin{equation}
\begin{aligned}
    \norm{\gmlp(\bx;\btheta)}_2 \leq C_7 \sqrt{mL}.
\end{aligned}
\end{equation}
\end{lemma}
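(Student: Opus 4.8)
The plan is to treat this statement as a specialization of the standard over-parameterized MLP bounds established in \citep{zhou2020neural, zhang2020neural} (their Lemmas B.4--B.6 and C.4), rather than reproving them from scratch. The only architectural point I would need to reconcile is that the graph-convolution step feeds the MLP with unit-norm inputs, so that $\norm{\bx}_2 = 1$ for every $\bx \in \mX$; this is exactly the normalization those analyses assume, and once the stated two-sided ranges of $\tau$ in~\eqref{eq: technical condition for tau} are matched to the ranges in the references, the three displayed inequalities transfer directly. Below I sketch the underlying argument in case a self-contained version is wanted.

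First I would pin down the behavior at the Gaussian initialization $\btheta_0$. By standard concentration for alternating products of random Gaussian matrices and ReLU nonlinearities, with probability at least $1-\delta$ every layer's pre-activation $f^{(l)}(\bx)$ has norm bounded by an absolute constant, and each per-layer Jacobian (a weight matrix composed with a diagonal ReLU activation-pattern matrix) has spectral norm $O(1)$. Summing the per-layer contributions to the end-to-end gradient and accounting for the $1/\sqrt m$ scaling in~\eqref{eq: MLP} yields $\norm{\gmlp(\bx;\btheta_0)}_2 \le C_7 \sqrt{mL}$, which is the third inequality at $\btheta = \btheta_0$.

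The heart of the argument is the gradient-stability bound (the first inequality). For $\norm{\btheta - \btheta_0}_2 \le \tau$ I would track the drift of the forward pass layer by layer: ReLU is $1$-Lipschitz, so a weight perturbation of size $\tau$ moves each layer's output by $O(\tau)$, while crucially only an $O(\tau^{2/3})$ fraction of neurons flip their activation sign within the ball of radius $\tau$. Propagating this sign-flip count through the backward pass gives $\norm{\gmlp(\bx;\btheta) - \gmlp(\bx;\btheta_0)}_2 \lesssim \sqrt{\log m}\,\tau^{1/3} L^3 \norm{\gmlp(\bx;\btheta_0)}_2$, where the two-sided constraints on $\tau$ are precisely what validate the activation-pattern estimates. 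I expect this to be the main obstacle: the cube-root exponent emerges from the delicate combinatorial count of sign flips in the over-parameterized regime, which is where essentially all of the technical weight of the standard convergence analyses resides. The third inequality for general $\btheta$ then follows from the $\btheta_0$ case by the triangle inequality.

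Finally, the linearization bound (the second inequality) follows from gradient stability by the fundamental theorem of calculus: I would write the remainder as $\int_0^1 \ip{\gmlp(\bx;\btheta' + s(\btheta-\btheta')) - \gmlp(\bx;\btheta'),\, \btheta - \btheta'}\, ds$ and bound the integrand using the first inequality (gradient variation $\lesssim \tau^{1/3}\sqrt m$) together with $\norm{\btheta-\btheta'}_2 \le 2\tau$, producing the $\tau^{4/3} L^3 \sqrt{m\log m}$ rate. The high-probability statement requires only a single union bound over the finite input set $\mX$ of cardinality $T$, which is why $\log(TL^2/\delta)$ appears in the lower range for $\tau$.
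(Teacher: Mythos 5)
Your proposal matches the paper exactly: the paper offers no proof of this lemma at all, simply importing it by citation from Lemmas B.4--B.6 of \citet{zhou2020neural} and Lemma C.4 of \citet{zhang2020neural}, which is precisely your first move, and your observation that the unit-norm normalization $\normalize(\cdot)$ of the aggregated features is what makes the MLP inputs satisfy the $\norm{\bx}_2 \le 1$ hypothesis of those references is the only reconciliation the paper implicitly relies on as well. Your additional sketch of the underlying sign-flip/linearization argument is a faithful summary of the cited proofs and goes beyond what the paper provides, but it is not needed to match the paper's treatment.
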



\section{Supplement to Experiments}
\label{appendix: supplement to experiments}

\subsection{Data Generation}
\vspace{-2mm}
We use synthetic data environments for our experiments. The datasets are generated from two different random graph models and three different reward function generating models. 
The random graph models are Erd\"{o}s--R\'{e}nyi random graph model and random dot product graph model.
We use a linear model, Gaussian process with GNTK model, Gaussian process with representation kernel to generate our reward function.
In all data environments, the feature dimension is set as $d=10$. For any synthetic graph, all entries of the associated feature matrix $\{\bX_{ji}\}_{j \in [N], i \in [d]}$ are i.i.d from a standard Gaussian distribution. The noisy reward is assumed to have standard deviation $\sigma_{\eps} = 0.01$. 
All performance curves in our empirical studies show an average of over $10$ repetitions with a standard deviation of the corresponding bandit problem with horizon $T=1000$. Our experiment assumes the graph domain is fully observable, $\mG_t = \mG$ for all $t\in[T]$. 
We experiment four graph size $|\mG| \in \{10, 50, 100, 200\}$ in the random dot product graphs with $N=100$ and representation kernel.

\subsubsection{Random Graph}
\vspace{-2mm}
\textbf{Erd\"{o}s--R\'{e}nyi Random Graphs.}  Erd\"{o}s--R\'{e}nyi random graphs are generated by edge probability $p$ and number of nodes $N$. 
Set the graph has $N$ nodes and for any node pair $(i,j) \in [N]^2$, there is an edge linking $i$ and $j$ with probability  $p$. 
We investigate $p \in \{0.2, 0.4, 0.6, 0.8\}$ and $N \in \{10, 50, 100, 500\}$ in our experiment. 
Including $3$ types of reward function generating and $4$ sizes of graph space $\mG$, there are $192$ combinations of datasets of Erd\"{o}s--R\'{e}nyi random graph environments.

\textbf{Random Dot Product Graphs.} Random dot product graphs are generated by modeling the expected edge probabilities as the function of the inner product of features. In our experiment, we set the latent embeddings observed as features, i.e. $X_{i*}$ is the latent embedding of node $i$. Formally, the edge probability for node $i$ and $j$ is generated by $p_{ij} = \sigmoid (\bX_{i*}^{\top} \bX_{j*})$. We also investigate $N \in \{10, 50, 100, 500\}$. 
Including $3$ types of reward function generating and $4$ sizes of graph space $\mG$, there are $48$ combinations of datasets of random dot product graph environments. 

\subsubsection{Reward Function Generation}
\vspace{-2mm}
\textbf{Linear Model.}
We generate a true parameter $\btheta^{*} \in \BR^d$ whose elements are i.i.d standard Gaussian. Then the true reward mean is $\mu(G) = \ip{\btheta^{*}, \bar{\bh}^G}$
where $\bar{\bh}^G = \sum_{i=1}^N \bh^G_i/N$. 

\textbf{Gaussian Process with GNTK.}
We also use Gaussian process and Graph Neural Tangent Kernel(GNTK) as introduced from experiment in \citep{kassraie2022graph}. We approximately construct the GNTK matrix $\bK$ by the empirical GNTK matrix $\bKh \in \BR^{|\mG| \times |\mG|}$ whose entries are $\bKh_{ij} = \frac{1}{m} \langle \bcg(G^i;\btheta_0), \bcg(G^{j};\btheta_0) \rangle$ for any $G^i, G^j \in \mG$.
We use this empirical GNTK matrix $\bKh$ as the covariance matrix of prior, i.e, $\mN(0, \bK^{gntk})$  and use $\{(G, y_G)\}_{G \in \mG}$ where $\{y_G\}_{G \in \mG}$ are i.i.d from $\mN(0,1)$ as our training data. To train this Gaussian process model, we use negative log-likelihood loss with Adam optimizer with learning rate $0.01$ and $30$ epochs. The true reward means are sampled from the posterior in this Gaussian process.

\textbf{Gaussian Process with Representation Kernel.}
For the Gaussian process with representation kernel, we trained a GNN for a graph property prediction task and used the mean pooling over all nodes of the last layer representations as the graph representation. In our experiment, we utilize the average degree prediction as our task. That is, suppose outcome is $d_G = \frac{1}{N} \sum_{j=1}^N \degree(j)$ and train GNN in \eqref{eq: GNN} to predict this outcome.
Then denote the last layer representation as $\bar\bh^{G}_{\text{rep}} = \frac{1}{N} \sum_{j=1}^N f^{(L-1)}(\bh^G_j)$. Then we define the representation kernel as the inner product of the graph representations $k_{\text{rep}}(G, G') := \ip{\bar\bh^{G}_{\text{rep}}, \bar\bh^{G'}_{\text{rep}}}$.
The associated kernel matrix is denoted as $\bK^{rep} \in \BR^{|\mG| \times |\mG|}$ with entries $\{k^{rep}(G, G^{\prime}) \}_{G, G^{\prime} \in mG}$. In this Gaussian process, we sample the true reward means by $\{\mu(G)\}_{G \in \mG} \sim \mN(\bZeros, \bK^{rep})$.  To train this Gaussian process model, we use MSE loss with Adam optimizer with learning rate $0.01$ mini-batch size $2$ and $30$ epochs.

\vspace{-2mm}
\subsection{Algorithms Set Up}
\vspace{-2mm}
We provide the practical details and set up on our proposed algorithms and baseline algorithms.

\textbf{Algorithms. } 
We investigate $3$ GNN-based bandit algorithms (\texttt{GNN-TS}, \texttt{GNN-UCB} and \texttt{GNN-PE}) and $3$ corresponding NN-based bandit algorithms (\texttt{NN-TS}, \texttt{NN-UCB} and \texttt{NN-PE}).  All algorithms in our work use the loss function \eqref{eq: minimize l2 loss} which is different from previous work. All gradients used for in our experiments are $\bcg(G;\btheta_{t})$ not $\bcg(G;\btheta_{0})$ unless special stated. 
In addition, in order to show the benefit of considering the graph structure, we include \texttt{NN-UCB}, \texttt{NN-TS}, \texttt{NN-PE} as our baselines. For this NN-based algorithm, we ignore the adjacency matrix for a graph (assume $\bA = \bI$), and pass through the model in \eqref{eq: MLP} and \eqref{eq: GNN} by $\bh^G_i = \bX_{i*}$. 
For \texttt{GNN-TS}, we tuned the exploration scale with grid search on $\nu \in \{0.01, 0.1, 1.0, 10.0\}$ and \texttt{NN-TS} follows the same value.
For \texttt{GNN-UCB}, we tuned the hyperparameter with grid search on $\beta \in \{0.01, 0.1, 1.0, 10.0\}$ and \texttt{NN-UCB} follows the same value.
For \texttt{GNN-PE}, we tuned the hyperparameter with grid search on $\beta \in \{0.01, 0.1, 1.0, 10.0\}$ and \texttt{NN-PE} follows the same value.
All the hyperparameter tuning is performed in Erd\"{o}s--R\'{e}nyi random graphs with $p=0.4$, $N=50$, $|\mG| = 100$ and Gaussian process with GNTK for all the Erd\"{o}s--R\'{e}nyi random graphs settings and random dot product graphs with $50$ nodes and $|\mG| = 100$ and Gaussian process with GNTK for all the random dot product graphs settings.

\textbf{Neural Networks. } 
The MLPs in our experiments have $2$ layers ($L=2$) and width $m=512$. We use SGD optimizer with mini-batch size $5$ and $30$ epochs. Learning rates ($\eta$) we tuned from and the regularization hyperparameters $\lambda$ we tuned from $\{10^{-1}, 10^{-2}, 10^{-3}, 10^{-4}\}$. Initialization for the trainable GNN parameter $\btheta$ satisfies the condition $\fgnn(G;\btheta_0)=0$ for all $G \in \mG$, which is handle by the treatment in~\cite{kassraie2022neural}. Suppose the initialization is $\btheta_0$. 
The matrix inversion in the algorithms is approximated by diagonal inversion across all policy algorithms. 

\subsection{Experiments on Scalability ($|\mG|$)}
\vspace{-2mm}
We set the size of the graph domain to $|\mG|=100$ in Figure~\ref{figure: general performance} and we experiment across different sizes $|\mG| \in \{10, 50, 100, 200\}$ to check the scalability of the algorithms. 
Figure~\ref{figure: graph size plot} shows that given a fixed horizon length, larger $|\mG|$ leads to a harder bandit problem. It also shows that  \texttt{GNN-TS} can achieve top performance across all algorithms in all scales of the graph space. This empirical observation shows that \texttt{GNN-TS} is robust to the scalability of the action space, supporting our theoretical justification in Section~\ref{sec: regret bound}. 
\begin{figure}[t!]
    \centering
    \includegraphics[width=\textwidth]
    {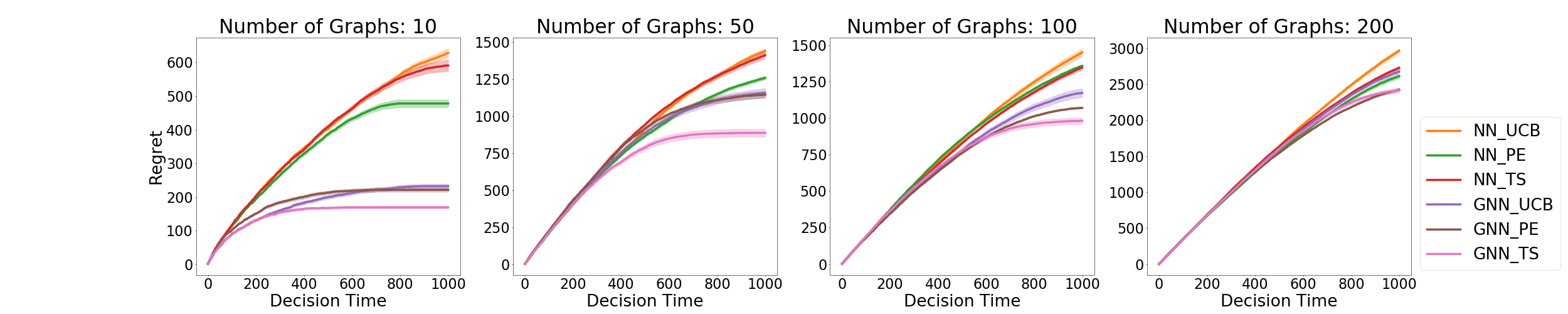}
    \caption{
    Competitive performance of  
    \texttt{GNN-TS} is consistent across different sizes of graph space. } 
    \label{figure: graph size plot}
\vspace{-4mm}
\end{figure}

\subsection{Effect of $m$ and Initial Gradients}

\begin{figure}[t!]
    \centering
    \includegraphics[width = 8cm, height = 3.8cm]
    {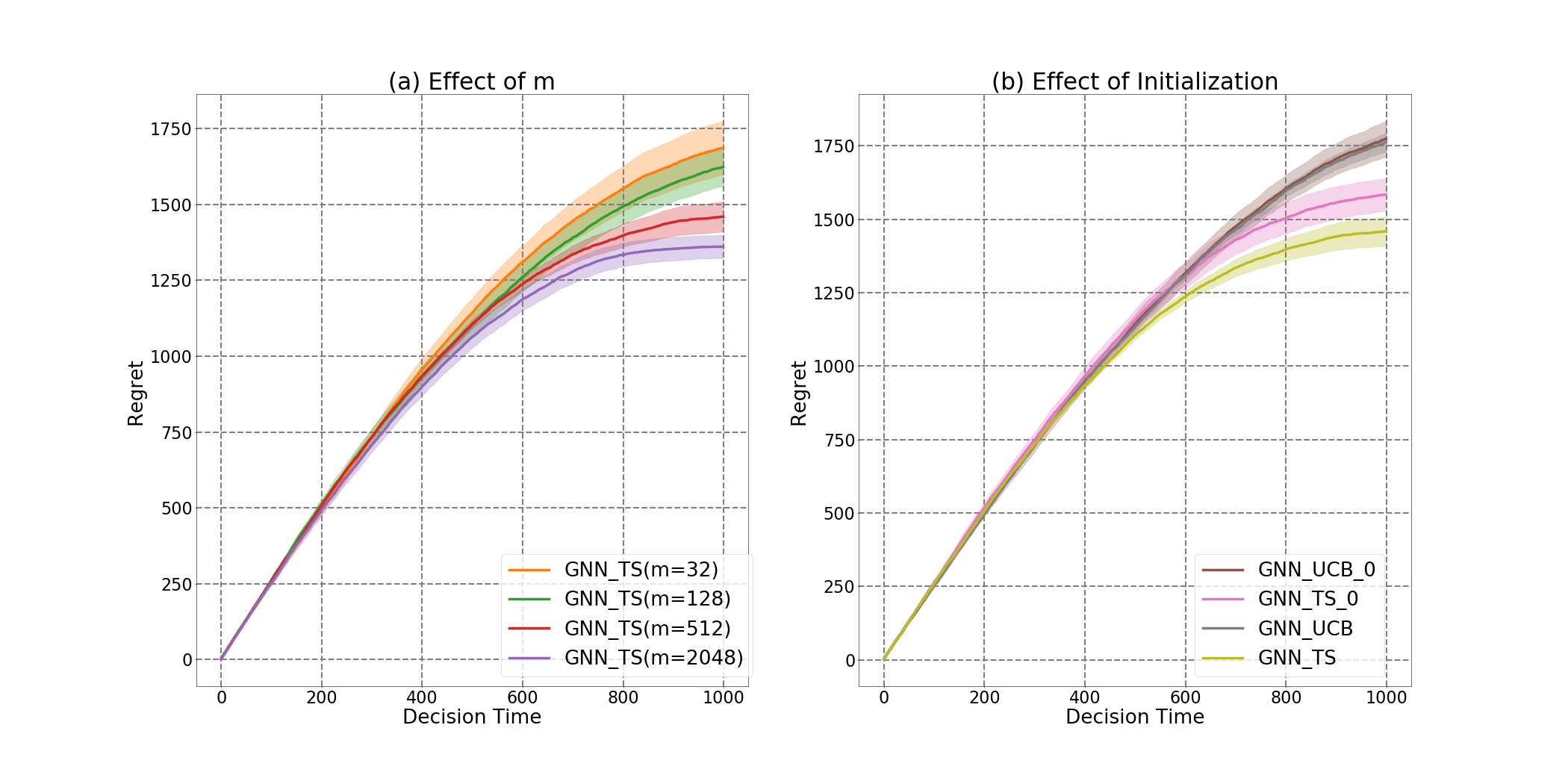}
    \caption{Increasing $m$ can improve the performance of \texttt{GNN-TS} and no improvement of using $\bcg(G_t;\btheta_0)$.} 
    \label{figure: effect plot}
\vspace{-4mm}
\end{figure}

Our regret analysis depends on the assumption that the width of the neural network $m$ must be large enough. We conduct an experiment to observe the effect from the width which is chosen from $\{32, 128, 512, 2048\}$. As some previous works on Neural bandit use the gradients at initialization ($\bcg(G_t;\btheta_0)$) for uncertainty calculation \citep{zhou2020neural, kassraie2022graph} while some works use $\bcg(G_t;\btheta_{t-1})$ which aligns with ours \citep{zhang2020neural}. Formally, instead of the update of uncertainty estimate in \eqref{eq: update U}, using initial gradient means performing the following
\begin{equation}
    \bar{\sigma}^2_{t}(G) = 
    \frac1{m}
    \norm{\bcg(G; \btheta_{0})}_{\bar{\bU}_{t}^{-1}}^2 ,\qquad
    \bar{\bU}_t = \bar{\bU}_{t-1} + \bcg (G_{t}; \btheta_{0}) \bcg(G_{t}; \btheta_{0})^{\top}/m.
\end{equation}
Part (a) of Figure \ref{figure: effect plot} reflects that the wider MLP has better performance which matches our expectation. Moreover, part (b) of Figure \ref{figure: effect plot} reflects that there are no benefits from setting gradients used in algorithms to be the initial gradients for all $t \in [T]$. One small final observation is that the effects of $m$ and initialization are not strong.

\subsection{Additional Figures and Tables}

\subsubsection{Results for Erd\"{o}s--R\'{e}nyi Random Graphs.}
For better visualization of the $192$ synthetic data environments using Erd\"{o}s--R\'{e}nyi random graphs, we summarised the result in Table~\ref{table: ER results}. The metrics are relative regret and top rate, which are defined based on regret as follow. The relative regret of one algorithm in one data environment is defined as 
\begin{equation}
    \text{Relative Regret:} \tilde{R}^{\text{alg, env}}  
    = 
    \frac{R_T^{\text{alg, env}} }{\max_{\text{alg}}  R_T^{\text{alg, env}} }
\end{equation}
where $R_T^{\text{alg, env}}$ is the cumulative regret of algorithm alg, and data environment env. 

We define the top rate for the policy in algorithm as the number of times such that the policy achieve the least two cumulative regret $R_T$. The denomnator is the number of total trails, which is the $1920$, the $10$ repetition and $192$ combinations of ER environments. The top rate of one algorithm is defined as
\begin{equation}
    \text{Top Rate:} \alpha_{\text{alg}}  
    = \frac{\text{\# times \text{alg} achieves "Top 2"}}{\text{\# trails}}.
\end{equation}

\begin{table}[ht]
\centering
\begin{tabular}{p{4cm} p{1.5cm}p{1.5cm}p{1.5cm}p{1.5cm}p{1.5cm}p{1.5cm}} 
\hline
& \texttt{NN-UCB} & \texttt{NN-PE} & \texttt{NN-TS} & \texttt{GNN-UCB} & \texttt{GNN-PE} & \texttt{GNN-TS} \\
\hline
Top Rate ($\alpha_{\text{alg}}$) &
$0.0 \%$ & $1.6\%$ & $0.0 \%$ & $9.4 \%$ & $90.6 \%$ & $\textbf{98.4 \%}$ \\
Relative Regret ($\tilde{R}^{\text{alg, env}}$) &
$0.994 (0.02)$ & $0.891 (0.06)$ & $0.943(0.05)$ & $0.762(0.15)$ & $0.690(0.14)$ & $\textbf{0.595(0.16)}$ \\
\hline
\end{tabular}
\caption{
Results on Erd\"{o}s--R\'{e}nyi random graphs. $192$ data environments with $10$ repetitions. }
\label{table: ER results}
\end{table}

\subsubsection{Results for Random Dot Product Graphs}
\vspace{-2mm}
We provide the experiment results for regret on all random dot product graph settings. In thee plots, different rows represents different sizes of the graph space ($|\mG|$) and columns represents the choices of the number of nodes in the graph ($N$).

\begin{figure}[ht]
    \centering
    \includegraphics[height = 6.1cm]
    {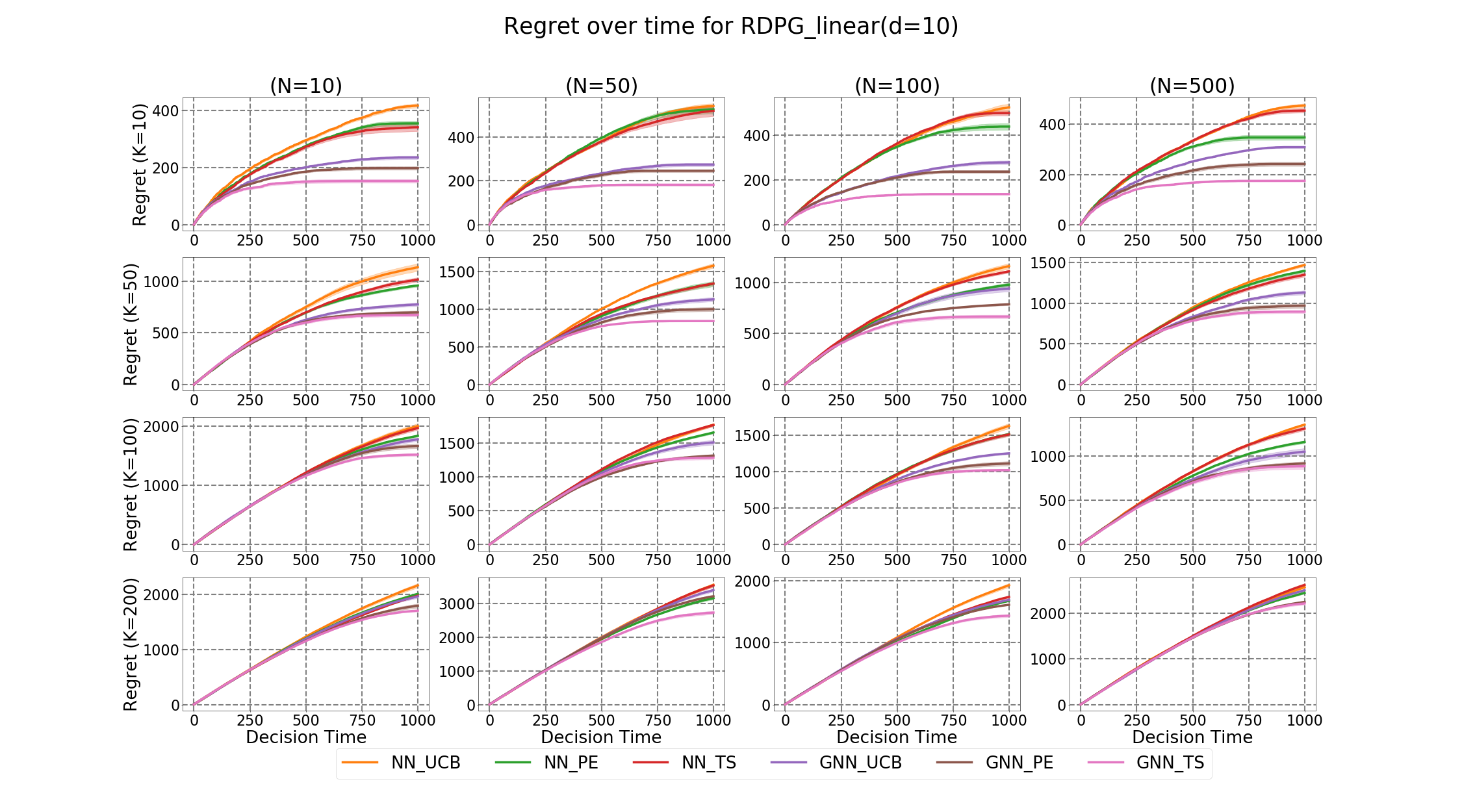}
    \caption{Random Dot Product Graphs with linear reward.} 
    \label{figure: RDPG_arm_gp_linear(d=10)}
\vspace{-6mm}
\end{figure}

\begin{figure}[ht]
    \centering
    \includegraphics[height = 6.1cm]
    {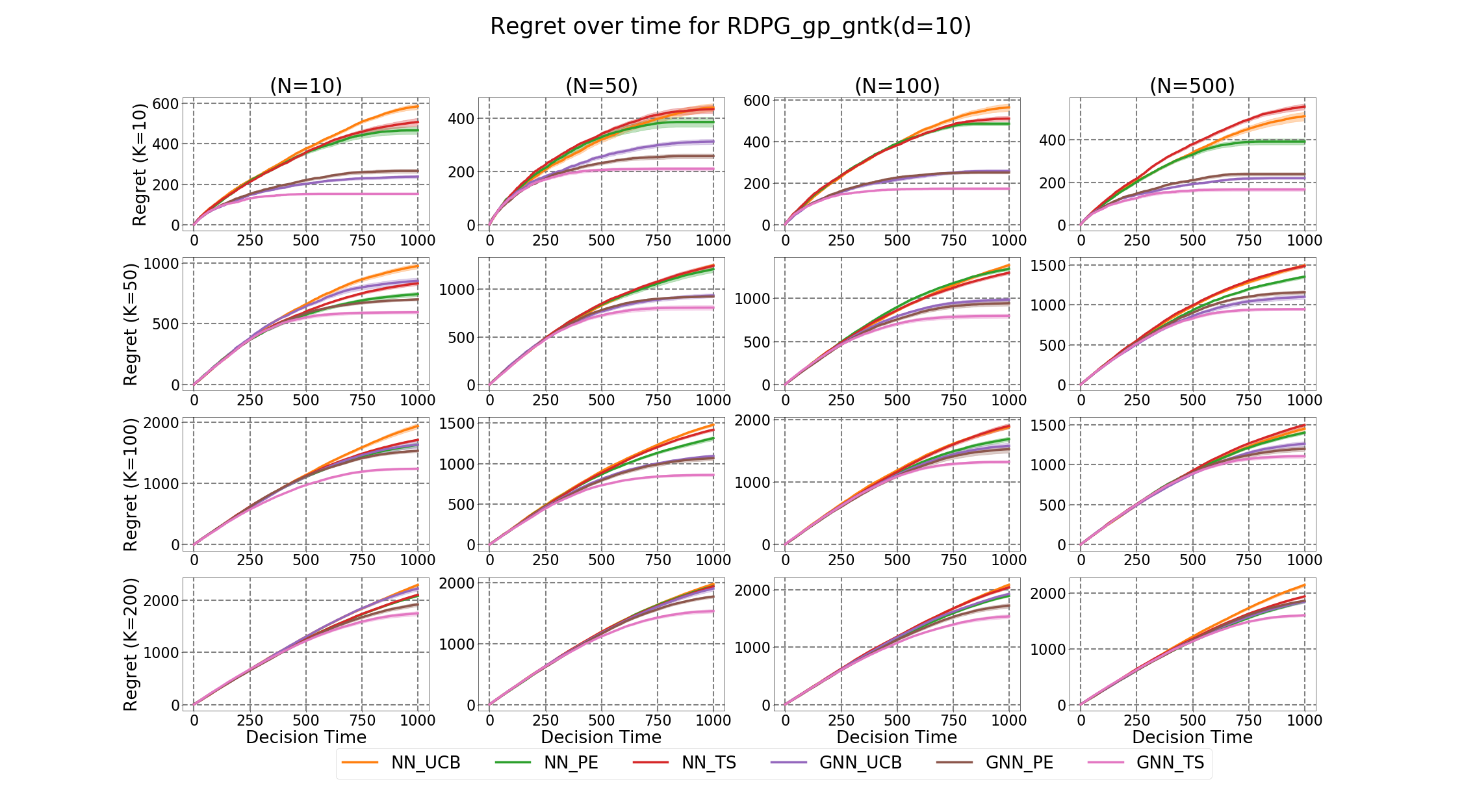}
    \caption{Random Dot Product Graphs with GP and GNTK for reward.} 
    \label{figure: RDPG_arm_gp_gntk(d=10)}
\vspace{-6mm}
\end{figure}

\begin{figure}[ht]
    \centering
    \includegraphics[height = 6.1cm]
    {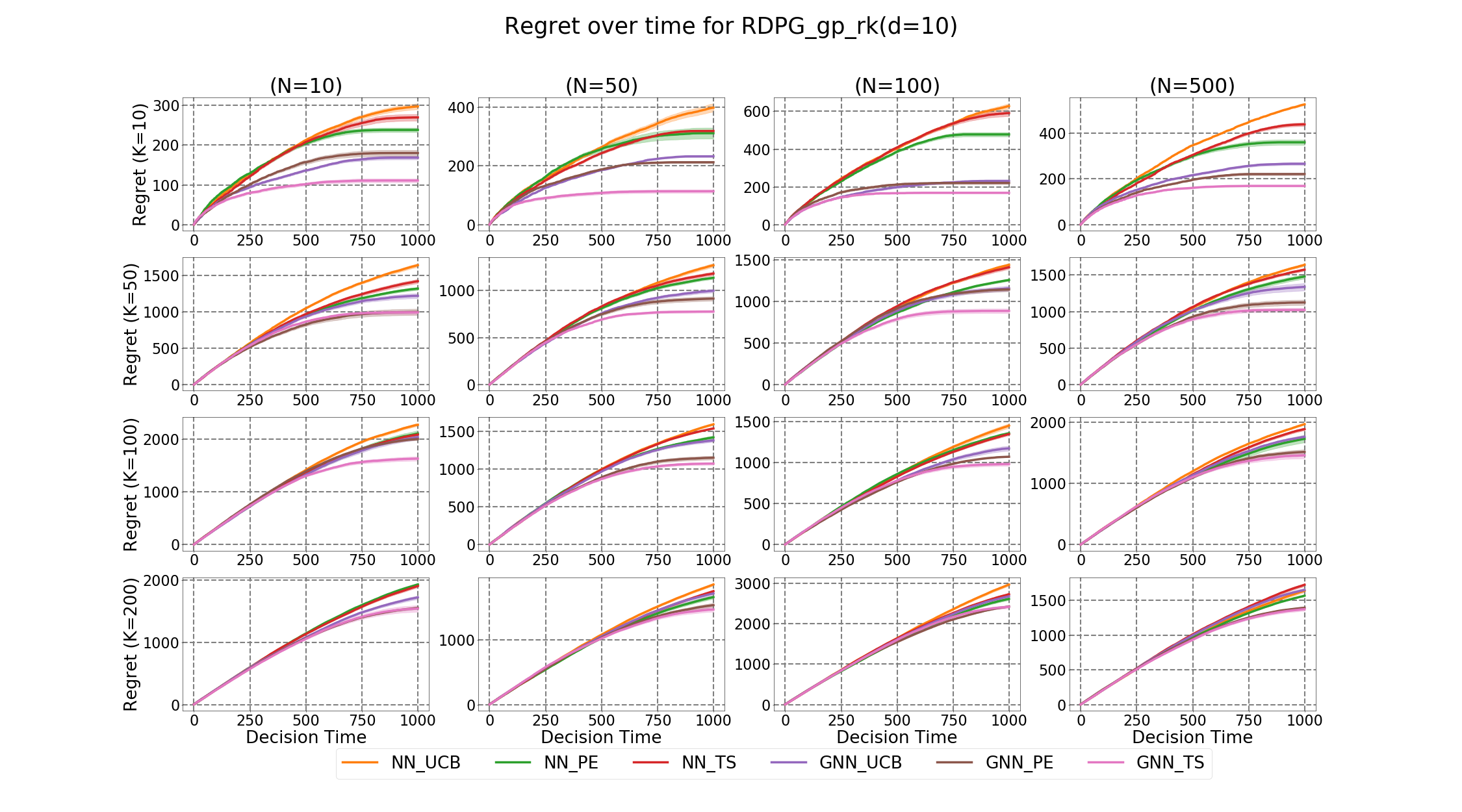}
    \caption{Random Dot Product Graphs with GP and representation kernel for reward.} 
    \label{figure: RDPG_arm_gp_rk(d=10)}
\vspace{-6mm}
\end{figure}

\end{document}